\def\supp{\mathop{\text{supp}}}
\def\sgn{\mathrm{sgn}}
\def \polylog {\mathrm{polylog}}
\def \sgn {\mathrm{sgn}}
\def \la {\langle}
\def \ra {\rangle}
\def \poly {\mathrm{poly}}
\begin{document}

% \title{\Huge Why Can SGD Generalize Better Than Adam? A Representation Learning Perspective}

% \title{\Huge On the Generalization of Adam in Training Neural Networks with Proper Regularization}
\title{\huge Understanding the Generalization of Adam in Learning Neural Networks with Proper Regularization}

\author
{
    Difan Zou\thanks{Department of Computer Science, University of California, Los Angeles, CA, USA; e-mail: {\tt knowzou@ucla.edu}}
    ~~~and~~~
	Yuan Cao\thanks{Department of Statistics and Actuarial Science and Department of Mathematics, The University of Hong Kong, Hong Kong; e-mail:  {\tt yuancao@hku.hk}} 
	 ~~~and~~~
	Yuanzhi Li\thanks{Machine Learning Department, Carnegie Mellon University, Pittsburgh, PA, USA; e-mail: {\tt yuanzhil@andrew.cmu.edu}} 
	~~~and~~~
	Quanquan Gu\thanks{Department of Computer Science, University of California, Los Angeles, CA, USA; e-mail: {\tt qgu@cs.ucla.edu}}
}
\date{}

\maketitle

\begin{abstract}
    Adaptive gradient methods such as Adam have gained increasing popularity in deep learning optimization. However, it has been observed that compared with (stochastic) gradient descent, Adam can converge to a different solution with a significantly worse test error in many deep learning applications such as image classification, even with a fine-tuned regularization. 
    In this paper, 
    we provide a theoretical explanation for this phenomenon: we show that 
 in the nonconvex setting of learning over-parameterized two-layer convolutional neural networks starting from the same random initialization, for a class of data distributions (inspired from image data), Adam and gradient descent (GD) can converge to different global solutions of the training objective with provably different generalization errors, even with weight decay regularization.  
    In contrast, we show that if the training objective is convex, and the weight decay regularization is employed, any optimization algorithms including Adam and GD will converge to the same solution if the training is successful. This suggests that the inferior generalization performance of Adam is fundamentally tied to the nonconvex landscape of deep learning optimization. 
\end{abstract}

\section{Introduction}
Adaptive gradient methods \citep{duchi2011adaptive,hinton2012neural,kingma2014adam,reddi2018convergence} such as Adam  are very popular optimizers for training deep neural networks. By adjusting the learning rate coordinate-wisely based on historical gradient information, they are known to be able to automatically choose appropriate learning rates to achieve fast convergence in training. Because of this advantage, Adam and its variants are widely used in deep learning. Despite their fast convergence, adaptive gradient methods have been observed to achieve worse generalization performance compared with gradient descent and stochastic gradient descent (SGD) \citep{wilson2017marginal,luo2019adaptive,chen2020closing,zhou2020towards} in many deep learning tasks such as image classification. Even with proper regularization, achieving good test error with adaptive gradient methods seems to be challenging. 

% The reason behind this difference between adaptive gradient methods and gradient descent has not been well-understood.
Several recent works provided theoretical explanations of this generalization gap between Adam and GD. 
\citet{wilson2017marginal,agarwal2019revisiting} considered a setting of linear regression, and showed that Adam can fail when learning a four-dimensional linear model on certain specifically designed data, while SGD can learn the linear model to achieve zero test error. This example in linear regression offers valuable insights into the difference between SGD and Adam. However, it is under a convex optimization setting, and as we will show in this paper (Theorem \ref{thm:convex}), the performance difference between Adam and GD can be easily avoided by adding an arbitrarily small regularization term, because the regularized training loss function is strongly convex and all algorithms will converge to the same unique global optimum. For this reason, we argue that the example in the convex setting cannot capture the fundamental differences between GD and Adam. More recently, \citet{zhou2020towards} studied the expected escaping time of Adam and SGD from a local basin, 
% the capability of stochastic gradient descent and adaptive gradient methods to escape form a local basin when optimizing smooth and locally strongly convex objective functions, 
and utilized this to explain the difference between SGD and Adam. However, their results do not take neural network architecture into consideration, and do not provide an analysis of test errors either. 

% \CC{In practice, we can observe an interesting difference between Adam and GD in the learned filters. As is shown in Figure~\ref{fig1}, }

% With proper regularization, there will be no difference between Adam and GD in the convex setting.

In this paper, we aim at answering the following question
% \vspace{0.2in}
% \centerline{
% \centerline{
% \fbox{\parbox{0.9\textwidth}{%
%         Why are the generalization performances of Adam and gradient descent different from each other in learning neural networks with proper regularization?
%     }}}
% % }
% \begin{center}
% \emph{Why is there a generalization gap between Adam and gradient descent in learning neural networks, even with proper regularization?}
% \end{center}
\begin{center}
\emph{\parbox{0.92\textwidth}{\centering Why is there a generalization gap between Adam and gradient descent in learning neural networks, even with proper regularization?}}
\end{center}
Specifically, we study Adam and GD for training neural networks with weight decay regularization on an image-like data model, and demonstrate the difference between Adam and GD from a feature learning perspective. We  consider a model where the data are generated as a combination of feature and noise patches, and analyze the convergence and generalization of Adam and GD for training a two-layer convolutional neural network (CNN). The contributions of this paper are summarized as follows. 

\begin{itemize}[leftmargin = *]
\item We establish global convergence guarantees for Adam and GD with proper weight decay regularization. We show that, starting at the same random initialization, Adam and GD can both train a two-layer convolutional neural network to achieve zero training error after polynomially many iterations, despite the nonconvex optimization landscape.

% , we show that the training error will become zero after polynomially many iterations. starting from the same random initiation. 

\item We further show that GD and Adam in fact converge to different global solutions with different generalization performance: GD can achieve nearly zero test error, while the generalization performance of the model found by Adam is no better than a random guess. In particular, we show that the reason for this gap is due to the different training behaviors of Adam and GD: Adam is more likely to fit noises in the data and output a model that is largely contributed by the noise patches of the training data; GD prefers to fit training data based on their feature patch and finds a solution that is mainly composed by the true features. We also illustrate such different training processes in Figure \ref{fig1}, where it can be seen that the model trained by Adam is clearly more ``noisy'' than that trained by SGD.

% \item nonconvex vs convex.
    \item 
    % %All our results are under the nonconvex optimization setting of training a neural work, and hold even when weight decay regularization is added to the objective function. Meanwhile, we
    % We also show that for convex settings %such as the training of a linear model 
    % with weight decay regularization, 
    % % objective functions with weight decay regularization, 
    We also show that for convex settings with weight decay regularization, both Adam and gradient descent converge to the exact same solution and therefore have no test error difference. This suggests that the difference between Adam and GD cannot be fully explained by linear models or neural networks trained in the ``almost convex'' neural tangent kernel (NTK) regime \citet{jacot2018neural,allen2018convergence,du2018gradientdeep,zou2019gradient,allen2018learning,arora2019fine,arora2019exact,cao2019generalizationsgd,ji2019polylogarithmic,chen2019much}. It also demonstrates that the inferior generalization performance of Adam is fundamentally tied to the nonconvex landscape of deep learning optimization, and cannot be solved by adding regularization.

\end{itemize}

\paragraph{Notation.} We use lower case letters, lower case bold face letters, and upper case bold face letters to denote scalars, vectors, and matrices respectively. For a scalar $x$, we use $[x]_+$ to denote $\max\{x,0\}$. For a vector $\vb = (v_1,\ldots,v_d)^\top$, we denote by $\| \vb \|_2 := \big(\sum_{j=1}^d v_j^2\big)^{1/2}$ its $\ell_2$-norm, and use $\supp(\vb) :=\{j: v_j \neq 0 \}$ to denote its support. 
For two sequences $\{a_k\}$ and $\{b_k\}$, we denote $a_k = O(b_k)$ if $|a_k|\leq C |b_k|$ for some absolute constant $C$, denote $a_k = \Omega(b_k)$ if $b_k = O(a_k)$, and denote $a_k = \Theta(b_k)$ if $a_k = O(b_k)$ and $a_k = \Omega(b_k)$.  We also denote $a_k = o(b_k)$ if $\lim | a_k / b_k | = 0$. Finally, we use $\tilde O(\cdot)$ and $\tilde \Omega(\cdot)$ to omit logarithmic terms in the notations.

\section{Related Work}

In this section, we discuss the works that are mostly related to our paper.

\noindent\textbf{Generalization gap between Adam and (stochastic) gradient descent.} The worse generalization of Adam compared with SGD has also been observed by some recent works and has motivated new variants of neural network training algorithms. \citet{keskar2017improving} proposed to switch between Adam and SGD to achieve better generalization. \citet{merity2018regularizing} proposed a variant of the averaged stochastic gradient method to achieve good generalization performance for LSTM language models. \citet{luo2019adaptive} proposed to use dynamic bounds on learning rates to achieve a smooth transition from adaptive methods to SGD to improve generalization. Our theoretical results for gradient descent and Adam can also provide theoretical insights into the effectiveness of these empirical studies.

\noindent\textbf{Optimization and generalization guarantees in deep learning.} Our work is also closely related to the recent line of work studying the optimization and generalization guarantees of neural networks. A series of results have shown the convergence \citep{jacot2018neural,li2018learning,du2018gradient,allen2018convergence,du2018gradientdeep,zou2019gradient,zou2019improved} and generalization \citep{allen2018rnn,allen2018learning,arora2019fine,arora2019exact,cao2019generalizationsgd,ji2019polylogarithmic,chen2019much} guarantees in the so-called ``neural tangent kernel'' (NTK) regime, where the neural network function is approximately linear in its parameters. \citet{allen2019can,bai2019beyond,allen2020backward,li2020learning} studied the learning of neural networks beyond the NTK regime. Our analysis in this paper is also beyond NTK, and gives a detailed comparison between GD and Adam.
% as NTK regime is not sufficient to demonstrate the difference between Adam and gradient descent. 

\noindent\textbf{Feature learning by neural networks.} This paper is also closely related to several recent works that studied how neural networks can learn features. \citet{allen2020feature} showed that adversarial training purifies the learned features by removing certain ``dense mixtures'' in the hidden layer weights of the network. \citet{allen2020towards} studied how ensemble and knowledge distillation work in deep learning when the data have ``multi-view'' features. This paper studies a different aspect of feature learning by Adam and GD, and shows that GD can learn the features while Adam may fail even with proper regularization.

\begin{figure}[!t]
\vskip -0.1in
     \centering
     \subfigure[Adam]{\includegraphics[width=0.4\textwidth]{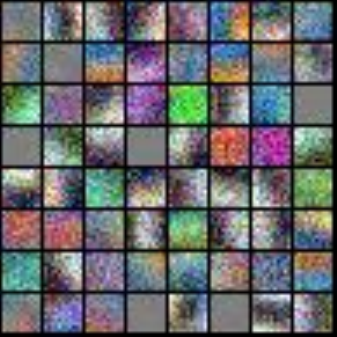}}\hspace{1.5cm}
      \subfigure[SGD]{\includegraphics[width=0.4\textwidth]{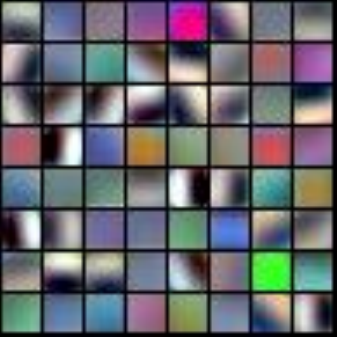}}
      \vskip -0.1in
    \caption{Visualization of the first layer of AlexNet trained by Adam  and SGD on the CIFAR-10 dataset. Both algorithms are run for $100$ epochs with weight decay regularization and standard data augmentations, but without batch normalization. Clearly, the model learned by Adam is more ``noisy'' than that learned by SGD, implying that Adam is more likely to overfit the noise in the training data.}
    \label{fig1}
\end{figure}

% \begin{figure}[!t]
% \vskip -0.1in
%      \centering
%      \subfigure[Adam]{\includegraphics[width=0.4\textwidth]{Neurips21/figure/feature_adam.pdf}}\hspace{1.5cm}
%       \subfigure[SGD]{\includegraphics[width=0.4\textwidth]{Neurips21/figure/feature_sgd.pdf}}
%       \vskip -0.1in
%     \caption{Visualization of the first layer of AlexNet trained by Adam  and SGD (both algorithms are run for $100$ epochs, w/ weight decay regularization and standard data augmentations, w/o batch normalization) on the CIFAR-10 dataset. Clearly, the model learned by Adam is more ``noisy'' than that learned by SGD, implying that Adam is more likely to overfit the noise in the training data.}
%     \label{fig1}
% \end{figure}

% NTK-based analysis cannot sufficiently explain the difference between Adam and GD due to the ``almost linear'' nature. 

\section{Problem Setup and Preliminaries}

We consider learning a CNN with Adam and GD based on $n$ independent training examples $\{(\xb_i,y_i)\}_{i=1}^n$ generated from a data model $\cD$. In the following. we first introduce our data model $\cD$, and then explain our neural network model and the details of the training algorithms.

\paragraph{Data model.} We consider a data model where the data inputs consist of feature and noise patches. Such a data model is motivated by image classification problems where the label of an image usually only depends on part of an image, and the other parts of the image showing random objects, or features that belong to other classes, can be considered as noises. When using CNN to fit the data, the convolution operation is applied to each patch of the data input separately. For simplicity, we only consider the case where the data consists of one feature patch and one noise patch. However, our result can be easily extended to cover the setting where there are multiple feature/noise patches. The detailed definition of our data model is given in Definition~\ref{def:data_distribution} as follows.

% \CC{move this paragraph to be ahead of Definition 3.1, input is divided into many patches, convolution operation is applied to each patch, we consider a simplified CNN model with two patches} This data model is motivated by image classification problems where the label of an image usually only depend on part of an image, and the other parts of the image showing random objects, or features that belong to other classes, can be considered as noises, which is also a simplified version of the data model in \citet{allen2020towards}. For simplicity, we only consider the case where the data consists of one feature patch and one noise patch. However our result can be easily extended to cover the setting where there are multiple feature/noise patches. 

\begin{definition}\label{def:data_distribution}
 Each data $(\xb,y)$ with $\xb\in\RR^{2d}$ and $y\in\{-1,1\}$ is generated as follows, 
\begin{align*}
\xb = [\xb^\top, \xb_2^\top]^\top,
\end{align*}
where one of $\xb_1$ and $\xb_2$ denotes the feature patch that consists of a feature vector $y\cdot\vb$, which is assumed to be $1$-sparse, and the other one denotes the noise patch and consists of a noise vector $\bxi$. Without loss of generality, we assume $\vb = [1,0,\dots,0]^\top$.
The noise vector $\bxi$ is generated according to the following process:
\begin{enumerate}[leftmargin = *]
    \item  Randomly select %$s=o\big(\frac{d}{n\log(n)}\big)$
    $s$ coordinates from $[d]\backslash\{1\}$ with equal probabilities, which is denoted as a vector $\sbb\in\{0,1\}^d$.
    \item Generate $\bxi$ from distribution $\cN(\boldsymbol{0},\sigma_p^2\Ib)$, and then mask off the first coordinate and other $d-s-1$ coordinates, i.e., $\bxi = \bxi\odot \sbb$.
    \item Add feature noise to $\bxi$, i.e., $\bxi = \bxi-\alpha y\vb$, where $0<\alpha < 1$ is the strength of the feature noise.
    \end{enumerate}
    In particular, throughout this paper we set $s = \Theta\big(\frac{d^{1/2}}{n^2}\big)$, $\sigma_p^2 = \Theta\big(\frac{1}{s\cdot\polylog(n)}\big)$ and $\alpha = \Theta\big(\sigma_p\cdot\polylog(n)\big)$.
\end{definition}
The most natural way to think of our data model is to treat $\xb$ as the output of some intermediate layer of a CNN. In literature, \citet{papyan2017convolutional} pointed out that the outputs of an intermediate layer of a CNN are usually sparse. \citet{yang2019scaling} also discussed the setting where the hidden nodes in such an intermediate layer are sampled independently. This motivates us to study sparse features and entry-wisely independent noises in our model. In this paper, we focus on the case where the feature vector $\vb$ is $1$-sparse and the noise vector is $s$-sparse for simplicity. However, these sparsity assumptions can be generalized to the settings where the feature and the noises are denser. 

% \CC{The most natural way to think of our data model is to treat $\xb$ as the output of some intermediate layer of a CNN. In literature, \citet{papyan2017convolutional} has pointed out that the outputs of an intermediate layer in a CNN are usually sparse. \citet{yang2019scaling} also discussed the setting where the hidden nodes in such an intermediate layer are sampled independently. This motivates us to study sparse features and entry-wisely independent noises in our model. For simplicity, we focus on the case where the feature vector $\vb$ is $1$-sparse and the noise vector is $s$-sparse. These sparsity assumptions can also be generalized to the settings where the feature and the noises are denser.} 

% \CC{it seems that \citet{choromanska2015loss} did not assume the hidden layer nodes to be independnet. }

% Moreover, our sparsity assumptions on the feature and noises are also for simplicity and clarity, as under this setting we can clearly characterize the interplay between the feature and noises, and explain the difference between Adam and GD in terms of feature learning and noise memorization. These sparsity assumptions can also be generalized to the settings where the feature and the noises are denser. 

Note that in Definition~\ref{def:data_distribution}, each data input consists of two patches: a feature patch $y\vb$ that is positively correlated with the label, and a noise patch $\bxi$ which contains the ``feature noise'' $-\alpha y\vb$ as well as random Gaussian noises. Importantly, the feature noise $-\alpha y\vb$ in the noise patch plays a pivotal role in both the training and test processes, which connects the noise overfitting in the training process and the inferior generalization ability in the test process.

Moreover, we would like to clarify that the data distribution considered in our paper is an extreme case where we assume there is only one feature vector and all data has a feature noise, since we believe this is the simplest model that captures the fundamental difference between Adam and SGD. With this data model, we aim to show why Adam and SGD perform differently. Our theoretical results and analysis techniques can also be extended to more practical settings where there are multiple feature vectors and multiple patches, each data can either contain a single feature or multiple features, together with pure random noise or feature noise.

% , as well as characterizing the interplay between them, by only looking at the learning process of the first coordinate. 

% We start from the most simple case where the data $(\xb,y)$ with $\xb\in\RR^d$ and $y\in\{-1,1\}$ is generated as follows,
% \begin{align*}
% \xb = [\zb ^\top, \bxi^\top]^\top,
% \end{align*}
% where $\oplus$ be the concatenation operation,  $\zb = y\vb$ is the patch containing the feature, and $\bxi\sim N(0,\sigma_p^2\Ib)$ is a Gaussian random variable.
\paragraph{Two-layer CNN model.}
We consider a two-layer CNN model $F$ using truncated polynomial activation function $\sigma(z) = (\max\{0,z\})^q$, where $q\ge 3$. Mathematically, given the data $(\xb, y)$, the $j$-th output of the neural network can be formulated as
\begin{align*}
F_j(\Wb,\xb)  &= \sum_{r=1}^m \big[\sigma(\la\wb_{j,r},\xb_1\ra) + \sigma(\la\wb_{j,r}, \xb_2\ra)\big]=\sum_{r=1}^m \big[\sigma(\la\wb_{j,r},y\cdot\vb\ra) + \sigma(\la\wb_{j,r}, \bxi\ra)\big],
\end{align*}
where $m$ is the width of the network, $\wb_{j,r}\in\RR^{d}$ denotes the weight at the $r$-th neuron, and $\Wb$ is the collection of model weights. 

In this paper we assume the width of the network is polylogarithmic in the training sample size, i.e., $m=\polylog(n)$. We assume $j\in\{-1,1\}$ in order to make the logit index be consistent with the data label. Moreover, we assume that the each weight is initialized from a random draw of Gaussian random variable $\sim N(0, \sigma_0^2)$ with $\sigma_0 = \Theta\big(d^{-1/4}\big)$. 

\paragraph{Training objective.} Given the training data $\{(\xb_i, y_i)\}_{i=1,\dots,n}$, we consider to learn the model parameter $\Wb$ by optimizing the empirical loss function with weight decay regularization
\begin{align*}
L(\Wb) = \frac{1}{n}\sum_{i=1}^n L_i(\Wb) + \frac{\lambda}{2} \|\Wb\|_F^2,
\end{align*}
where $L_i(\Wb) = -\log \frac{e^{F_{y_i}(\Wb,\xb_i)}}{\sum_{j\in\{-1,1\}} e^{F_j(\Wb,\xb_i)}}$ denotes the individual loss for the data $(\xb_i,y_i)$ and $\lambda\ge 0$ is the regularization parameter. In particular, the regularization parameter can be arbitrary as long as it satisfies $\lambda \in \big(0, \lambda_0\big)$ with $\lambda_0 = \Theta\big(\frac{1}{d^{(q-1)/4}n\cdot\polylog(n)}\big)$. We claim that the $\lambda_0$ is the largest feasible regularization parameter that the training process will not stuck at the origin point (recall that $L(\Wb)$ admits zero gradient at $\Wb = \boldsymbol{0}$.)

% Besides, given $n$ examples, it is easy to verify that with high probability, $|\vb^\top\bxi_i|=O(\sigma_p)$ and $|\xb_i^\top\bxi_j|=O(d^{1/2}\sigma_p^2)$. \qg{need to specify the norm of $\vb$ and distribution of $\xb_i$, to derive this big O result?}

% \paragraph{Linear model.} In order to demonstrate the fundamental differences between the 

\paragraph{Training algorithms.} We consider gradient descent and Adam for minimizing the regularized objective function $L(\Wb)$. Starting from initialization $\Wb^{(0)} = \{ \wb_{j,r}^{(0)}, j=\{\pm 1\},r\in[m] \}$, the gradient descent update rule is
\begin{align*}
    \wb_{j,r}^{(t+1)} = \wb_{j,r}^{(t)} - \eta \cdot\nabla_{\wb_{j,r}} L(\Wb^{(t)}),
\end{align*}
where $\eta$ is the learning rate. Meanwhile, Adam store historical gradient information in the momentum $\mb^{(t)}$ and a vector $\vb^{(t)}$ as follows 
\begin{align}
    &\mb_{j,r}^{(t+1)} = \beta_1 \mb_{j,r}^{(t)} + (1-\beta_1)\cdot \nabla_{\wb_{j,r}} L(\Wb^{(t)}),\label{eq:adam_def_eq1}\\
    &\vb_{j,r}^{(t+1)} = \beta_2 \vb_{j,r}^{(t)} + (1-\beta_2)\cdot [\nabla_{\wb_{j,r}} L(\Wb^{(t)})]^2,\label{eq:adam_def_eq2}
\end{align}
and entry-wisely adjusts the learning rate:
\begin{align}\label{eq:adam_def_eq3}
    \wb_{j,r}^{(t+1)} = \wb_{j,r}^{(t)} - \eta \cdot\mb_{j,r}^{(t)} / \sqrt{\vb_{j,r}^{(t)}},
\end{align}
where $\beta_1,\beta_2$ are the hyperparameters of Adam (a popular choice in practice is  $\beta_1=0.9$, and $\beta_2=0.99$), and in \eqref{eq:adam_def_eq2} and \eqref{eq:adam_def_eq3}, the square $(\cdot)^2$, square root $\sqrt{\cdot}$, and division $\cdot/\cdot$ all denote entry-wise calculations.

\section{Main Results}

In this section we will state the main theorems in this paper. We first provide the learning guarantees of Adam and Gradient descent for training a two-layer CNN model in the following theorem. Recall that in this setting the training objective is nonconvex.

\begin{theorem}[Nonconvex setting]\label{thm:nonconvex}
Consider two-layer CNN model, suppose the network width is $m=\polylog(n)$ and the data distribution follows Definition \ref{def:data_distribution}, then we have the following guarantees on the training and test errors for the models trained by Adam and Gradient descent:
\begin{enumerate}[leftmargin=*]
    \item  Suppose we run \textbf{Adam} for $T = \frac{\poly(n)}{\eta}$ iterations with $\eta = \frac{1}{\poly(n)}$, then with probability at least $1-n^{-1}$, we can find a NN model $\Wb_{\mathrm{Adam}}^{*}$ such that $\|\nabla L(\Wb_{\mathrm{Adam}}^{*})\|_1\le \frac{1}{T\eta}$. Moreover, the model $\Wb_{\mathrm{Adam}}^{*}$ also satisfies:
    \begin{itemize}
        \item Training error is zero: $\frac{1}{n}\sum_{i=1}^n \ind\big[F_{y_i}(\Wb_{\mathrm{Adam}}^{*},\xb_i)\le F_{-y_i}(\Wb_{\mathrm{Adam}}^{*},\xb_i)\big] = 0$.
        \item Test error is high: $\PP_{(\xb,y)\sim \cD}\big[F_{y}(\Wb_{\mathrm{Adam}}^{*},\xb)\le F_{-y}(\Wb_{\mathrm{Adam}}^{*},\xb)\big] \geq \frac{1}{2}-o(1)$.
    \end{itemize}
    
    \item Suppose we run \textbf{gradient descent} for $T = \frac{\poly(n)}{\eta}$ iterations with learning rate $\eta = \frac{1}{\poly(n)}$, then with probability at least $1-n^{-1}$, we can find a NN model $\Wb_{\mathrm{GD}}^{*}$ such that $\|\nabla L(\Wb_{\mathrm{GD}}^{*})\|_F^2\le \frac{1}{T\eta}$. Moreover, the model $\Wb_{\mathrm{GD}}^{*}$ also satisfies:
    \begin{itemize}
        \item Training error is zero: $\frac{1}{n}\sum_{i=1}^n \ind\big[F_{y_i}(\Wb_{\mathrm{GD}}^{*},\xb_i)\le F_{-y_i}(\Wb_{\mathrm{GD}}^{*},\xb_i)\big] = 0$.
        \item Test error is nearly zero: $\PP_{(\xb,y)\sim \cD}\big[F_{y}(\Wb_{\mathrm{GD}}^{*},\xb)\le F_{-y}(\Wb_{\mathrm{GD}}^{*},\xb)\big] = o(1)$.
    \end{itemize}
\end{enumerate}

\end{theorem}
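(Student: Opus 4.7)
The plan is a signal-versus-noise decomposition that tracks how each optimizer distributes its learning across the feature direction and individual training-noise directions. Let $\widetilde{\bxi}_i := \bxi_i - \la\bxi_i,\vb\ra\vb$ be the component of the $i$-th noise patch orthogonal to $\vb$. Because $s = \Theta(d^{1/2}/n^2)$, the supports of $\widetilde{\bxi}_1,\dots,\widetilde{\bxi}_n$ are almost surely disjoint, so I can uniquely decompose
\begin{align*}
\wb_{j,r}^{(t)} = \wb_{j,r}^{(0)} + \gamma_{j,r}^{(t)}\vb + \sum_{i=1}^n \rho_{j,r,i}^{(t)}\,\frac{\widetilde{\bxi}_i}{\|\widetilde{\bxi}_i\|_2^2}.
\end{align*}
Gaussian concentration gives $|\gamma_{j,r}^{(0)}| = \tilde O(\sigma_0)$ and $|\la \wb_{j,r}^{(0)}, \widetilde{\bxi}_i\ra| = \tilde O(\sigma_0\sigma_p\sqrt{s})$, so all coefficients start in the low-curvature regime of the polynomial activation.

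Next I would derive update rules for $\gamma_{j,r}$ and $\rho_{j,r,i}$ and expose the following asymmetry. The loss gradient along $\vb$ aggregates $\sum_{i=1}^n \ell_i'\, y_i\,\sigma'(y_i\gamma_{j,r})$, a \emph{coherent} signal of size $\Theta(n)$, while the gradient along $\widetilde{\bxi}_i/\|\widetilde{\bxi}_i\|_2^2$ is driven by only the single sample $i$ and has size $\Theta(1)$. Under GD this $n$-fold coherence propagates directly to the update, so feature learning dominates: $\gamma_{y,r}^{(t)}$ crosses a $\polylog(n)$ threshold and thereafter grows super-exponentially via the $\sigma'(\gamma)=q\gamma^{q-1}$ amplifier, while every $\rho$ stays close to its initialization. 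Under Adam, the coordinate-wise normalization $\mb_{j,r}/\sqrt{\vb_{j,r}}$ cancels this $n$-factor: both the momentum and the RMS of the coordinate aligned with $\vb$ are $\Theta(n)$, so their ratio is $\Theta(1)$; meanwhile each noise coordinate receives an update of the same order. Because each $\widetilde{\bxi}_i$ occupies $s\gg1$ coordinates while $\vb$ occupies only one, the $\rho$ coefficients cross the activation threshold \emph{before} $\gamma$ does, so Adam drives the loss to zero by \emph{memorizing} the training noises.

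For the convergence guarantees I would use a standard descent lemma on the weight-decay-regularized objective to obtain $\|\nabla L(\Wb^{(T)})\|_F^2 \le 1/(T\eta)$ after $T=\poly(n)/\eta$ GD steps, and adapt the Adam analysis of \citet{reddi2018convergence} to obtain the $\ell_1$-norm version stated in the theorem; weight decay keeps the iterates in a region where the polynomial activation is locally smooth, which is required for both arguments. Combined with the coefficient dynamics from the previous step, the gradient can be small only when either $\gamma$ (for GD) or $\rho$ (for Adam) has grown large enough to make the margin $F_{y_i}-F_{-y_i}$ positive on every training point, yielding zero training error in both cases.

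Finally, for test error, the noise patch of a fresh sample is independent of the training noises, so $\sum_i \rho_{j,r,i}^{*}\la\widetilde{\bxi},\widetilde{\bxi}_i\ra/\|\widetilde{\bxi}_i\|_2^2$ concentrates to $o(1)$ and the memorized noise contributes no signal at test time. The only surviving contributions to $F_y-F_{-y}$ are therefore (a) the feature-patch activation $\sigma(y\gamma_{y,r}^{*})-\sigma(y\gamma_{-y,r}^{*})$ and (b) the feature-noise activation generated by $-\alpha y\vb$ inside the noise patch. For GD, $|\gamma^{*}|=\tilde\Omega(1)$ makes (a) dominate (b) by a factor $\alpha^{-q}$ (since $\alpha<1$ and $\sigma$ is a $q$-th power), so test error is $o(1)$. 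For Adam, $|\gamma^{*}|$ stays tiny because the loss was driven down by noise memorization rather than feature learning, so (a) and (b) have the same order and the resulting near-symmetric fluctuations give test error $\ge 1/2-o(1)$. The main obstacle will be the algorithm-specific growth analysis under Adam: the normalization $\mb/\sqrt{\vb}$ is nonlinear and couples the coordinates across time through the momentum and second-moment buffers, so rigorously establishing that $\rho$ overtakes $\gamma$ requires a phase-based induction with simultaneous bounds on $\|\wb^{(t)}\|_\infty$, $\|\mb^{(t)}\|_\infty$ and $\|\vb^{(t)}\|_\infty$ that remain robust through the initial regime where gradients are small and the normalizer can behave erratically.
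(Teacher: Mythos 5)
Your high-level blueprint (stage-based induction, GD solves a tensor-power recursion that favors the feature, Adam behaves like signGD so the $s$-fold coordinate advantage lets noise win, fresh test noise is independent of training noise) matches the paper's proof architecture, and the coefficient decomposition you set up is a valid reparametrization of the paper's direct coordinate tracking since the $\cB_i$ are disjoint. However, there is a genuine gap in your Adam test-error argument, and it concerns the role of the feature noise $-\alpha y\vb$.

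You claim that under Adam the feature coefficient $\gamma^*$ "stays tiny" and that the two contributions (feature-patch activation and feature-noise activation) therefore "have the same order" producing "near-symmetric fluctuations," from which you conclude test error $\ge 1/2-o(1)$. This step is not just imprecise, it misses the actual mechanism. The paper's Lemma \ref{lemma:signGD2} (flipping the feature learning) and Lemma \ref{lemma:property_converging} show that the first coordinate of $\wb_{j,r^*}^*$ does not merely stay small: it becomes \emph{anti-aligned} with the feature, with $\wb_{j,r^*}^*[1]\cdot j \le -\tilde\Theta(n\alpha/(s\sigma_p^2))$. The reason is that once noise memorization has driven $\la\wb_{y_i,r},\bxi_i\ra$ up, the feature-noise term $\alpha\sigma'(\la\wb_{j,r},\bxi_i\ra)$ in the first-coordinate gradient \eqref{eq:update_learning_v_signGD} overwhelms $\sigma'(\la\wb_{j,r},\vb\ra)$, so the signGD-like update drives $\wb_{j,r}[1]$ in the \emph{wrong} direction. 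At a test point this gives $F_{-y}$ a consistent positive contribution $\tilde\Theta\big((n\alpha/(s\sigma_p^2))^q\big)$ from the feature patch while $F_y$ receives only the factor-$\alpha^q$-smaller contribution from the feature-noise term, and the zero-mean random noise can only flip the comparison, not reverse the bias. Your "near-symmetric fluctuations" picture would be the right intuition if $\gamma^*\approx 0$, but then establishing the $1/2-o(1)$ bound would require a separate anti-concentration argument you do not supply, and you would also be leaving unused the one structural feature of the data model (the feature noise $-\alpha y\vb$) that the construction exists to exploit. You need the flip, and your Stage-I analysis for Adam never derives it; you only argue that $\rho$ grows faster than $\gamma$, not that the sign of $\gamma$ reverses.

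Two smaller points. First, your convergence plan for Adam ("adapt \citet{reddi2018convergence}") is more of a citation than an argument; the paper instead proves a one-step descent bound $L(\Wb^{(t+1)})-L(\Wb^{(t)})\le -\eta\|\nabla L(\Wb^{(t)})\|_1 + \tilde\Theta(\eta^2 d)$ directly from the closeness-to-signGD lemma and the local smoothness supplied by the a priori $\tilde\Theta(1)$ bounds on $\la\wb_{j,r},\vb\ra$ and $\la\wb_{j,r},\bxi_i\ra$; the latter bounds are not free and require Stage-II control, so a black-box invocation of an Adam convergence theorem on a nonconvex, non-globally-smooth objective would not close the loop. Second, your claim that small gradient norm implies zero training error is only correct once one separately shows (as the paper does via the Stage-II "maintain the pattern" lemmas) that the only approximate stationary points the trajectory can reach have the right sign pattern; for a nonconvex regularized objective this does not follow from small gradient alone.
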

From the optimization perspective, Theorem \ref{thm:nonconvex} shows that both Adam and GD can be guaranteed to find a point with a very small gradient, which can also achieve zero classification error on the training data. Moreover, it can be seen that given the same iteration number $T$ and learning rate $\eta$, Adam can be guaranteed to find a point with up to $1/(T\eta)$ gradient norm in $\ell_1$ metric, while gradient descent can only be guaranteed to find a point with up to $1/\sqrt{T\eta}$ gradient norm in $\ell_2$ metric. This suggests that Adam could enjoy a faster convergence rate compared to SGD in the training process, which is consistent with the practice findings. We would also like to point out that there is no contradiction between our result and the recent work \citep{reddi2019convergence} showing that Adam can fail to converge, as the counterexample in \citet{reddi2019convergence} is for the online version of Adam, while we study the full batch Adam.

% However, 
In terms of the test performance, their generalization abilities are largely different, even with weight decay regularization. In particular, the output of gradient descent can generalize well and achieve nearly zero test error, while the output of Adam gives nearly $1/2$ test error. In fact, this gap is due to two major aspects of the training process: (1) At the early stage of training where weight decay exhibits negligible effect, Adam and GD behave very differently. In particular, Adam prefers the data patch of lower sparsity and thus tends to fit the noise vectors $\bxi$, gradient descent prefers the data patch of larger $\ell_2$ norm and thus will learn the feature patch; (2) At the late stage of training where the weight decay regularization cannot be ignored, both Adam and gradient descent will be enforced to converge to a \textit{local minimum} of the regularized objective, which maintains the pattern learned in the early stage. Consequently, the model learned by Adam will be biased towards the noise patch to fit the feature noise vector $-\alpha y\vb$, which is opposite in direction to the true feature vector and therefore leads to a test error no better than a random guess.  More details about the training behaviors of Adam and gradient descent are given in Section~\ref{sec:proof_main}.

Theorem \ref{thm:nonconvex} shows that when optimizing a nonconvex training objective, Adam and gradient descent will converge to different global solutions with different generalization errors, even with weight decay regularization. 
In comparison, the following theorem gives the learning guarantees of Adam and gradient descent when optimizing convex and smooth training objectives (e.g., linear model $F(\wb,\xb) = \wb^\top\xb$ with logistic loss).
\begin{theorem}[Convex setting]\label{thm:convex}
For any convex and smooth training objective with positive regularization parameter $\lambda$, suppose we run \textbf{Adam} and \textbf{gradient descent} for $T = \frac{\poly(n)}{\eta}$ iterations, then with probability at least $1-n^{-1}$, the obtained parameters $\Wb_{\mathrm{Adam}}^{*}$ and $\Wb_{\mathrm{GD}}^{*}$ satisfy that $\|\nabla L(\Wb_{\mathrm{Adam}}^{*})\|_1\le \frac{1}{T\eta}$ and $\|\nabla L(\Wb_{\mathrm{Adam}}^{*})\|_2^2\le \frac{1}{T\eta}$ respectively. Moreover, let $F(\Wb, \xb)\in\RR$ be the output of the convex model with parameter $\Wb$ and input $\xb$, it holds that:
% , we have the following guarantees on the training and test errors for the model trained by  Adam and Gradient descent:
% \begin{enumerate}[leftmargin=*]
%     \item  
    \begin{itemize}
        \item Training errors are both zero: 
        \begin{align*}
        \frac{1}{n}\sum_{i=1}^n \ind\big[\sgn\big(F(\Wb_{\mathrm{Adam}}^{*},\xb_i)\big)\neq y_i\big] =  \frac{1}{n}\sum_{i=1}^n \ind\big[\sgn\big(F(\Wb_{\mathrm{GD}}^{*},\xb_i)\big)\neq y_i\big]=0.
        \end{align*}

        \item Test errors are nearly the same: 
        \begin{align*}
        \PP_{(\xb,y)\sim \cD}\big[\sgn\big(F(\Wb_{\mathrm{Adam}}^{*},\xb_i)\big)\neq y\big] = \PP_{(\xb,y)\sim \cD}\big[\sgn\big(F(\Wb_{\mathrm{GD}}^{*},\xb)\big)\neq y\big]\pm o(1).
        \end{align*}
    \end{itemize}
% \end{enumerate}

\end{theorem}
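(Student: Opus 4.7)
My plan is to exploit the fact that when the unregularized training loss is convex and $\lambda>0$, the regularized objective $L(\Wb) = L_0(\Wb) + \frac{\lambda}{2}\|\Wb\|_F^2$ is $\lambda$-strongly convex, and therefore admits a unique global minimizer $\Wb^*$. I will show that both $\Wb^*_{\mathrm{Adam}}$ and $\Wb^*_{\mathrm{GD}}$ are trapped in a vanishing neighborhood of this same $\Wb^*$, from which the equality of training errors and the near-equality of test errors will follow immediately.

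First I would verify the two stated gradient-norm guarantees. For full-batch GD on a smooth $\lambda$-strongly convex function, the standard descent lemma combined with the inequality $\|\nabla L(\Wb)\|_2^2 \ge 2\lambda (L(\Wb) - L(\Wb^*))$ yields linear convergence of the loss, so taking the best iterate among $t\le T$ gives $\|\nabla L(\Wb^*_{\mathrm{GD}})\|_2^2 \le 1/(T\eta)$ once $T\eta = \poly(n)$ is sufficiently large. For full-batch Adam, the regularization keeps the iterates bounded and the smoothness of $L$ therefore bounds $\|\nabla L(\Wb^{(t)})\|_2$, which in turn bounds $\vb_{j,r}^{(t)}$ above; after a brief warm-up, $\vb_{j,r}^{(t)}$ is also bounded below along coordinates carrying non-negligible gradient mass, so Adam behaves like a preconditioned GD with bounded effective condition number, and a parallel best-iterate argument delivers $\|\nabla L(\Wb^*_{\mathrm{Adam}})\|_1 \le 1/(T\eta)$.

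Next I would use the inequality $\|\Wb - \Wb^*\|_2 \le \|\nabla L(\Wb)\|_2/\lambda$, a direct consequence of $\lambda$-strong convexity, together with $\|\nabla L(\Wb^*_{\mathrm{Adam}})\|_2 \le \|\nabla L(\Wb^*_{\mathrm{Adam}})\|_1 \le 1/(T\eta)$ and $\|\nabla L(\Wb^*_{\mathrm{GD}})\|_2 \le 1/\sqrt{T\eta}$, to conclude that $\|\Wb^*_{\mathrm{Adam}} - \Wb^*\|_2$ and $\|\Wb^*_{\mathrm{GD}} - \Wb^*\|_2$ are both bounded by $1/(\lambda\sqrt{T\eta}) = o(1)$. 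Smoothness of $L$ implies Lipschitz continuity of $F(\,\cdot\,,\xb)$ on the bounded region containing the iterates, so $|F(\Wb^*_{\mathrm{Adam}},\xb) - F(\Wb^*,\xb)| = o(1)$ pointwise, and likewise for GD. Since each of the finitely many training points $\xb_i$ enjoys a strictly positive margin under $F(\Wb^*,\cdot)$ whenever training is successful, the prediction signs agree at every $\xb_i$ and both training errors equal $0$. For the test error, $\sgn F(\Wb^*_{\mathrm{Adam}},\xb)$ can disagree with $\sgn F(\Wb^*,\xb)$ only if $|F(\Wb^*,\xb)|$ falls inside the $o(1)$ Lipschitz gap, and this event has $\cD$-probability $o(1)$ by a standard boundary-mass (anti-concentration) argument, which yields the claimed $\pm o(1)$ agreement of the two test errors.

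The main obstacle will be rigorously controlling Adam's moving averages in the convex but non-quadratic setting: one has to ensure that $\vb_{j,r}^{(t)}$ cannot collapse to zero along coordinates carrying non-negligible gradient mass, so that the coordinate-wise effective step sizes $\eta/\sqrt{\vb_{j,r}^{(t)}}$ remain well-conditioned throughout training. The cleanest workaround is to designate $\Wb^*_{\mathrm{Adam}}$ as the best iterate seen over the $T$ steps rather than the final iterate, which turns the required statement into a pigeonhole-style averaging bound and sidesteps the need for a full asymptotic convergence analysis of Adam.
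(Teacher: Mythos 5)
Your argument follows essentially the same route as the paper's own (very brief) proof: both derive the gradient-norm bounds by adapting the descent-type arguments used for the nonconvex case, then invoke the strong convexity induced by the weight-decay term to pin both iterates to the unique regularized minimizer, and finally conclude near-identical training/test behavior. Your write-up is in fact more careful than the paper's one-paragraph sketch — the paper loosely asserts the two outputs are "exactly the same," whereas you correctly treat them as $o(1)$-close to $\Wb^*$ and pass to predictions via Lipschitz continuity of $F$ plus a margin/anti-concentration argument — but the underlying mechanism (strong convexity $\Rightarrow$ unique stationary point $\Rightarrow$ both algorithms land there) is identical.
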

% \begin{theorem}[Convex setting]\label{thm:convex}
% Consider linear model, then for some regularization parameter $\lambda =\poly(n^{-1})$, we have the following guarantees on the training and test errors for the model trained by  Adam and Gradient descent:
% \begin{enumerate}[leftmargin=*]
%     \item  Suppose we run \textbf{Adam} and \textbf{gradient descent} for $T = \frac{\poly(n)}{\eta^2}$ iterations, then with probability at least $1-n^{-1}$, we can find a linear models $\wb_{\mathrm{Adam}}^{*}$ and $\wb_{\mathrm{GD}}^{*}$  such that $\|\nabla L(\wb_{\mathrm{Adam}}^{*})\|_1\le O(\eta)$ and $\|\nabla L(\wb_{\mathrm{Adam}}^{*})\|_2\le O(\eta)$. Moreover, it holds that:
%     \begin{itemize}
%         \item Training errors are both zero: 
%         \begin{align*}
%         \frac{1}{n}\sum_{i=1}^n \ind\big[F(\wb_{\mathrm{Adam}}^{*},\xb_i)\neq y_i\big] =  \frac{1}{n}\sum_{i=1}^n \ind\big[F(\wb_{\mathrm{GD}}^{*},\xb_i)\neq y_i\big]=0.
%         \end{align*}
%         \item Test errors are nearly the same: 
%         \begin{align*}
%         \PP_{(\xb,y)\sim \cD}\ind\big[F(\wb_{\mathrm{Adam}}^{*},\xb_i)\neq y\big] = \PP_{(\xb,y)\sim \cD}\ind\big[F(\wb_{\mathrm{GD}}^{*},\xb)\neq y\big]\pm o(1).
%         \end{align*}
%     \end{itemize}
% \end{enumerate}

% \end{theorem}

Theorem \ref{thm:convex} shows that when optimizing a convex and smooth training objective (e.g., a linear model with logistic loss) with weight decay regularization, both Adam and gradient can converge to almost the same solution and enjoy very similar generalization performance. Combining this result and Theorem \ref{thm:nonconvex}, it is clear that the inferior generalization performance is closely tied to the nonconvex landscape of deep learning, and cannot be understood by standard weight decay regularization.

\section{Proof Outline of the Main Results}\label{sec:proof_main}

In this section we provide the proof sketch of Theorem \ref{thm:nonconvex} and explain the different generalization abilities of the models found by gradient descent and Adam.

Before moving to the proof of main results, we first give the following lemma which shows that for data generated from the data distribution $\cD$ in Definition \ref{def:data_distribution}, with high probability all noise vectors $\{\bxi_i\}_{i=1,\dots,n}$ have nearly disjoint supports.

\begin{lemma}\label{lemma:nonoverlap_probability_main}
Let $\{(\xb_i,y_i)\}_{i=1,\dots,n}$ be the training dataset sampled according to Definition \ref{def:data_distribution}. %\CC{Suppose that $s \leq \sqrt{d / (2n^4)}$.} 
Moreover, recall that $\xb_i = [y_i\vb^\top, \bxi_i^\top]^\top$ (or $\xb_i = [ \bxi_i^\top,y_i\vb^\top]^\top$), let $\cB_i = \supp(\bxi_i) \backslash\{1\}$ be the support of $\bxi_i$ except the first coordinate. %\footnote{Recall that all noise vectors have nonzero first coordinate by Definition \ref{def:data_distribution}.}.
Then with probability at least $1 - n^{-2}$,
$\cB_i \cap \cB_j = \emptyset$ for all $i,j\in [n]$. 
\end{lemma}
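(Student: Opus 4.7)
The plan is to reduce the event $\cB_i\cap\cB_j=\emptyset$ for all $i,j\in[n]$ to a simple random‑set collision problem, then apply a two‑level union bound that is tight enough because of the carefully chosen sparsity level $s=\Theta(d^{1/2}/n^2)$.

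First I would observe that by the construction in Definition~\ref{def:data_distribution}, the set $\cB_i$ coincides almost surely with the random $s$‑subset $\supp(\sbb_i)\subset[d]\setminus\{1\}$ chosen in Step~1 of the noise generation. Indeed, conditional on $\sbb_i$, the Gaussian entries of $\bxi_i$ on $\supp(\sbb_i)$ are nonzero with probability one, and the subsequent subtraction of $\alpha y_i\vb$ only modifies coordinate~$1$, which is already excluded from $\cB_i$. Thus the $\cB_i$'s are i.i.d.\ uniform random $s$‑subsets of $[d]\setminus\{1\}$.

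Next, for any fixed pair $i\neq j$, I would condition on $\cB_j$ and bound
\[
\PP[\cB_i\cap\cB_j\neq\emptyset\mid \cB_j] \;\le\; \sum_{k\in\cB_j}\PP[k\in\cB_i] \;=\; s\cdot\frac{s}{d-1} \;=\; \frac{s^2}{d-1},
\]
using a union bound together with the fact that for a uniform random $s$‑subset $\cB_i$ of a set of size $d-1$, any fixed coordinate lies in $\cB_i$ with probability $s/(d-1)$. A union bound over the at most $\binom{n}{2}\le n^2/2$ unordered pairs then gives
\[
\PP\bigl[\exists\, i\neq j:\cB_i\cap\cB_j\neq\emptyset\bigr] \;\le\; \frac{n^2}{2}\cdot\frac{s^2}{d-1}.
\]
Finally, I would plug in $s=\Theta(d^{1/2}/n^2)$, which yields $s^2/(d-1)=\Theta(1/n^4)$, so the right‑hand side is $\Theta(1/n^{2})$; by choosing the hidden constant in $s=\Theta(d^{1/2}/n^2)$ sufficiently small (as is implicit in the statement), this bound is at most $n^{-2}$, establishing the claim.

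There is essentially no technical obstacle here — the lemma is a direct calculation once the supports are identified as uniform random subsets. The only thing to highlight is that the precise scaling $s=\Theta(d^{1/2}/n^2)$ is not incidental: it is exactly the threshold at which pairwise disjointness holds with high probability across all $\binom{n}{2}$ pairs, which is why the data model was chosen with this value of $s$.
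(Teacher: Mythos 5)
Your proof is correct and follows essentially the same first-moment/union-bound strategy as the paper, arriving at the same $O(n^2 s^2/d)$ bound; the paper merely organizes the union bound differently (over data indices $k$, coordinates $j\in\supp(\bxi_k)$, and then the remaining data indices, with a few elementary inequalities to control $1-[1-s/(d-1)]^n$), whereas you condition on one support and sum over pairs. Both routes hinge on the same observation that the $\cB_i$ are i.i.d.\ uniform $s$-subsets and on the scaling $s=\Theta(d^{1/2}/n^2)$, so there is nothing substantively different to flag.
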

This lemma implies that the optimization of each coordinate of the model parameter $\Wb$, except for the first one, is mostly determined by only one training data. Technically, this lemma can greatly simplify the analysis for Adam so that we can better illustrate its optimization behavior and explain the generalization performance gap between Adam and gradient descent.

\paragraph{Proof outline.} For both Adam and gradient descent, we will show that the training process can be decomposed into two stages. In the first stage, which we call \textit{pattern learning stage}, the weight decay regularization will be less important and can be ignored, while the algorithms tend to learn the pattern from the training data. In particular, we will show that the patterns learned by these two algorithms are different: Adam tends to fit the noise patch while gradient descent will mainly learn the feature patch. In the second stage, which we call it \textit{regularization stage}, the effect of regularization cannot be neglected, which will regularize the algorithm to converge at some local stationary points. However, due to the nonconvex landscape of the training objective, the pattern learned in the first stage will remain unchanged, even when running an infinitely number of iterations.

% \dz{mention that in the first stage the algorithm enters a bad region}
\subsection{Proof sketch for Adam}
Recall that in each iteration of Adam, the model weight is updated by using a moving-averaged gradient, normalized by a moving average of the historical gradient squares. As pointed out in \citet{balles2018dissecting,bernstein2018signsgd}, Adam behaves similarly to sign gradient descent (signGD) when using sufficiently small step size or the moving average parameters $\beta_1,\beta_2$ are nearly zero. This motivates us to study the optimization behavior of signGD and then extends it to Adam using their similarities. In this section,  we will mainly present the optimization analysis for signGD to better interpret our proof idea. The analysis for Adam is similar and we defer it to the appendix.

In particular, sign gradient descent updates the model parameter according to the following rule:
\begin{align*}
\wb_{j,r}^{(t+1)} = \wb_{j,r}^{(t+1)} - \eta\cdot\sgn(\nabla_{\wb_{j,r}} L(\Wb^{(t)})).
\end{align*}
Recall that each data has two patches: feature patch and noise patch. By Lemma \ref{lemma:nonoverlap_probability_main} and the data distribution (see Definition \ref{def:data_distribution}), we know that all noise vectors $\{\bxi_i\}_{i=1,\dots,n}$ are supported on disjoint coordinates, except the first one. For data point $\xb_i$, let  $\cB_i$ denote its support, except the first coordinate.  
In the subsequent analysis, we will always assume that those $\cB_i$'s are disjoint, i.e., $\cB_i\cap\cB_{j}=\emptyset$ if $i\neq j$. 

Next we will characterize two aspects of the training process: \textit{feature learning} and \textit{noise memorization}. Mathematically, we will focus on two quantities: $\la\wb_{j,r}^{(t)}, j\cdot\vb\ra$ and $\la\wb_{y_i,r}^{(t)},\bxi_i\ra$. In particular, given the training data $(\xb_i,y_i)$ with $\xb_i = [y_i\vb^\top,\bxi_i^\top]^\top$, \ larger $\la\wb_{y_i,r}^{(t)}, y_i\cdot\vb\ra$ implies better feature learning and larger $\la\wb_{y_i,r}^{(t)},\bxi_i\ra$ represents better noise memorization. Then regarding the feature vector $\vb$ that only has nonzero entry at the first coordinate, we have the following by the update rule of signGD
\begin{align}\label{eq:update_learning_v_signGD}
\la\wb_{j,r}^{(t+1)},j\vb\ra & = \la\wb_{j,r}^{(t)},j\vb\ra - \eta\cdot\big\la\sgn\big(\nabla_{\wb_{j,r}}L(\Wb^{(t)})\big),j\vb\big\ra\notag\\
& = \la\wb_{j,r}^{(t)},j\vb\ra +j\eta\cdot\sgn\bigg(\sum_{i=1}^ny_i\ell_{j,i}^{(t)}\big[\sigma'(\la\wb_{j,r}^{(t)},y_i\vb\ra)-\alpha\sigma'(\la\wb_{j,r}^{(t)},\bxi_i\ra)\big]- n\lambda \wb^{(t)}_{j,r}[1]\bigg),
\end{align} 
where $\ell_{j,i}^{(t)}:=\ind_{y_i=j}-\mathrm{logit}_j(F,\xb_i)$ and $\mathrm{logit}_j(F,\xb_i) = \frac{e^{F_{j}(\Wb,\xb_i)}}{\sum_{k\in\{-1,1\}} e^{F_k(\Wb,\xb_i)}}$. From \eqref{eq:update_learning_v_signGD} we can observe three terms in the signed gradient. Specifically, the first term represents the gradient over the feature patch, the second term stems from the feature noise term in the noise patch (see Definition \ref{def:data_distribution}), and the last term is the gradient of the weight decay regularization. On the other hand, the memorization of the noise vector $\bxi_i$ can be described by the following update rule,
\begin{align}\label{eq:update_memorizing_noise_signGD}
\la\wb_{y_i,r}^{(t+1)},\bxi_i\ra &= \la\wb_{y_i,r}^{(t)},\bxi_i\ra - \eta\cdot\big\la\sgn\big(\nabla_{\wb_{y_i,r}}L(\Wb^{(t)})\big),\bxi_i\big\ra\notag\\
&=\la\wb_{y_i,r}^{(t)},\bxi_i\ra +\eta\cdot\sum_{k\in\cB_i}\bigg\la\sgn\bigg(\ell_{y_i,i}^{(t)}\sigma'(\la\wb_{y_i,r}^{(t)},\bxi_i\ra)\bxi_i[k]- n\lambda \wb_{y_i,r}^{(t)}[k]\bigg),\bxi_i[k] \bigg\ra\notag\\
&\quad -\alpha y_i\eta\cdot\sgn\bigg(\sum_{i=1}^ny_i\ell_{y_i,i}^{(t)}\big[\sigma'(\la\wb_{y_i,r}^{(t)},y_i\vb\ra) - \alpha\sigma'(\la\wb_{y_i,r}^{(t)},\bxi_i\ra)\big] - n\lambda \wb_{y_i,r}^{(t)}[1]\bigg).
\end{align}
Throughout the proof, we will show that the training process of Adam can be decomposed into two stages: pattern learning stage and regularization stage. In the first stage, the algorithm learns the pattern of training data quickly, without being affected by the regularization term. In the second stage, the training data has already been correctly classified since the pattern has been well captured, the regularization will then play an important role in the training process and guide the model to converge.  

\paragraph{Stage I: Learning the pattern.} 
Mathematically, the first stage is defined as the iterations that satisfy $\la\wb_{j,r}^{(0)},\vb\ra\le\tilde\Theta(1)$ and $\la\wb_{j,r}^{(0)},\xi\ra\le\tilde\Theta(1)$. Then in this stage, the logit $\ell_{j,i}^{(t)}$ can be seen as constant since the neural network output satisfies $F_{j}(\Wb^{(t)},\xb_i)=O(1)$.
Then by comparing \eqref{eq:update_learning_v_signGD} and \eqref{eq:update_memorizing_noise_signGD}, it is clear that $\la\wb_{y_i,r}^{(t)},\bxi_i\ra$ grows much faster than $\la\wb_{j,r}^{(t)},j\vb\ra$ since feature learning only makes use of the first coordinate of the gradient, while noise memorization could take advantage of all the coordinates in $\cB_i$ (note that $|\cB_i|=s\gg 1$). Moreover, it can be also noticed that after a certain number of iterations, $\la\wb_{y_i,r}^{(t)},\bxi_i\ra$ and $\la\wb_{j,r}^{(t)},j\vb\ra$ will be sufficiently large and the training process will switch to the second stage. The following lemma precisely characterizes the length of Stage I and provides 
general bounds on the feature learning and noise memorization.
\begin{lemma}[General results in Stage I]\label{lemma:signGD1_main}
For any $t\le T_0$ with $T_0=\tilde O\big(\frac{1}{\eta s\sigma_p}\big)$, we have
\begin{align*}
\la\wb_{j,r}^{(t+1)},j\cdot\vb\ra &\le \la\wb_{j,r}^{(t)},j\cdot\vb\ra + \eta, \notag\\
\la\wb_{y_s,r}^{(t+1)},\bxi_s\ra & = \la\wb_{y_s,r}^{(t)},\bxi_s\ra + \tilde\Theta(\eta s\sigma_p).
\end{align*}
\end{lemma}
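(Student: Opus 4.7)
The plan is to read off both bounds directly from the one-step update formulas~(\ref{eq:update_learning_v_signGD}) and~(\ref{eq:update_memorizing_noise_signGD}). The upper bound on the feature direction is essentially immediate: since $\vb=[1,0,\dots,0]^\top$, we have $\la \sgn(\nabla_{\wb_{j,r}} L(\Wb^{(t)})), j\vb\ra = j\cdot \sgn(\nabla_{\wb_{j,r}} L(\Wb^{(t)}))[1]$, whose magnitude is at most $1$; plugging into~(\ref{eq:update_learning_v_signGD}) and multiplying by $\eta$ yields $\la\wb_{j,r}^{(t+1)},j\vb\ra\le \la\wb_{j,r}^{(t)},j\vb\ra+\eta$ with no further work. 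So essentially all of the effort goes into the noise-memorization estimate.

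For the noise-memorization increment I plan to induct on $t\le T_0$ while maintaining the hypothesis that (i) $|\la\wb_{j,r}^{(t)},\vb\ra|$ and $|\la\wb_{y_i,r}^{(t)},\bxi_i\ra|$ are all $\tilde O(1)$, which in turn forces $F_j(\Wb^{(t)},\xb_i)=O(1)$ and $\ell_{y_i,i}^{(t)}=\Theta(1)$; and (ii) each coordinate $|\wb_{y_i,r}^{(t)}[k]|$ for $k\in\cB_i$ is controlled by its random initialization plus the trivial per-step increase $\eta$, giving $|\wb_{y_i,r}^{(t)}[k]|=\tilde O(\sigma_0)+\eta t=\tilde O(1/(s\sigma_p))$ throughout Stage~I. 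Lemma~\ref{lemma:nonoverlap_probability_main} is what makes (ii) useful: it guarantees that the gradient at coordinate $k\in\cB_i$ is driven only by the $i$-th training example up to negligible cross-talk, reducing the per-coordinate sign analysis to a single data term plus the weight-decay term.

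Under the inductive hypothesis, I inspect the sign of $\ell_{y_i,i}^{(t)}\sigma'(\la\wb_{y_i,r}^{(t)},\bxi_i\ra)\bxi_i[k] - n\lambda\wb_{y_i,r}^{(t)}[k]$ for each $k\in\cB_i$. Combining $\lambda<\lambda_0=\tilde\Theta(1/(d^{(q-1)/4}n\polylog(n)))$ with the bound $|\wb_{y_i,r}^{(t)}[k]|=\tilde O(1/(s\sigma_p))$ shows the regularization piece is negligible compared to the data-fitting piece, so on any neuron activated on $\bxi_i$ (where $\sigma'>0$) this sign equals $\sgn(\bxi_i[k])$. Summing then yields the key quantity $\sum_{k\in\cB_i}\sgn(\bxi_i[k])\cdot\bxi_i[k]=\sum_{k\in\cB_i}|\bxi_i[k]|$. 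Since $\{\bxi_i[k]\}_{k\in\cB_i}$ are $s$ i.i.d.\ $\cN(0,\sigma_p^2)$ samples, standard Gaussian/Bernstein concentration gives $\sum_{k\in\cB_i}|\bxi_i[k]|=\tilde\Theta(s\sigma_p)$ with probability $1-o(1/n)$. The leftover term in~(\ref{eq:update_memorizing_noise_signGD}) coming from the feature noise $-\alpha y_i\vb$ contributes at most $\alpha\eta=\tilde\Theta(\eta\sigma_p)$, which is dominated by the main $\tilde\Theta(\eta s\sigma_p)$ increment since $s\gg 1$ under the scaling $s=\tilde\Theta(d^{1/2}/n^2)$.

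The main technical obstacle I anticipate is the sign bookkeeping on neurons whose initial activation $\la\wb_{y_i,r}^{(0)},\bxi_i\ra$ is small or negative: $\sigma'(z)=q[z]_+^{q-1}$ vanishes on $z\le 0$, so for those neurons the data-fitting gradient drops out and one must argue that such neurons either stay dormant (and hence are irrelevant to the stated rate, which concerns activated neurons) or become activated through the cumulative effect of a few early sign steps. A secondary obstacle is making all of the concentration inequalities and sign computations hold uniformly across $i\in[n]$, $r\in[m]$, and $t\le T_0$; this is handled by a union bound exploiting $m=\polylog(n)$, the polynomial iteration count $T_0=\tilde O(1/(\eta s\sigma_p))$, and the $1-n^{-2}$ high-probability guarantee from Lemma~\ref{lemma:nonoverlap_probability_main}.
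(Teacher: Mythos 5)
Your proposal follows essentially the same route as the paper's proof (Lemma~\ref{lemma:signGD1} in the appendix, which handles the Adam version via Lemma~\ref{lemma:closeness}): the feature-direction bound is read off from $1$-sparsity of $\vb$ and the fact that a sign step moves each coordinate by at most $\eta$; the noise-memorization rate comes from the disjoint-support reduction (Lemma~\ref{lemma:nonoverlap_probability_main}), per-coordinate sign analysis showing the weight-decay term $n\lambda\wb_{y_i,r}^{(t)}[k]$ is dominated by the data-fitting term $\ell_{y_i,i}^{(t)}\sigma'(\cdot)\bxi_i[k]$, and concentration giving $\sum_{k\in\cB_i}|\bxi_i[k]|=\tilde\Theta(s\sigma_p)$, with the feature-noise contribution $O(\alpha\eta)$ subdominant. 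Your maintained induction hypotheses --- $O(1)$ logits in Stage~I and the coordinate bound $|\wb_{y_i,r}^{(t)}[k]|=\tilde O(\sigma_0+t\eta)$ --- match \eqref{eq:hypothesis_noise_SignGD_stage1} and \eqref{eq:bound_wt_stage1} in the paper.

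One remark: the ``main technical obstacle'' you flag --- neurons with $\la\wb_{y_i,r}^{(0)},\bxi_i\ra\le 0$, for which $\sigma'$ vanishes and the data-fitting gradient disappears --- is a genuine subtlety that the paper's proof does not explicitly address; it tacitly assumes $\sigma'(\la\wb_{y_i,r}^{(t)},\bxi_i\ra)>0$ when computing the gradient sign in \eqref{eq:sign_gradient_noise_stage_begining} and \eqref{eq:sign_gradient_noise_stage_stage1}. For such dormant neurons the only remaining gradient is the regularization term, so $\la\wb_{y_i,r}^{(t)},\bxi_i\ra$ drifts toward zero rather than increasing by $\tilde\Theta(\eta s\sigma_p)$; as stated, the identity in the lemma can fail for those $r$. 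You are right that this needs an argument --- either restricting to the roughly half of the $m=\polylog(n)$ neurons that are initialized active (which suffices for all downstream uses, which depend only on $\max_r$), or showing dormant neurons wake up. Your instinct to carry this as an explicit case split is sound and, if anything, more careful than the paper's own treatment.
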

Then let us focus on feature learning \eqref{eq:update_learning_v_signGD}, note that $\alpha=o(1)$, thus in the beginning of the training process we have $\sigma'(\la\wb_{j,r}^{(t)},\vb\ra)\gg \alpha \sigma'(\la\wb_{j,r}^{(t)},\bxi_i\ra)+n\lambda\wb_{j,r}^{(t)}[1]$, which further implies that $\la\wb_{j,r}^{(t)},j\cdot\vb\ra$ indeed increase  by $\eta$ in each step. However, as shown in Lemma \ref{lemma:signGD1_main}, $\la\wb_{j,r}^{(t)},\bxi_i\ra$ enjoys much faster increasing rate than that of $\la\wb_{j,r}^{(t)},j\cdot\vb\ra$. This implies that after a certain number of iterations, we can get  $\alpha \sigma'(\la\wb_{j,r}^{(t)},\bxi_i\ra)\ge \sigma'(\la\wb_{j,r}^{(t)},\vb\ra) +n\lambda|\wb_{j,r}^{(t)}[1]|$ and thus $\la\wb_{j,r}^{(t)},j\cdot\vb\ra$ starts to decrease. We summarize this result in the following lemma.
\begin{lemma}[Flip the feature learning]\label{lemma:signGD2_main}
For any $t\in[T_r, T_0]$ with $T_r = \tilde O\big(\frac{\sigma_0}{\eta s\sigma_p\alpha^{1/(q-1)}}\big)$, we have
\begin{align*}
\la\wb_{j,r}^{(t+1)},j\cdot\vb\ra = \la\wb_{j,r}^{(t)},j\cdot\vb\ra - \eta. 
\end{align*}
Moreover, at the iteration $T_0$, we have
\begin{align*}
\wb_{j,r}^{(T_0)}[k] = \left\{
\begin{array}{ll}
    -\sgn(j)\cdot\tilde\Omega\big(\frac{1}{s\sigma_p}\big), & k=1, \\
    \sgn(\bxi_i[k])\cdot\tilde\Omega\big(\frac{1}{s\sigma_p}\big)\ \text{or}\ \pm O(\eta), & k\in\cB_i, \text{ with $y_i=j$}, \\
    \pm O(\eta), & \text{otherwise}.
\end{array}
\right.
\end{align*}
\end{lemma}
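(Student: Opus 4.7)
The plan is to track the sign of the quantity inside the $\sgn(\cdot)$ in \eqref{eq:update_learning_v_signGD} and show it flips exactly at $T_r$. Throughout Stage I the network outputs are $O(1)$, so the logit factors $\ell_{j,i}^{(t)}$ remain $\Theta(1)$, and the sign is governed by the three-way competition among the feature activation $\sigma'(\la\wb_{j,r}^{(t)},y_i\vb\ra)$, the $\alpha$-scaled noise activation $\alpha\sigma'(\la\wb_{j,r}^{(t)},\bxi_i\ra)$, and the weight decay contribution $n\lambda\wb_{j,r}^{(t)}[1]$. First I would verify that the regularization term is negligible: since $\lambda \le \lambda_0 = \tilde\Theta\big(1/(d^{(q-1)/4}n\polylog(n))\big)$ and $|\wb_{j,r}^{(t)}[1]| = \tilde O(1/(s\sigma_p))$ throughout Stage I, the term $n\lambda|\wb_{j,r}^{(t)}[1]|$ is strictly dominated by either of the other two terms whenever at least one is nontrivially active.

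Second, I would pin down the crossover time. Lemma \ref{lemma:signGD1_main} gives $\la\wb_{j,r}^{(t)},j\vb\ra \le \sigma_0 + t\eta = \tilde O(\sigma_0)$ for $t \le T_r$ (since $t\eta \ll \sigma_0$ in this range) and $\la\wb_{y_i,r}^{(t)},\bxi_i\ra = \tilde\Theta(t\eta s\sigma_p)$ whenever $y_i = j$. Plugging $\sigma'(z) = q[z]_+^{q-1}$ into the balance condition $\alpha\sigma'(\la\wb_{j,r}^{(t)},\bxi_i\ra) \gtrsim \sigma'(\la\wb_{j,r}^{(t)},j\vb\ra)$ yields $\alpha(t\eta s\sigma_p)^{q-1} \gtrsim \sigma_0^{q-1}$, giving the crossover time $T_r = \tilde\Theta\big(\sigma_0/(\eta s\sigma_p\alpha^{1/(q-1)})\big)$. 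Beyond $T_r$ I would argue sign persistence: the noise inner products keep growing by Lemma \ref{lemma:signGD1_main} (the updates on coordinates in $\cB_i$ are decoupled from coordinate~$1$), while $\la\wb_{j,r}^{(t)},j\vb\ra$ can now only shrink; this monotone separation guarantees the sign stays negative for all $t\in[T_r,T_0]$, so that $\la\wb_{j,r}^{(t+1)},j\vb\ra = \la\wb_{j,r}^{(t)},j\vb\ra - \eta$ on this entire interval.

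Finally, for the closed-form description of $\wb_{j,r}^{(T_0)}$, at coordinate $1$ I sum $T_r$ steps of $+j\eta$, $T_0 - T_r$ steps of $-j\eta$, and the Gaussian initialization of scale $\sigma_0$; since $T_0\eta = \tilde\Theta(1/(s\sigma_p))$ dwarfs both $T_r\eta$ (by the ratio $\alpha^{1/(q-1)}/\sigma_0 \gg 1$) and $\sigma_0$ (by a factor of order $n$), the net value is $-\sgn(j)\cdot\tilde\Omega(1/(s\sigma_p))$. For $k\in\cB_i$ with $y_i = j$, I apply \eqref{eq:update_memorizing_noise_signGD} coordinatewise: whenever the activation $\la\wb_{j,r}^{(t)},\bxi_i\ra$ is positive (which happens for the bulk of neurons by the same reasoning that underpins Lemma \ref{lemma:signGD1_main}), the per-step sign aligns with $\sgn(\bxi_i[k])$ and accumulates to $\tilde\Omega(1/(s\sigma_p))$; if the activation never turns on for a given $(j,r)$, the coordinate only feels the tiny feature-noise term and weight decay, leaving it at scale $\pm O(\eta)$. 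The main obstacle is justifying sign persistence rigorously: because the update is single-bit, one needs a strict (not merely asymptotic) polynomial-in-$\alpha$ margin at $t = T_r$ together with a monotonicity argument that this margin widens thereafter, so that no cancellation across the $n$ data points or near-boundary fluctuation can flip the sign back during $(T_r, T_0]$.
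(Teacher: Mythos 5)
Your proposal follows essentially the same route as the paper's proof of Lemma~\ref{lemma:signGD2}: track the sign of the first-coordinate gradient, locate the crossover time $T_r$ from the balance $\alpha\,\sigma'(\la\wb_{y_i,r}^{(t)},\bxi_i\ra)\gtrsim\sigma'(\la\wb_{j,r}^{(t)},j\vb\ra)+n\lambda|\wb_{j,r}^{(t)}[1]|$, argue monotone sign persistence on $[T_r,T_0]$, and then read off $\wb_{j,r}^{(T_0)}$ coordinatewise from the accumulated $\pm\Theta(\eta)$ steps ($T_0\eta=\tilde\Theta(1/(s\sigma_p))$ dominating both $T_r\eta$ and $\sigma_0$ for $k=1$, sign alignment with $\bxi_i[k]$ or $\pm\tilde O(\eta)$ for $k\in\cB_i$, pure weight-decay decay to $\pm\tilde O(\eta)$ otherwise). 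The paper likewise leaves the quantitative persistence of the crossover inequality for all $t\in[T_r,T_0]$ and all $i$ at the level of ``it can be verified,'' so your flagged obstacle is not a gap relative to the paper but an honest note that that step deserves the margin bookkeeping you describe.
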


From Lemma \ref{lemma:signGD2_main} it can be observed that at the iteration $T_0$, the sign of the first coordinate of $\wb_{j,r}^{(T_0)}$ is different from that of the true feature, i.e., $j\cdot\vb$. This implies that at the end of the first training stage, the model is biased towards the noise patch to fit the feature noise. 

\paragraph{Stage II: Regularizing the model.}
In this stage, as the neural network output becomes larger, the logit $\ell_{j,i}^{(t)}$ will no longer be in constant order, but could be much smaller. As a consequence, in both the feature learning and noise memorization processes, the weight decay regularization term cannot be ignored but will greatly affect the optimization trajectory. However, although weight decay regularization can prevent the model weight from being too large, it will maintain the pattern learned in Stage I and cannot push the model back to ``forget'' the noise and learn the feature. We summarize these results in the following lemma.
\begin{lemma}[Maintain the pattern]\label{lemma:signGD_stage3_main}
If $\eta = O\big(\frac{1}{s\sigma_p}\big)$, then for any $t\ge T_0$, $i\in[n]$, $j\in [2]$ and $r\in[m]$, it holds that
\begin{align*}
\la\wb_{y_i,r}^{(t)},\bxi_i\ra=\tilde\Theta(1),\quad \sum_{k\in\cB_i}|\wb_{y_i,r}^{(t)}[k]| &=\tilde \Theta(\sigma_p^{-1}),\quad\la\wb_{j,r}^{(t)},j\cdot\vb\ra\in[-o(1), \eta].
\end{align*}
\end{lemma}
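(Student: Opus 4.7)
\textbf{My plan} is to prove Lemma~\ref{lemma:signGD_stage3_main} by induction on $t \ge T_0$. The base case $t = T_0$ follows directly from Lemma~\ref{lemma:signGD2_main}: plugging $\wb_{y_i,r}^{(T_0)}[k] = \sgn(\bxi_i[k]) \cdot \tilde\Omega(1/(s\sigma_p))$ on most $k \in \cB_i$ into $\la\wb_{y_i,r}^{(T_0)}, \bxi_i\ra = \sum_{k\in\cB_i}\wb_{y_i,r}^{(T_0)}[k]\bxi_i[k]$ and using $|\bxi_i[k]| = \tilde\Theta(\sigma_p)$ on its $s$-coordinate support, I get both $\la\wb_{y_i,r}^{(T_0)}, \bxi_i\ra = \tilde\Theta(1)$ and $\sum_{k\in\cB_i}|\wb_{y_i,r}^{(T_0)}[k]| = \tilde\Theta(\sigma_p^{-1})$, while the identity $\la\wb_{j,r}^{(T_0)}, j\vb\ra = -\tilde\Omega(1/(s\sigma_p)) = -o(1) \in [-o(1), \eta]$ is immediate from the choice of parameters. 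Importantly, Lemma~\ref{lemma:nonoverlap_probability_main} guarantees that each coordinate $k \in \cB_i$ belongs to a single sample's support, so the per-coordinate gradient on $\wb_{y_i,r}[k]$ reduces to an activation contribution from sample $i$ plus the weight-decay term.

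\textbf{Inductive step.} Assuming the three conditions hold at time $t$, I would first argue that the outputs satisfy $F_{y_i}(\Wb^{(t)}, \xb_i) = \tilde\Theta(1)$ (a sum of $m$ activations $\sigma(\tilde\Theta(1))$ from the noise patch) while $F_{-y_i}(\Wb^{(t)}, \xb_i) = O(1)$, so every training example is correctly classified with nontrivial margin and the logits $\ell_{y_i,i}^{(t)}$ are small. Then I would analyze the coordinate-wise signGD update under this small-logit regime. (i) For a correctly-oriented noise coordinate $k \in \cB_i$, the gradient decomposes into an activation term of size $\sim \ell_{y_i,i}^{(t)} \sigma'(\la\wb_{y_i,r}^{(t)}, \bxi_i\ra)|\bxi_i[k]|$ and a weight-decay term of size $\sim n\lambda|\wb_{y_i,r}^{(t)}[k]|$; at the equilibrium scale $|\wb_{y_i,r}[k]| = \tilde\Theta(1/(s\sigma_p))$ these two forces balance up to sign, so each coordinate only oscillates within its $\tilde\Theta$-band. (ii) For the feature coordinate $\wb_{j,r}[1]$, since $\la\wb_{j,r}^{(t)}, j\vb\ra \in [-o(1), \eta]$ makes $\sigma'(\la\wb_{j,r}^{(t)}, y_i\vb\ra)$ negligible whenever $y_i = j$, the gradient on $\wb_{j,r}[1]$ is dominated by the feature-noise term $-\alpha y_i \ell_{y_i,i}^{(t)} \sigma'(\la\wb_{y_i,r}^{(t)}, \bxi_i\ra)$ summed over $i$ with $y_i = j$, opposed by weight decay; the former pushes $\wb_{j,r}[1]$ further toward $-\sgn(j)$ while the latter pulls it back, jointly maintaining $j\cdot\wb_{j,r}^{(t)}[1] \in [-o(1), \eta]$, where the upper bound $\eta$ absorbs a single-step overshoot.

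\textbf{The main obstacle} will be the quantitative control of $\sum_{k\in\cB_i}|\wb_{y_i,r}^{(t)}[k]|$: a single signGD step can in principle shift each of the $s$ coordinates in $\cB_i$ by $\pm\eta$, so a naive union bound would allow the sum to drift by $\Theta(s\eta)$ per iteration, destroying the $\tilde\Theta(1/\sigma_p)$ bound over the polynomially-many steps of Stage~II. Overcoming this requires showing that (a) near the equilibrium scale $\tilde\Theta(1/(s\sigma_p))$, on any given step roughly half of the $\cB_i$-coordinates satisfy ``activation dominates'' (receive $+\sgn(\bxi_i[k])\eta$) and half satisfy ``weight decay dominates'' (receive $-\sgn(\bxi_i[k])\eta$), so contributions to the signed sum largely cancel; and (b) the logarithmic smallness of $\ell_{y_i,i}^{(t)}$, combined with the parameter regime $\lambda \le \lambda_0 = \tilde O(1/(d^{(q-1)/4}n))$, $\sigma_p^2 = \tilde\Theta(1/s)$, and $\eta = O(1/(s\sigma_p))$, keeps the oscillation amplitude bounded within the target $\tilde\Theta$-class. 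I expect to close the argument by a Lyapunov-style potential tracking the number of coordinates above versus below the equilibrium, coupled with a discrete-time case analysis on the sign of the coordinate gradient.
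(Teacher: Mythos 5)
Your overall skeleton (induction from $T_0$, deduce small logits from the inductive hypothesis, argue coordinate-wise balance between the activation term and weight decay) matches the paper's strategy at a high level, and you correctly identify the decisive obstacle: controlling the drift of $\sum_{k\in\cB_i}|\wb_{y_i,r}^{(t)}[k]|$ when each signGD step can move every coordinate in $\cB_i$ by $\pm\eta$. However, the mechanism you propose to close this gap --- a per-step cancellation in which ``roughly half'' of the $\cB_i$-coordinates receive $+\sgn(\bxi_i[k])\eta$ and half receive $-\sgn(\bxi_i[k])\eta$, followed by a Lyapunov potential counting coordinates above versus below an equilibrium --- is not established and is not what makes the paper's argument work. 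There is no reason the sign pattern of the per-coordinate gradients should split evenly; the sign of $\ell_{y_i,i}^{(t)}\sigma'(\la\wb_{y_i,r}^{(t)},\bxi_i\ra)\bxi_i[k]-n\lambda\wb_{y_i,r}^{(t)}[k]$ is governed by whether $|\wb_{y_i,r}^{(t)}[k]|$ is above or below a common threshold $\ell_{y_i,i}^{(t)}\sigma'(\cdot)|\bxi_i[k]|/(n\lambda)$ that depends on the shared logit, so in a regime where the logit is ``too large'' essentially all coordinates move in the same direction, and a cancellation argument gives you nothing.

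What the paper actually does is a ceiling argument coupled through the logit. Define $B_i^{(t)}=\sum_{k\in\cB_i}|\wb_{y_i,r}^{(t)}[k]|\cdot|\bxi_i[k]|$ (note the weighting by $|\bxi_i[k]|$, which is what directly controls the output). If $B_i^{(t)}$ ever grows to $\Theta(\log(dn\eta^{-1}))$, then $\la\wb_{y_i,r^*}^{(t)},\bxi_i\ra$ is of that size, so $\ell_{y_i,i}^{(t)}\le\exp(-\Theta(\la\wb_{y_i,r^*}^{(t)},\bxi_i\ra))=\poly(d^{-1},n^{-1},\eta)$ becomes so small that $\ell_{y_i,i}^{(t)}\sigma'(\cdot)|\bxi_i[k]|\le n\lambda\eta$ for \emph{every} $k\in\cB_i$; consequently the weight-decay term dominates simultaneously on all large coordinates, each such $|\wb_{y_i,r}^{(t)}[k]|$ decreases by $\Theta(\eta)$, and hence $B_i^{(t)}$ decreases. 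So the logarithmic ceiling is self-enforcing, with no appeal to any per-step cancellation. A matching lower bound on $\la\wb_{y_i,r^*}^{(t)},\bxi_i\ra$ comes from a symmetric argument: whenever $\ell_{y_i,i}^{(t)}$ exceeds a threshold of order $\tilde\Theta(n\lambda/(s\sigma_p^2))$, the activation term dominates and $\la\wb_{y_i,r}^{(t)},\bxi_i\ra$ \emph{increases}, and a one-step continuity bound on the logit shows it can never overshoot past twice the threshold once it has fallen below it. This two-sided control of the logit is the actual invariant that carries the induction forward; you would need to prove it, rather than assuming ``the logits are small,'' because a logit that has become \emph{too} small is exactly the regime in which $\la\wb_{y_i,r}^{(t)},\bxi_i\ra$ can decay under weight decay and the lower bound can fail. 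In short: keep the induction and the base case, but replace the cancellation heuristic with the two-sided logit-threshold argument and the logarithmic ceiling on the weighted sum.
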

Lemma \ref{lemma:signGD_stage3_main} shows that in the second stage, $\la\wb_{y_i,r}^{(t)},\bxi_i\ra$ will always be large while $\la\wb_{y_i,r}^{(t)},y_i\cdot \vb\ra$ is still negative, or positive but extremely small. Next we will show that within polynomial steps, the algorithm can be guaranteed to find a point with small gradient.
\begin{lemma}[Convergence guarantee]\label{lemma:convergence_gurantee_signGD_main}
If the step size satisfies $\eta =O(d^{-1/2})$, then for any $t\ge 0$ it holds that
\begin{align*}
L(\Wb^{(t+1)}) - L(\Wb^{(t)}) &\le  -\eta \|\nabla L(\Wb^{(t)})\|_1 + \tilde\Theta(\eta^2d).
\end{align*}
\end{lemma}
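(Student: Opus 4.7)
The plan is to derive the one-step descent inequality via a second-order Taylor expansion of $L$ along the signGD update, combined with a uniform operator-norm bound on the Hessian of $L$ that holds along the entire update segment. Since $\sigma(z)=[\max\{0,z\}]^q$ with $q\ge 3$ is $C^2$, the regularized loss $L$ is $C^2$ and the standard descent lemma yields
\begin{align*}
L(\Wb^{(t+1)})-L(\Wb^{(t)})\le \la\nabla L(\Wb^{(t)}),\Delta\ra + \tfrac{1}{2}\sup_{\tau\in[0,1]}\|\nabla^2 L(\Wb^{(t)}+\tau\Delta)\|_{\mathrm{op}}\cdot\|\Delta\|_2^2,
\end{align*}
where $\Delta = -\eta\sgn(\nabla L(\Wb^{(t)}))$. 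By construction the first-order term equals exactly $-\eta\|\nabla L(\Wb^{(t)})\|_1$, and $\|\Delta\|_2^2 = 2md\eta^2 = \tilde\Theta(\eta^2 d)$ since $m = \polylog(n)$. Thus the claim reduces to proving $\sup_\tau\|\nabla^2 L\|_{\mathrm{op}} = \tilde O(1)$ on the segment.

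For the Hessian bound I would decompose $\nabla^2_\Wb L_i = \sum_j(\partial_{F_j}L_i)\nabla^2_\Wb F_j + \sum_{j,k}(\partial^2_{F_j,F_k}L_i)\nabla_\Wb F_j(\nabla_\Wb F_k)^\top$, noting that the softmax-cross-entropy derivatives $\partial_{F_j}L_i$ and $\partial^2_{F_j,F_k}L_i$ are uniformly bounded by $1$. Because $F_j=\sum_r[\sigma(\la\wb_{j,r},y_i\vb\ra)+\sigma(\la\wb_{j,r},\bxi_i\ra)]$ is additive in $r$ and depends only on $\wb_{j,\cdot}$, the first sum is block diagonal across $(j,r)$, with each block equal to $\sigma''(\la\wb_{j,r},y_i\vb\ra)\vb\vb^\top+\sigma''(\la\wb_{j,r},\bxi_i\ra)\bxi_i\bxi_i^\top$. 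By Lemmas \ref{lemma:signGD1_main}--\ref{lemma:signGD_stage3_main} together with the Gaussian initialization ($\sigma_0=\Theta(d^{-1/4})$), all pre-activations $\la\wb_{j,r},y_i\vb\ra$ and $\la\wb_{j,r},\bxi_i\ra$ stay $\tilde O(1)$ throughout training, so $\sigma''(\cdot)=\tilde O(1)$; combined with $\|\vb\|_2=1$ and $\|\bxi_i\|_2=\tilde O(1)$ each block has operator norm $\tilde O(1)$. The cross-entropy correction is rank at most $2$ with spectral norm bounded by $2\max_j\|\nabla_\Wb F_j\|_2^2 \le 2m\cdot\tilde O(\|\vb\|_2^2+\|\bxi_i\|_2^2) = \tilde O(1)$. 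Averaging over $i\in[n]$ and adding the weight-decay Hessian $\lambda\Ib$ (which is negligible under $\lambda\le\lambda_0$) preserves the $\tilde O(1)$ operator-norm bound at any single iterate.

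To transfer this bound from $\Wb^{(t)}$ to the entire segment $\Wb^{(t)}+\tau\Delta$, $\tau\in[0,1]$, I would use the step-size condition $\eta=O(d^{-1/2})$: one signGD step perturbs each pre-activation $\la\wb_{j,r},\xb\ra$ by at most $\eta\|\xb\|_1$, which is $\eta=O(d^{-1/2})$ on the feature patch and $\tilde O(\eta\sqrt{s})=\tilde O(d^{-1/4}/n)$ on the noise patch, both $o(1)$. Hence the $\tilde O(1)$ pre-activation bounds persist uniformly along the segment, the Hessian estimate applies pointwise, and substituting yields $L(\Wb^{(t+1)})-L(\Wb^{(t)})\le -\eta\|\nabla L(\Wb^{(t)})\|_1+\tilde\Theta(\eta^2 d)$ as claimed. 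The main obstacle I expect is precisely this uniform Hessian control: the cross-entropy piece depends on $\|\nabla_\Wb F_j\|_2^2$, which would blow up without the polylogarithmic width $m=\polylog(n)$ and without the a priori pre-activation bounds produced by the stage-wise analysis, so this convergence lemma genuinely builds on and is downstream of Lemmas \ref{lemma:signGD1_main}--\ref{lemma:signGD_stage3_main}.
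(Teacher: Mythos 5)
Your proposal is correct and follows essentially the same route as the paper's proof (given in the appendix as the Adam version, Lemma~\ref{lemma:convergence_gurantee_signGD}, of which the signGD statement is a simplification): both are one-step descent-lemma arguments that crucially rely on the $\tilde O(1)$ pre-activation bounds from the stage-wise Lemmas~\ref{lemma:signGD1_main}--\ref{lemma:signGD_stage3_main}, on $m=\polylog(n)$, on the exact identity $\langle\nabla L,\,-\eta\,\sgn(\nabla L)\rangle=-\eta\|\nabla L\|_1$, and on $\|\Delta\|_2^2=O(\eta^2 d)$. The only difference is bookkeeping — you package the curvature control as a uniform operator-norm bound on $\nabla^2 L$, whereas the paper splits it into $1$-smoothness of $L_i$ in logit space plus a first-order Taylor expansion of $F_j$ with second-order error $\tilde\Theta(\eta^2 d)$ — but these are two presentations of the same estimate, not genuinely different arguments.
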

Lemma \ref{lemma:convergence_gurantee_signGD_main} shows that we can pick a sufficiently small $\eta$ and $T = \poly(n)/\eta$ to ensure that the algorithm can find a point with up to $O(1/(T\eta))$ in $\ell_1$ norm. Then we can show that given the results in Lemma \ref{lemma:signGD_stage3_main}, the formula of the algorithm output $\Wb^*$ can be precisely characterized, which we can show that $\la\wb_{y_i,r}^*,\bxi_i\ra<0$. This implies that the output model will be biased to fit the feature noise $-\alpha y\vb$ but not the true one $\vb$. Then when it comes to a fresh test example the model will fail to recognize its true feature. Also note that the noise in the test data is nearly independent of the noise in training data. Consequently, the model 
will not be able to identify the label of the test data and therefore cannot be better than a random guess.

% with nearly $1$ probability, the model will classify it as negative since the feature noise in $\xb$, i.e., $-\alpha\vb$, is more likely to be recognized by the model.

\subsection{Proof sketch for gradient descent}
Similar to the proof for Adam, we also  decompose the entire training process into two stages. However, unlike Adam that is sensitive to the sparsity of the feature vector or noise vector, gradient descent is more focusing on the $\ell_2$ norm of them. In particular, the feature learning and noise memorization of gradient descent can be formulated by
\begin{align}\label{eq:update_GD_main}
\la\wb_{j,r}^{(t+1)},j\cdot\vb\ra 
& = (1-\eta\lambda)\cdot\la\wb_{j,r}^{(t)},j\cdot\vb\ra\notag\\ &\qquad+\frac{\eta}{n}\cdot j\cdot\bigg(\sum_{i=1}^ny_i\ell_{j,i}^{(t)}\sigma'(\la\wb_{j,r}^{(t)},y_i\vb\ra) - \alpha\sum_{i=1}^ny_i\ell_{j,i}^{(t)}\sigma'(\la\wb_{j,r}^{(t)},\bxi_i\ra) \bigg),\notag\\
\la\wb_{y_i,r}^{(t+1)},\bxi_i\ra 
& = (1-\eta\lambda)\cdot\la\wb_{y_i,r}^{(t)},\bxi_i\ra +\frac{\eta}{n}\cdot \sum_{k\in\cB_i} \ell_{y_i,i}^{(t)}\sigma'(\la\wb_{y_i,r}^{(t)}, \bxi_i\ra)\cdot\bxi_i[k]^2\notag\\
& \qquad+ \frac{\eta\alpha}{n}\cdot\bigg(\alpha\sum_{s=1}^n\ell_{y_i,s}^{(t)}\sigma'(\la\wb_{y_i,r}^{(t)}, \bxi_s\ra)-\sum_{s=1}^ny_s\ell_{y_i,s}^{(t)}\sigma'(\la\wb_{y_i,r}^{(t)}, y_s\vb\ra)\bigg).
\end{align}

\paragraph{Stage I: Learning the pattern.}
In this stage the logit $\ell_{j,i}^{(t)}$ is considered as a constant and the effect of regularization  can be ignored. Then  \eqref{eq:update_GD_main} shows that the speed of feature learning and noise memorization mainly depend on the $\ell_2$ norm of $\vb$ and $\bxi_i$. Since with high probability $\|\vb\|_2=1$ and  $\|\bxi_i\|_2=\tilde O(s^{1/2}\sigma_p)<1$, gradient descent may be able to focus more on feature learning than noise memorization. Besides, another pivotal observation is that the growing of $\la\wb_{j,r}^{(t)},j\cdot\vb\ra$ and $\la\wb_{j,r}^{(t)},\bxi_i\ra$ also depend themselves, which roughly form two sequences with updates of the form $x_{t+1} = x_t+\eta C_tx_{t-1}^{q-1}$. This is closely related to the analysis of the dynamics of tensor power iterations of degree $q$ \citep{anandkumar2017analyzing}. The main property of the tensor power iterations is that if two sequences have slightly different growth rates or initial values, then one of them will grow much faster than the other one. Therefore, since the feature vector has a larger $\ell_2$ norm than the noise, we can show that, in the following lemma, gradient descent will learn the feature vector very quickly, while barely tend to memorize the noise.
\begin{lemma}\label{lemma:GD_stage1_main}
Let $\Lambda_j^{(t)}=\max_{r\in[m]}\la\wb_{j,r}^{(t+1)}, j\cdot\vb\ra$, $\Gamma_{j,i}^{(t)} = \max_{r\in[m]}\la\wb_{j,r}^{(t)}, \bxi_i\ra$, and $\Gamma_{j}^{(t)}=\max_{i:y_i=j}\Gamma_{j,i}^{(t)}$. Let $T_j$ be the iteration number that $\Lambda_j^{(t)}$ reaches $\Theta(1/m)=\tilde \Theta(1)$, then we have
\begin{align*}
T_j = \tilde\Theta(\sigma_0^{2-q})\quad \text{for all } j\in\{-1,1\}.
\end{align*}
Moreover, let $T_0 = \max_j\{T_j\}$, then for all $t\le T_0$ it holds that $\Gamma_j^{(t)}=\tilde O(\sigma_0)$ for all $j\in\{-1,1\}$.
\end{lemma}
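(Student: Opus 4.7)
The plan is to recognize the update equation \eqref{eq:update_GD_main} as a (perturbed) tensor power iteration separately for $\Lambda_j^{(t)}$ and $\Gamma_{j,i}^{(t)}$, and then compare the two growth rates. First I would establish the initialization bounds. Since $\wb_{j,r}^{(0)}\sim\cN(\boldsymbol{0},\sigma_0^2\Ib)$, the scalar $\la\wb_{j,r}^{(0)},j\vb\ra$ is $\cN(0,\sigma_0^2)$, so with high probability the maximum over the $m=\polylog(n)$ neurons gives $\Lambda_j^{(0)}=\tilde\Theta(\sigma_0)$. For the noise alignment, $\|\bxi_i\|_2^2\approx s\sigma_p^2=\tilde\Theta(1)$, so $\la\wb_{j,r}^{(0)},\bxi_i\ra$ is Gaussian with standard deviation $\tilde\Theta(\sigma_0)$, and maximizing over $r\in[m]$ then over $i\in[n]$ yields $\Gamma_j^{(0)}=\tilde\Theta(\sigma_0)$. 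Throughout Stage I, as long as both $\Lambda_j^{(t)}$ and $\Gamma_j^{(t)}$ are $O(1)$ the network output is $O(1)$, hence the logits $\ell_{j,i}^{(t)}=\Theta(1)$.

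For feature learning, I would track the leading neuron $r_j^\star=\arg\max_{r}\la\wb_{j,r}^{(0)},j\vb\ra$. Because the truncated polynomial activation zeroes out neurons with negative alignment, and the only non-negligible contribution to \eqref{eq:update_GD_main} comes from the $\sim n/2$ examples with $y_i=j$ (the $\alpha$-cross-term scales as $O(\alpha\Gamma^{q-1})$ and the weight-decay term as $O(\eta\lambda\sigma_0)$, both dominated by the main term throughout Stage I), the update reduces to
\begin{align*}
    \Lambda_j^{(t+1)} \ge \Lambda_j^{(t)} + \Theta(\eta)\cdot \big(\Lambda_j^{(t)}\big)^{q-1}.
\end{align*}
This is a classical tensor power iteration in the spirit of \citet{anandkumar2017analyzing}. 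The continuous-time surrogate $\dot\Lambda=\Theta(\eta)\Lambda^{q-1}$ integrates to $\Lambda^{2-q}(t)=\Lambda^{2-q}(0)-\Theta(\eta(q-2))t$, so the hitting time to $\Theta(1/m)=\tilde\Theta(1)$ is $T_j=\tilde\Theta(\sigma_0^{2-q})$ (the $1/\eta$ being absorbed as in the Adam analysis in Lemma \ref{lemma:signGD1_main}). A parallel induction would verify rank preservation: the neuron $r_j^\star$ keeps its lead over the other $m-1$ neurons throughout.

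For the noise bound I apply the same analysis to $\Gamma_{y_i,i}^{(t)}$. By Lemma \ref{lemma:nonoverlap_probability_main} the supports $\cB_i$ are essentially disjoint, so no other sample $\bxi_s$ couples significantly to $\bxi_i$ through $\wb_{y_i,r}$, and the noise line of \eqref{eq:update_GD_main} collapses to $\frac{\eta}{n}\ell_{y_i,i}^{(t)}\sigma'(\Gamma_{y_i,i}^{(t)})\|\bxi_i\|_2^2$, while the remaining $\alpha$-terms are lower order. This yields
\begin{align*}
    \Gamma_{y_i,i}^{(t+1)} \le \Gamma_{y_i,i}^{(t)} + \tilde\Theta(\eta/n)\cdot \big(\Gamma_{y_i,i}^{(t)}\big)^{q-1}.
\end{align*}
The growth coefficient is smaller than the feature one by a factor $\tilde\Theta(1/n)$ because only \emph{one} training example contributes to each $\Gamma_{y_i,i}^{(t)}$, and because $\|\bxi_i\|_2^2=\tilde\Theta(1)$ is no larger than $\|\vb\|_2^2=1$. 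Integrating the same power iteration, $\Gamma^{2-q}(T_0)=\sigma_0^{2-q}-\tilde\Theta(\eta T_0/n)=\sigma_0^{2-q}(1-\tilde O(1/n))$, hence $\Gamma_j^{(T_0)}=\tilde O(\sigma_0)$.

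The hardest step will be carrying out a coupled induction that simultaneously maintains (a) the logit bound $\ell_{j,i}^{(t)}=\Theta(1)$, (b) the leading-neuron property for the feature direction, and (c) the small noise bound $\Gamma_{y_i,i}^{(t)}=\tilde O(\sigma_0)$ — any one of these failing would contaminate the clean tensor power iteration picture for the others. In particular, guaranteeing (a) requires a network-level bound on $F_j(\Wb^{(t)},\xb_i)$ across all neurons (not just the leading ones) and for both the feature and noise patches, which is where most of the case analysis and the tight choice of $T_0$ will be needed.
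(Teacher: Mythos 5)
Your proposal is correct and follows essentially the same route as the paper's proof (Lemma~\ref{lemma:convergence_GD_stage1} in the appendix): establish the Gaussian initialization scales $\Lambda_j^{(0)},\Gamma_j^{(0)}=\tilde\Theta(\sigma_0)$, show the logits stay $\Theta(1)$ in Stage~I, reduce the update~\eqref{eq:update_GD_main} to the pair of tensor-power recursions $\Lambda_j^{(t+1)}\ge\Lambda_j^{(t)}+\Theta(\eta)(\Lambda_j^{(t)})^{q-1}$ and $\Gamma_j^{(t+1)}\le\Gamma_j^{(t)}+\tilde\Theta(\eta/n)(\Gamma_j^{(t)})^{q-1}$, and exploit the factor-$n$ gap in growth coefficients. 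The one stylistic difference is that you integrate a continuous-time surrogate to get the hitting time $\eta T_j=\tilde\Theta(\sigma_0^{2-q})$ and the bound $\Gamma_j^{(T_0)}=\tilde O(\sigma_0)$, whereas the paper proves a discrete comparison lemma (Lemma~\ref{lemma:tensor_power}, building on Claim~C.20 of \citet{allen2020towards}) that handles both hitting time and the slow-sequence bound via an induction $y_t\le y_0+t\eta B(2x_0)^{q-1}$; the discrete version is what you'd need to make the argument rigorous, but the scaling you obtain is the same. Also note that the ``rank preservation'' induction you flag as needed is actually avoidable: the paper defines $\Lambda_j^{(t)}=\max_r\la\wb_{j,r}^{(t)},j\vb\ra$ and re-selects the argmax neuron $r^*$ at each step, so the recursion for $\Lambda_j^{(t)}$ holds (as both a lower and upper bound) without ever tracking which neuron is leading.
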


\paragraph{Stage II: Regularizing the model.}
Similar to Lemma \ref{lemma:signGD_stage3_main}, we show that in the second stage at which the impact of weight decay regularization cannot be ignored, the pattern of the training data learned in the first stage will remain unchanged. 
\begin{lemma}\label{lemma:GD_stage3_main}
If $\eta\le O(\sigma_0)$, it holds that $\Lambda_j^{(t)}=\tilde\Theta(1)$ and $\Gamma_j^{(t)}=\tilde O(\sigma_0)$ for all $t\ge \min_j T_j$.
\end{lemma}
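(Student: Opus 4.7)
The plan is to extend Lemma \ref{lemma:GD_stage1_main} into the regularization stage by a coupled induction that simultaneously tracks $\Lambda_j^{(t)}$ and $\Gamma_j^{(t)}$ across all neurons $r$ and classes $j\in\{-1,1\}$. The base case is provided by Lemma \ref{lemma:GD_stage1_main} at $t=\min_j T_j$: at least one class has $\Lambda_j^{(t)}=\tilde\Theta(1)$, and all noise correlations satisfy $\Gamma_j^{(t)}=\tilde O(\sigma_0)$. For the lagging class during the transition period $t\in[\min_j T_j,\max_j T_j]$ the stage-I analysis of Lemma \ref{lemma:GD_stage1_main} continues to apply verbatim, so the only new work is to show that once $\Lambda_j^{(t)}$ hits $\tilde\Theta(1)$ it stays there, and $\Gamma_j^{(t)}$ never escapes the $\tilde O(\sigma_0)$ neighborhood.

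First I would maintain the noise bound $\Gamma_j^{(t)}=\tilde O(\sigma_0)$ using the update equation \eqref{eq:update_GD_main}. The key quantitative observation is that when $\langle\wb_{y_i,r}^{(t)},\bxi_i\rangle=\tilde O(\sigma_0)$, the truncated polynomial derivative $\sigma'(\langle\wb_{y_i,r}^{(t)},\bxi_i\rangle)=\tilde O(\sigma_0^{q-1})$ is vanishingly small because $q\ge 3$ and $\sigma_0=\Theta(d^{-1/4})$. Combined with $\|\bxi_i\|_2^2=\tilde O(s\sigma_p^2)\ll 1$ and $\ell_{y_i,i}^{(t)}\in(0,1)$, the positive gradient contribution is dominated by the weight-decay shrinkage factor $(1-\eta\lambda)$, while the feature-noise cross term involving $\alpha\sigma'(\langle\wb_{y_i,r}^{(t)},y_s\vb\rangle)$ is controlled by the smallness of $\alpha=\tilde\Theta(\sigma_p)$ together with cancellations between the $\pm 1$ labels.

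Next I would establish $\Lambda_j^{(t)}=\tilde\Theta(1)$ from \eqref{eq:update_GD_main}. The two competing forces are the positive gradient term, of order $\eta\cdot\frac{1}{n}\sum_i\ell_{j,i}^{(t)}\sigma'(\Lambda_j^{(t)})$, and the weight-decay shrinkage $\eta\lambda\Lambda_j^{(t)}$. An upper bound on $\Lambda_j^{(t)}$ follows because once it is too large the softmax logit $\ell_{j,i}^{(t)}$ decays rapidly in $\Lambda_j^{(t)}$ and is overtaken by weight decay; a lower bound follows because, as soon as $\Lambda_j^{(t)}$ drops, $\ell_{j,i}^{(t)}$ becomes $\Theta(1)$ again and the gradient push dominates. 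Crucially, the feature-noise contribution $\alpha\sigma'(\langle\wb_{j,r}^{(t)},\bxi_i\rangle)=\tilde O(\alpha\sigma_0^{q-1})$ is negligible thanks to the $\Gamma_j^{(t)}$ bound maintained in the previous step, which is precisely why gradient descent avoids the sign-flip that Lemma \ref{lemma:signGD2_main} identified for Adam. The step-size hypothesis $\eta\le O(\sigma_0)$ is what prevents a single update from overshooting either bound.

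The main obstacle will be closing the coupled induction: the bound on $\Gamma_j^{(t)}$ is used to neutralize the feature-noise term in the update for $\Lambda_j^{(t)}$, and symmetrically a lower bound on $\Lambda_j^{(t)}$ (yielding a small but controlled $\ell_{j,i}^{(t)}$) is needed to keep the cross terms in the update for $\Gamma_j^{(t)}$ in check. Propagating both bounds simultaneously for $\poly(n)/\eta$ iterations without accumulated error requires a tensor-power-iteration style comparison in the spirit of \citet{anandkumar2017analyzing}: one must argue that, although both quantities sit near a moving equilibrium, the two invariants are strictly separated by a gap of order $\tilde\Theta(1)$ versus $\tilde O(\sigma_0)$ that the dynamics cannot close, since every attempt by $\Gamma_j^{(t)}$ to grow is killed by the $\sigma'(\tilde O(\sigma_0))=\tilde O(\sigma_0^{q-1})$ factor before weight decay even enters the picture.
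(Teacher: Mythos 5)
Your high-level plan — a coupled induction maintaining $\Lambda_j^{(t)}=\tilde\Theta(1)$ and $\Gamma_j^{(t)}=\tilde O(\sigma_0)$ simultaneously, with each invariant feeding the verification of the other — is exactly the structure of the paper's proof (Lemma~\ref{lemma:GD_stage3}), and your discussion of the upper and lower bounds on $\Lambda_j^{(t)}$ via the softmax logit matches the argument in the appendix. However, the mechanism you invoke to keep $\Gamma_j^{(t)}$ small has a genuine gap.

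You claim that because $\sigma'(\Gamma_j^{(t)})=\tilde O(\sigma_0^{q-1})$ and $\|\bxi_i\|_2^2=\tilde O(s\sigma_p^2)$, ``the positive gradient contribution is dominated by the weight-decay shrinkage factor $(1-\eta\lambda)$.'' This pointwise domination is false in the regime of the paper's parameters. When the logits $\ell_{j,i}^{(t)}$ are order one (which happens at the start of the regularization stage, near $t=T_{-1}$), the per-step gradient push on $\Gamma_j^{(t)}$ is of order $\eta\, s\sigma_p^2\,\sigma_0^{q-1}$, while the weight-decay shrinkage is $\eta\lambda\,\Gamma_j^{(t)}=\tilde\Theta(\eta\lambda\sigma_0)$. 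Domination would require $s\sigma_p^2\sigma_0^{q-2}\le\lambda$, but the parameter constraint in the paper's preliminaries is $\lambda=o\!\big(\sigma_0^{q-2}\sigma_p/n\big)$, and since $s\sigma_p=\omega(1)\gg 1/n$ this gives $\lambda\ll s\sigma_p^2\sigma_0^{q-2}$ — so the gradient push actually exceeds the shrinkage during the transient. Your later remark that growth is ``killed by the $\sigma'(\tilde O(\sigma_0))$ factor before weight decay even enters the picture'' concedes this, but slow growth is not bounded growth: over $T=\poly(n)/\eta$ iterations an unchecked slow drift could still escape $\tilde O(\sigma_0)$.

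What closes the induction in the paper is a \emph{cumulative comparison} to $\sum_j\Lambda_j^{(t)}$, not a per-step weight-decay argument. One first establishes the recursion $\Lambda_j^{(t+1)}\ge(1-\lambda\eta)\Lambda_j^{(t)}+\tilde\Theta(\eta/n)\sum_{i:y_i=j}|\ell_{j,i}^{(t)}|-\tilde\Theta(\alpha^q\eta)\cdot\frac{1}{n}\sum_i|\ell_{j,i}^{(t)}|$, sums it over $j\in\{-1,1\}$, and unfolds in $t$; since $\sum_j\Lambda_j^{(t)}$ is kept at $\tilde\Theta(1)$ by the softmax saturation argument you correctly sketch, the discounted cumulative logit mass $\tilde\Theta(\eta/n)\sum_{\tau}(1-\lambda\eta)^\tau\sum_i|\ell_{j,i}^{(t-\tau)}|$ is bounded by $\tilde\Theta(1)$. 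The same discounted sum appears verbatim in the unfolded recursion for $\Gamma_j^{(t)}$, but multiplied by the additional factor $s\sigma_p^2\sigma_0^{q-1}$ coming from $\|\bxi_i\|_2^2$ and $\sigma'(\Gamma_j)$. Hence $\Gamma_j^{(t)}\le\Gamma_j^{(T_{-1})}+\tilde\Theta\big(s\sigma_p^2\sigma_0^{q-1}\big)=\tilde O(\sigma_0)$, using $s\sigma_p^2\sigma_0^{q-2}=\tilde O(1)$. This comparison — matching the two discounted sums and exploiting the gap in their prefactors — is the missing ingredient in your proposal; weight decay plays only the supporting role of making the sums geometrically discounted, not of dominating the gradient push.
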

The following lemma further shows that within polynomial steps, gradient descent is guaranteed to find a point with small gradient.
\begin{lemma}\label{lemma:convergence_gurantee_GD}
If the learning rate satisfies $\eta = o(1)$, then for any $t\ge 0$ it holds that
\begin{align*}
L(\Wb^{(t+1)})-L(\Wb^{(t)})\le -\frac{\eta}{2}\|\nabla L(\Wb^{(t)})\|_F^2.
\end{align*}
\end{lemma}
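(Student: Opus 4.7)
The plan is to derive the descent inequality from the classical smoothness-based argument: if $L$ is $\beta$-smooth on a convex set containing both $\Wb^{(t)}$ and $\Wb^{(t+1)}$, then
\begin{equation*}
L(\Wb^{(t+1)}) \le L(\Wb^{(t)}) + \la \nabla L(\Wb^{(t)}), \Wb^{(t+1)} - \Wb^{(t)}\ra + \frac{\beta}{2}\|\Wb^{(t+1)}-\Wb^{(t)}\|_F^2,
\end{equation*}
and substituting the GD update $\Wb^{(t+1)} = \Wb^{(t)} - \eta\nabla L(\Wb^{(t)})$ gives $L(\Wb^{(t+1)}) - L(\Wb^{(t)}) \le -\eta(1 - \eta\beta/2)\|\nabla L(\Wb^{(t)})\|_F^2$. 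Taking $\eta$ small enough that $\eta\beta \le 1$ yields exactly the claimed factor of $1/2$. The entire proof therefore reduces to exhibiting a Lipschitz constant $\beta$ for $\nabla L$ that is valid along the entire trajectory.

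I would carry out a joint induction on $t$ maintaining simultaneously that (i) $L(\Wb^{(t)}) \le L(\Wb^{(0)})$ and (ii) the desired descent inequality holds at step $t$. The base case $t=0$ is trivial. Under (i), the weight decay term immediately gives $\|\Wb^{(t)}\|_F^2 \le 2 L(\Wb^{(t)})/\lambda \le 2 L(\Wb^{(0)})/\lambda =: R^2$, and $R = \poly(n,d)$ because $\sigma_0 = \Theta(d^{-1/4})$ produces only bounded logits and hence a bounded cross-entropy loss at initialization. On the enlarged ball $\{\Wb : \|\Wb\|_F \le R+1\}$ I would bound $\|\nabla^2 L(\Wb)\|_{\mathrm{op}}$ by a direct chain-rule computation: the only non-linearity is the activation $\sigma(z) = [z]_+^q$ with $|\sigma'(z)| \le q|z|^{q-1}$ and $|\sigma''(z)| \le q(q-1)|z|^{q-2}$; the data patches satisfy $\|\vb\|_2 = 1$ and $\|\bxi_i\|_2 = \tilde O(\sqrt{s}\sigma_p) = \tilde O(1)$ with high probability by Definition \ref{def:data_distribution}; the outer cross-entropy derivatives are bounded by $1$; and the regularizer contributes $\lambda$ to the spectrum of the Hessian. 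Combining these gives a uniform $\beta = \poly(n,d)$ over the ball.

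With $\eta = o(1)$ chosen so that $\eta \le 1/\beta$, the gradient $\nabla L(\Wb^{(t)})$ has Frobenius norm at most $\poly(n,d)$ on the ball, so the single step $\Wb^{(t+1)} - \Wb^{(t)} = -\eta\nabla L(\Wb^{(t)})$ has Frobenius norm $o(1) \le 1$ and $\Wb^{(t+1)}$ stays in the enlarged ball where smoothness applies. The descent inequality then follows, which in turn yields $L(\Wb^{(t+1)}) \le L(\Wb^{(t)}) \le L(\Wb^{(0)})$, closing the induction. The main obstacle is precisely this coupling between smoothness (which needs a weight-norm bound) and descent (which certifies the weight-norm bound via the regularizer); the joint induction above is the standard device for dissolving the circularity. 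The Hessian computation itself is a lengthy bookkeeping exercise because of the polynomial activation, but every factor entering $\beta$ remains polynomial: the width is $m = \polylog(n)$, the input has only two patches with bounded $\ell_2$ norm, and the regularization parameter $\lambda$ is only polynomially small in $n,d$, so the required step size $\eta = 1/\poly(n)$ is consistent with the hypothesis $\eta = o(1)$.
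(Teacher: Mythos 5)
Your proof is correct but takes a genuinely different route from the paper's. The paper establishes smoothness \emph{locally} by invoking the trajectory analysis (Lemma \ref{lemma:GD_stage3}, and implicitly Lemma \ref{lemma:convergence_GD_stage1} for the earlier iterations), which guarantees $\la\wb_{j,r}^{(t)},y_i\vb\ra = \tilde O(1)$ and $\la\wb_{j,r}^{(t)},\bxi_i\ra = \tilde O(1)$ along the actual trajectory; this yields a smoothness constant of order $\tilde\Theta(1)$, so the condition $\eta = o(1)$ (modulo polylog factors) suffices to absorb the second-order Taylor error $\tilde\Theta(\eta^2 \|\nabla L\|_F^2)$ and obtain the $\eta/2$ factor. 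You instead bypass the trajectory analysis entirely and use only the weight-decay regularizer to certify a Frobenius-norm bound $\|\Wb^{(t)}\|_F \le R = \Theta(\sqrt{L(\Wb^{(0)})/\lambda}) = \poly(n,d)$ via the joint induction on $t$; this gives a cruder global smoothness constant $\beta = \poly(n,d)$, and correspondingly forces $\eta \le 1/\beta = 1/\poly(n,d)$ rather than merely $\eta = o(1)$. The trade-off is real: your argument is more elementary and self-contained (it needs no information about what the iterates are actually doing), while the paper's argument is tighter in its step-size requirement but is downstream of the detailed stage-wise lemmas. Both are consistent with the theorem's setting, since Theorem \ref{thm:nonconvex} already fixes $\eta = 1/\poly(n)$ of sufficiently high degree, so the extra polynomial factors you incur in $\beta$ are harmless. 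One small point worth tightening in your writeup: the hypothesis as stated in the lemma is ``$\eta = o(1)$'', which your proof does not literally satisfy; you should say explicitly that you are proving the lemma under the operative step size $\eta = 1/\poly(n)$ used elsewhere in the theorem, which is the honest reading of the statement in context.
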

Lemma \ref{lemma:convergence_gurantee_GD} shows that we can pick a sufficiently small $\eta$ and $T = \poly(n)/\eta$ to ensure that gradient descent can find a point with up to $O(1/(T\eta))$ in $\ell_2$ norm. By Lemma \ref{lemma:GD_stage3_main}, it is clear that the output model of GD can well learn the feature vector while memorizing nearly nothing from the noise vectors, which can therefore achieve nearly zero test error.

\section{Experiments}
In this section we  perform numerical experiments on the synthetic data generated according to Definition \ref{def:data_distribution}  to verify our main results. In particular, we set the problem dimension $d=1000$, the training sample size $n=200$ ($100$ positive examples and $100$ negative examples), feature vector $\vb = [1,0,\dots,0]^\top$, noise sparsity $s= 0.1d = 100$, standard deviation of noise $\sigma_p=1/s^{1/2}=0.1$, feature noise strength $\alpha=0.2$, initialization scaling $\sigma_0=0.01$,  regularization parameter $\lambda=1\times 10^{-5}$, network width $m=20$, activation function $\sigma(z) = \max\{0,z\}^3$, total iteration number $T=1\times 10^4$, and the learning rate $\eta = 5\times 10^{-5}$ for Adam (default choices of $\beta_1$ and $\beta_2$ in pytorch), $\eta = 0.02$ for GD. 

We first report the training error and test error achieved by the solutions found by SGD and Adam in Table \ref{tab:error_comparison}, where the test error is calculated on a test dataset of size $10^4$. It is clear that both Adam and SGD can achieve zero training error, while they have entirely different results on the test data: SGD generalizes well and achieve zero test error; Adam generalizes worse than SGD and gives $>0.5$ test error, which verifies our main result (Theorem \ref{thm:nonconvex}). 
\begin{table*}[!h]

	\begin{center}
		\begin{tabular}{ccc}
			\toprule
			Algorithm&Adam&SGD\\
			\midrule
			Training error & 0 &0\\
			Test error & 0.884 & 0\\
			\bottomrule
		\end{tabular}
	
	\end{center}
	\caption{Training error and test error achieved by the solutions found by GD and Adam. 	\label{tab:error_comparison}}	
\end{table*}

Moreover, we also calculate the inner products: $\max_r \la\wb_{1,r},\vb\ra$ and $\min_i\max_r\la\wb_{1,r},\bxi_i\ra$, representing feature learning and noise memorization respectively, to verify our key lemmas. Here we only consider positive examples as the results for negative examples are similar. The results are reported in Figure \ref{fig:pattern_training_process}. For Adam, from Figure \ref{fig:adam_pattern}, it can be seen that the algorithm will perform feature learning in the first few iterations and then entirely forget the feature (but fit feature noise), i.e., the feature learning is flipped, which verifies Lemma \ref{lemma:signGD2_main} (Lemma \ref{lemma:signGD2}). In the meanwhile, the noise memorization happens in the entire training process and enjoys much faster rate than feature learning, which verifies Lemma \ref{lemma:signGD1_main} (Lemma \ref{lemma:signGD1}). In addition, we can also observe that there are two stages for the increasing of $\min_i\max_r\la\wb_{1,r},\bxi_i\ra$: in the first stage $\min_i\max_r\la\wb_{1,r},\bxi_i\ra$ increases linearly, and in the second stage its increasing speed gradually slows down and $\min_i\max_r\la\wb_{1,r},\bxi_i\ra$ will remain in a constant order. This verifies Lemma \ref{lemma:signGD1_main} (Lemma \ref{lemma:signGD1}) and Lemma \ref{lemma:signGD_stage3_main} (Lemma \ref{lemma:signGD_stage3}). For GD, from Figure \ref{fig:gd_pattern}, it can be seen that the feature learning will dominate the noise memorization: feature learning will increases to a constant in the first stage and then remains in a constant order in the second stage; noise memorization will keep in a low level which is  nearly the same as that at the initialization. This verifies Lemmas \ref{lemma:GD_stage1_main} and \ref{lemma:GD_stage3_main} (Lemmas \ref{lemma:convergence_GD_stage1} and \ref{lemma:GD_stage3}).

\begin{figure}[!t]
\vskip -0.1in
     \centering
     \subfigure[Adam]{\includegraphics[width=0.45\textwidth]{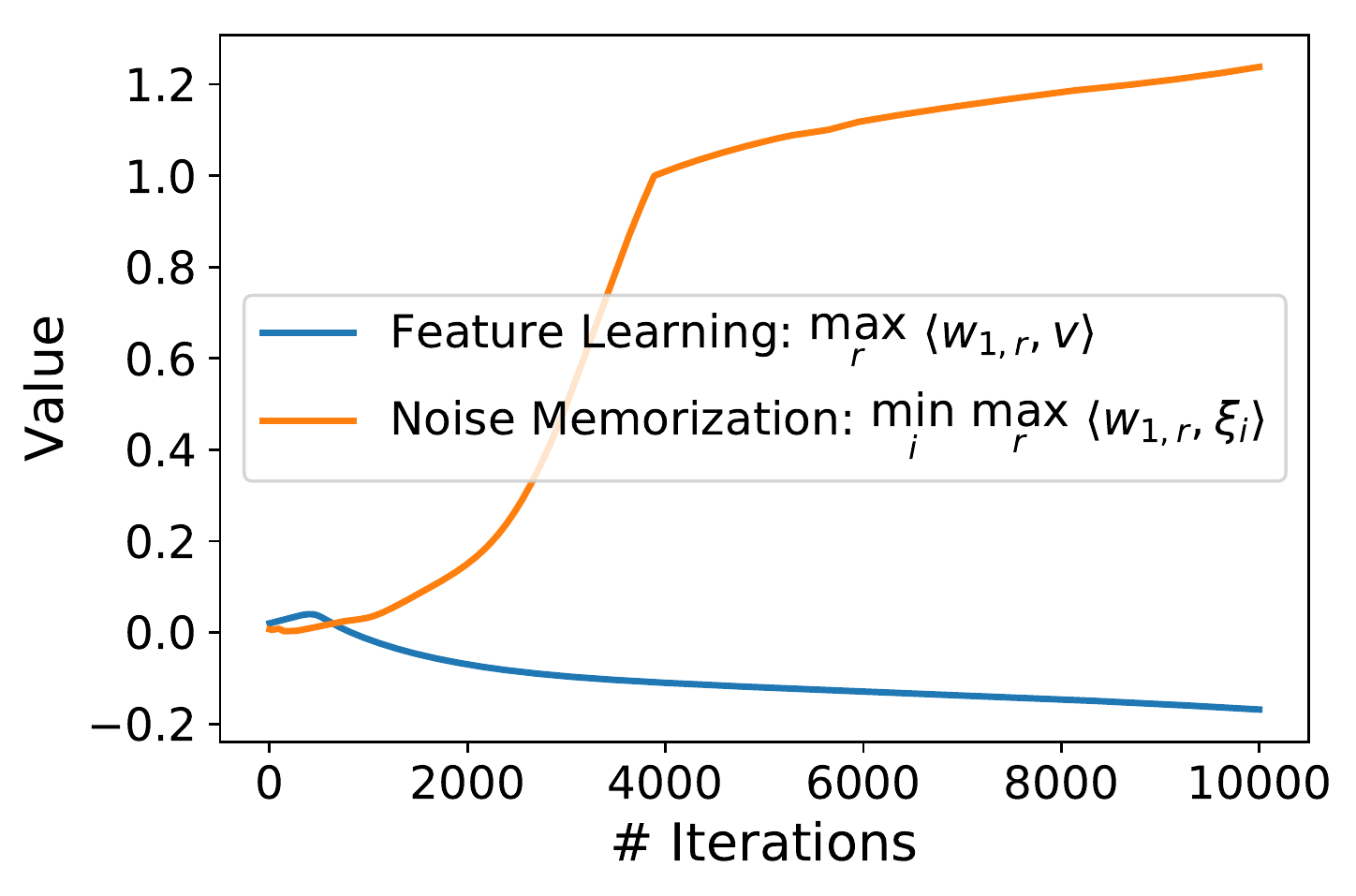}\label{fig:adam_pattern}}
      \subfigure[GD]{\includegraphics[width=0.45\textwidth]{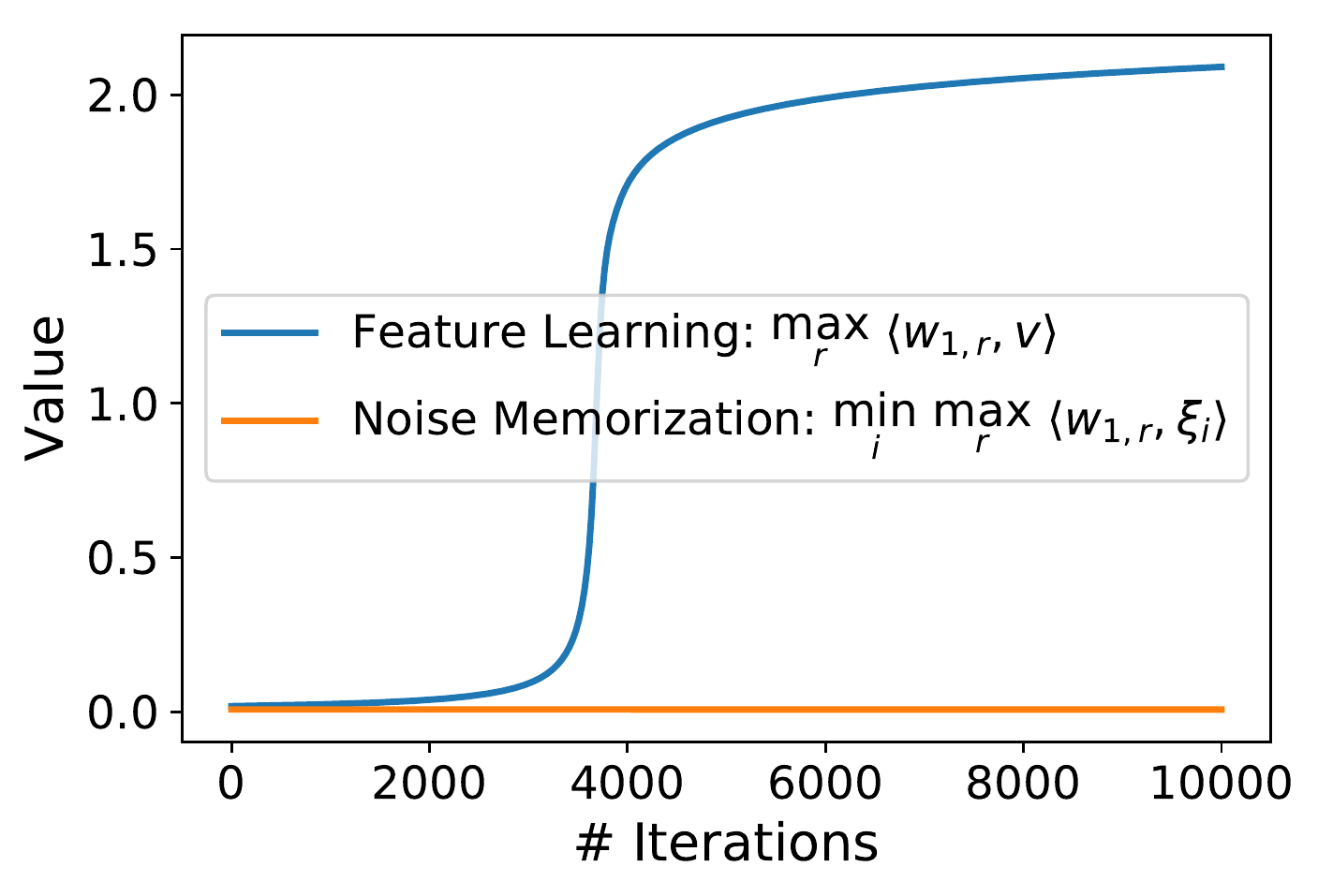}\label{fig:gd_pattern}}
      \vskip -0.1in
    \caption{Visualization of the feature learning ($\max_r \la\wb_{1,r},\vb\ra$) and noise memorization ($\min_i\max_r\la\wb_{1,r},\bxi_i\ra$) in the  training process.}
    \label{fig:pattern_training_process}
\end{figure}

\section{Conclusion and Future Work}
In this paper, we study the generalization of Adam and compare it with gradient descent. We show that when training neural networks, Adam and GD starting from the same initialization can converge to different global solutions of the training objective with significantly different generalization errors, even with proper regularization. Our analysis reveals the fundamental difference between Adam and GD in learning features, and demonstrates that this difference is tied to the nonconvex optimization landscape of neural networks. 

Built up on the results in this paper, there are several important research directions. First, our current result is for two-layer networks. Extending the results to deep networks could be an immediate next step. Second, our current data model is motivated by the image data, where Adam has been observed to perform worse than SGD in terms of generalization. Studying other types of data such as natural language data, where Adam is often observed to perform better than SGD, is another future work direction.

%  in the nonconvex setting of learning over-parameterized two-layer convolutional neural networks starting from the same random initialization, Adam and gradient descent (GD) can converge to different global solutions of the training objective with different generalization error, even with weight decay regularization.  
%     %We demonstrate that because of the nonconvex landscape, Adam can converge to a different global solution than the gradient descent, even with weight decay regularization. 
%     In contrast, we show that if the training objective is convex, and the weight decay regularization is employed, any optimization algorithms including Adam and GD will converge to the same solution if the training is successful. This suggests that the inferior generalization performance of Adam is fundamentally tied to the nonconvex landscape of deep learning optimization. 
% \bibliographystyle{ims}
% \bibliography{refs}

\appendix
% \newpage
\section{Proof of Theorem \ref{thm:nonconvex}: Nonconvex Case}
In the beginning of the proof we first present the following useful lemma.
\subsection{Preliminaries}

We first recall the magnitude of all parameters:
\begin{align*}
&d = \poly(n),\  \eta = \frac{1}{\poly(n)},\ s = \Theta\bigg(\frac{d^{1/2}}{n^2}\bigg), \ \sigma_p^2 = \Theta\bigg(\frac{1}{s\cdot\polylog(n)}\bigg),\ \sigma_0^2 = \Theta\bigg(\frac{1}{d^{1/2}}\bigg),\notag\\
&m = \polylog(n),\ \alpha = \Theta\big(\sigma_p\cdot\polylog(n)\big), \ \lambda = O\bigg(\frac{1}{d^{(q-1)/4}n\cdot\polylog(n)}\bigg).
\end{align*}
Here $\poly(n)$ denotes a polynomial function of $n$ with degree of a sufficiently large constant, $\poly(n)$ denotes a polynomial function of $\log(n)$ with degree of a sufficiently large constant. 
Based on the parameter configuration, we claim that the following equations hold, which will be frequently used in the subsequent proof.
\begin{align*}
\lambda = o\bigg(\frac{\sigma_0^{q-2}\sigma_p}{n}\bigg),\ \alpha = \omega\big((s\sigma_p)^{1-q}\sigma_0^{q-1}\big),\ \sigma_0=o\bigg(\frac{1}{s\sigma_p}\bigg), \alpha = o\bigg(\frac{s\sigma_p^2}{n }\bigg),\ \eta=o\Big(\lambda\sigma_0^{q}\sigma_p^{q}\Big).
\end{align*}

\begin{lemma}[Non-overlap support]\label{lemma:nonoverlap_probability}
Let $\{(\xb_i,y_i)\}_{i=1,\dots,n}$ be the training dataset sampled according to Definition \ref{def:data_distribution}. %\CC{Suppose that $s \leq \sqrt{d / (2n^4)}$.} 
Moreover, let $\cB_i = \supp(\bxi_i) \backslash\{1\}$ be the support of $\xb_i$ except the first coordinate\footnote{Recall that all data inputs have nonzero first coordinate by Definition \ref{def:data_distribution}}.
Then with probability at least $1 - n^{-2}$, 
$\cB_i \cap \cB_j = \emptyset$ for all $i,j\in [n]$. 
\end{lemma}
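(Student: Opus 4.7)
This is a birthday-paradox-style calculation: each $\cB_i$ is a uniformly random $s$-subset of $[d]\setminus\{1\}$ (independent across $i$), with $s = \Theta(d^{1/2}/n^2)$, so pairwise disjointness should follow from a two-level union bound provided $s^2 n^2 \ll d$.

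The plan is to fix a pair $i\neq j$ in $[n]$ and to first condition on $\cB_i$. Since $\cB_j$ is drawn uniformly at random from the $s$-subsets of $[d]\setminus\{1\}$, for any fixed $k\in\cB_i$ we have $\Pr[k\in\cB_j\mid\cB_i] = s/(d-1)$. A union bound over the $s$ elements of $\cB_i$ then gives
\begin{align*}
\Pr\bigl[\cB_i\cap\cB_j\neq\emptyset\bigr] \;\leq\; \frac{s^2}{d-1}.
\end{align*}
Next I would union-bound over the at most $\binom{n}{2}\leq n^2/2$ ordered pairs to obtain
\begin{align*}
\Pr\bigl[\exists\, i\neq j:\cB_i\cap\cB_j\neq\emptyset\bigr] \;\leq\; \frac{n^2 s^2}{2(d-1)}.
\end{align*}
Finally I would substitute $s = \Theta(d^{1/2}/n^2)$, which yields $n^2 s^2 / (d-1) = \Theta(1/n^2)$, and choose the constant hidden in $s = \Theta(d^{1/2}/n^2)$ small enough so that this bound is at most $n^{-2}$.

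There is essentially no obstacle here beyond bookkeeping; the only thing to be careful about is that Definition \ref{def:data_distribution} specifies that the $s$ coordinates are drawn from $[d]\setminus\{1\}$, so $\cB_i$ already excludes coordinate $1$ and we legitimately work in an ambient set of size $d-1$. The constant in $s = \Theta(d^{1/2}/n^2)$ is a free parameter of the data model, so it can be fixed once and for all to make the $n^{-2}$ bound explicit, without affecting any of the downstream analysis that uses this lemma.
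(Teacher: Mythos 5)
Your proposal is correct and uses the same birthday-paradox union bound as the paper, just with a slightly cleaner bookkeeping: you bound $\Pr[\cB_i\cap\cB_j\neq\emptyset]\leq s^2/(d-1)$ directly for each pair and then union over $\binom{n}{2}$ pairs, whereas the paper nests the union bound over data indices inside a union bound over coordinates and passes through $1-(1-s/(d-1))^n$ before simplifying. Both yield the bound $O(n^2 s^2/d)\leq n^{-2}$ under $s=\Theta(d^{1/2}/n^2)$.
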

\begin{proof}[Proof of Lemma~\ref{lemma:nonoverlap_probability}]
For any fixed $k \in [n]$ and $j \in \supp(\bxi_k) \backslash\{1\}$, by the model assumption we have
\begin{align*}
    \PP\{  (\bxi_i)_j \neq 0 \} = s / (d - 1), 
\end{align*}
for all $i \in [n]\backslash \{k\}$. 
Therefore by the fact that the data samples are independent, we have 
\begin{align*}
    \PP( \exists i\in [n]\backslash \{k\}: (\xi_i)_j \neq 0 ) = 1 - [1 - s/(d-1)]^n.
\end{align*}
Applying a union bound over all $k \in [n]$ and $j \in \supp(\bxi_k) \backslash\{1\}$, we obtain
\begin{align}\label{eq:nonoverlap_probability_proof_eq1}
    \PP( \exists k \in [n], j \in \supp(\bxi_k) \backslash\{1\},  i\in [n]\backslash \{k\}: (\bxi_i)_j \neq 0 ) \leq n\cdot s \cdot\{ 1 - [1 - s/(d-1)]^n \}.
\end{align}
By the data distribution assumption we have $s \leq \sqrt{ d }/(2n^2)$, which clearly implies $s/(d - 1) \leq 1/2$. Therefore we have
\begin{align*}
    n\cdot s \cdot [ 1 - (1 - s/d)^n] & = n\cdot s \cdot \{ 1 - \exp[ n \log(1 - s/(d-1))]\}\\
    & \leq n\cdot s \cdot [ 1 - \exp( n \cdot 2 s/(d-1))] \\
    & \leq n\cdot s \cdot [ 1 - \exp( n \cdot 4 s/d)] \\
    &\leq n\cdot s \cdot ( 4 n s/d)\\
    &= 4n^2 s^2 / d\\
    &\leq n^{-2},
\end{align*}
where the first inequality follows by the inequalities $\log(1 - z) \geq -2z$ for $z \in [0,1/2]$, the second inequality follows by $s/(d-1) \geq 2s/d$, the third inequality follows by the inequality $1 - \exp(-z) \leq z$ for $z\in \RR$, and the last inequality follows by the assumption that $s \leq \sqrt{ d }/(2n^2)$. 
Plugging the bound above into \eqref{eq:nonoverlap_probability_proof_eq1} finishes the proof.

\end{proof}

\subsection{Proof for Adam}

In this subsection we first provide the following lemma that  shows for most of the coordinate (with slightly large gradient), the Adam update is similar to signGD update (up to some constant factors). In the remaining proof for Adam, we will largely apply this lemma to get a signGD-like result for Adam (similar to the technical lemmas in Section \ref{sec:proof_main}). Besides, the proofs for all lemmas in Section \ref{sec:proof_main} can be viewed as a simplified version of the proofs for technical lemmas for Adam, thus are omitted in the paper.

\begin{lemma}[Closeness to SignGD] \label{lemma:closeness}
Recall the update rule of Adam, let $\Wb^{(t)}$ be the $t$-th iterate of the Adam algorithm. Suppose that $\la \wb_{j,r}^{(t)}, \vb\ra, \la \wb_{j,r}^{(t)}, \bxi_i\ra = \tilde\Theta(1)$ for all $j\in \{\pm1\}$ and $r\in[m]$. Then if $\beta_2\ge\beta_1^2$, we have 
\begin{itemize}
    \item For all $k\in[d]$, 
    \begin{align*}
    \bigg|\frac{\mb_{j,r}^{(t)}[k]}{\sqrt{\vb_{j,r}^{(t)}[k]}}\bigg| \le \Theta(1).
    \end{align*}
    \item For every $k\notin \cup_{i=1}^n\cB_i$ (including $k=1$) we have either $|\nabla_{\wb_{j,r}}L(\Wb^{(t)})[k]|\le \tilde\Theta(\eta)$ or
\begin{align*}
\frac{\mb_{j,r}^{(t)}[k]}{\sqrt{\vb_{j,r}^{(t)}[k]}} = \sgn\big(\nabla_{\wb_{j,r}}L(\Wb^{(t)})[k] \big)\cdot\Theta(1).
\end{align*}
\item For every $k\in\cB_i$, we have $|\nabla_{\wb_{j,r}}L(\Wb^{(t)})[k]|\le  \tilde\Theta\big(\eta n^{-1} s\sigma_p|\ell_{j,i}^{(t)}|\big)\le\tilde\Theta(\eta s\sigma_p)$ or
\begin{align*}
\frac{\mb_{j,r}^{(t)}[k]}{\sqrt{\vb_{j,r}^{(t)}[k]}} = \sgn\big(\nabla_{\wb_{j,r}}L(\Wb^{(t)})[k] \big)\cdot\Theta(1).
\end{align*}
\end{itemize}
\end{lemma}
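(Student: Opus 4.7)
I would begin by unrolling the Adam recursions \eqref{eq:adam_def_eq1}--\eqref{eq:adam_def_eq2} as
\[
\mb_{j,r}^{(t)}[k] \;=\; (1-\beta_1)\sum_{s=0}^{t-1}\beta_1^{t-1-s}\,g_s[k],\qquad
\vb_{j,r}^{(t)}[k] \;=\; (1-\beta_2)\sum_{s=0}^{t-1}\beta_2^{t-1-s}\,g_s[k]^2,
\]
where $g_s[k]:=\nabla_{\wb_{j,r}}L(\Wb^{(s)})[k]$, and then treat the three bullets separately. The universal upper bound (first bullet) is the standard Adam estimate: the weighted Cauchy--Schwarz inequality gives
\[
|\mb_{j,r}^{(t)}[k]|^2 \;\le\; (1-\beta_1)^2\Bigl(\sum_{s}\beta_1^{2(t-1-s)}/\beta_2^{t-1-s}\Bigr)\Bigl(\sum_{s}\beta_2^{t-1-s}\,g_s[k]^2\Bigr),
\]
and the hypothesis $\beta_2\ge\beta_1^2$ turns the first factor into a convergent geometric series of $\Theta(1)$ magnitude; dividing by $\vb_{j,r}^{(t)}[k]$ delivers $|\mb_{j,r}^{(t)}[k]|/\sqrt{\vb_{j,r}^{(t)}[k]}\le\Theta(1)$ uniformly in $k$.

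For the remaining two bullets I would first estimate $g_t[k]$ under the hypothesis $\la\wb_{j,r}^{(t)},\vb\ra,\la\wb_{j,r}^{(t)},\bxi_i\ra=\tilde\Theta(1)$, which forces $\sigma'(\la\wb_{j,r}^{(t)},\vb\ra)$ and $\sigma'(\la\wb_{j,r}^{(t)},\bxi_i\ra)$ to be $\tilde\Theta(1)$. For $k\notin\cup_i\cB_i$ with $k\ne 1$ the data contribution vanishes entirely, leaving only the regularizer, whose magnitude easily falls below the $\tilde\Theta(\eta)$ noise floor. For $k=1$ the data contribution is a sum over $i$ of $\ell_{j,i}^{(t)}y_i[\sigma'(\la\wb_{j,r}^{(t)},y_i\vb\ra)-\alpha\sigma'(\la\wb_{j,r}^{(t)},\bxi_i\ra)]/n$, which either gives $|g_t[1]|=\tilde\Theta(1)$ with a definite sign, or cancels to bring $|g_t[1]|$ below the noise floor. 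For $k\in\cB_i$, Lemma \ref{lemma:nonoverlap_probability_main} isolates a single datum, so $g_t[k]=-\tfrac{1}{n}\ell_{j,i}^{(t)}\sigma'(\la\wb_{j,r}^{(t)},\bxi_i\ra)\bxi_i[k]+\lambda\wb_{j,r}^{(t)}[k]$, whose sign is $\sgn(\bxi_i[k])$ times the sign of the $\ell_{j,i}^{(t)}\sigma'$ prefactor.

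Once $|g_t[k]|$ exceeds the stated noise floor, the key step is showing that the same sign-and-magnitude estimate persists throughout the effective Adam memory window of length $\Theta(1/(1-\beta_1))$. By the first bullet each per-step update of $\wb_{j,r}$ is of order at most $O(\eta)$ in every coordinate, so each inner product $\la\wb_{j,r}^{(s)},\vb\ra$ or $\la\wb_{j,r}^{(s)},\bxi_i\ra$ drifts by at most $O(\eta/(1-\beta_1))\ll 1$ across the window; consequently $\sigma'(\cdot)$, the logits, and each $\ell^{(s)}_{j,i}\sigma'(\cdot)$ stay within constant factors and keep the same sign they had at step $t$. Substituting this consistency into the unrolled series produces $|\mb_{j,r}^{(t)}[k]|=\Theta(|g_t[k]|)$ and $\sqrt{\vb_{j,r}^{(t)}[k]}=\Theta(|g_t[k]|)$, so the ratio equals $\sgn(g_t[k])\cdot\Theta(1)$ as claimed.

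The main obstacle is exactly the sign-and-magnitude consistency argument above: one has to rule out that either $\ell^{(s)}_{j,i}$ or an activation derivative flips inside the Adam memory window. I would handle this by proving Lemma \ref{lemma:closeness} inductively in $t$, interleaved with the Adam analogues of Lemmas \ref{lemma:signGD1_main}--\ref{lemma:signGD_stage3_main}, which supply the trajectory control on $\wb_{j,r}^{(s)}$ and on the logits over the previous $\Theta(1/(1-\beta_1))$ steps. In this way the hypothesis $\la\wb_{j,r}^{(s)},\cdot\ra=\tilde\Theta(1)$ is guaranteed throughout the relevant window, and Lemma \ref{lemma:closeness} closes on itself.
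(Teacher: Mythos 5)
Your argument follows the same overall strategy as the paper: (i) the weighted Cauchy--Schwarz estimate using $\beta_2\ge\beta_1^2$ for the universal $\Theta(1)$ bound, (ii) a per-coordinate case analysis of $g_t[k]$ isolating the regularizer, the shared first coordinate, and the disjoint supports $\cB_i$, and (iii) a drift argument showing that $\sigma'(\cdot)$ and the logits stay sign-and-magnitude consistent over the relevant history so that the unrolled sums $\mb$ and $\sqrt{\vb}$ track $\sgn(g_t[k])\cdot|g_t[k]|$. This is the paper's proof in essentially the same order.

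The one place where your write-up would break if carried out literally is the choice of the ``effective Adam memory window of length $\Theta(1/(1-\beta_1))$.'' For a constant $\beta_1$ (e.g.\ $0.9$) this window has constant length, and the tail of the exponential sum $\sum_{\tau\ge \bar\tau}\beta_1^{\tau}(1-\beta_1)=\beta_1^{\bar\tau}$ evaluated at $\bar\tau=\Theta(1/(1-\beta_1))$ is $\approx e^{-\Theta(1)}$, a \emph{constant fraction} of the total weight. That tail can include iterates from the distant past (Stage I has length $T_0=\tilde\Theta(1/(\eta s\sigma_p))\gg 1/(1-\beta_1)$) where the gradients were very different, so you cannot yet ``substitute the consistency into the unrolled series.'' The paper fixes this by truncating at $\bar\tau=\polylog(\eta^{-1})$, which simultaneously (a) makes the tail weight $\poly(\eta^{-1})^{-1}$, hence a negligible additive error to both $\mb$ and $\vb$ once compared to the $\tilde\Theta(\eta)$ and $\tilde\Theta(\eta n^{-1}s\sigma_p|\ell_{j,i}^{(t)}|)$ noise floors, and (b) keeps the drift bounds $O(\eta\bar\tau)$, $O(\eta\bar\tau s\sigma_p)$ still $o(1)$. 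You should also note that the drift in $\la\wb_{j,r},\bxi_i\ra$ picks up a factor $s\sigma_p$ (from $\|\bxi_i\|_1$), not just $O(\eta/(1-\beta_1))$ — this is where the $\tilde\Theta(\eta n^{-1}s\sigma_p|\ell_{j,i}^{(t)}|)$ threshold in the third bullet comes from, and your plan currently omits it. With the window lengthened to $\polylog(\eta^{-1})$ and the $s\sigma_p$ factor accounted for, your argument closes exactly as the paper's does.
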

\begin{proof}
First recall that the gradient $\nabla_{\wb_{j,r}}L(\Wb^{(t)})$ can be calculated as 
\begin{align*}
\nabla_{\wb_{j,r}}L(\Wb^{(t)}) &= -\frac{1}{n}\bigg[\sum_{i=1}^n y_i\ell_{j,i}^{(t)} \sigma'(\la \wb_{j,r}^{(t)}, y_i\vb\ra)\cdot \vb+ \sum_{i=1}^n \ell_{j,i}^{(t)}\cdot \sigma'(\la \wb_{j,r}^{(t)}, y_i\bxi_i\ra)\cdot \bxi_i\bigg] + \lambda\wb_{j,r}^{(t)}.
\end{align*}
More specifically, for the first coordinate of $\nabla_{\wb_{j,r}}L(\Wb^{(t)})$, we have
\begin{align}\label{eq:grad_firstcoordinate}
\nabla_{\wb_{j,r}}L(\Wb^{(t)})[1] = -\frac{1}{n}\bigg[\sum_{i=1}^n y_i\ell_{j,i}^{(t)} \sigma'(\la \wb_{j,r}^{(t)}, y_i\vb\ra)- \alpha \sum_{i=1}^n y_i\ell_{j,i}^{(t)}\cdot \sigma'(\la \wb_{j,r}^{(t)}, \bxi_i\ra)\bigg] + \lambda\wb_{j,r}^{(t)}[1].
\end{align}
For any $k\in\cB_i$, by Lemma \ref{lemma:nonoverlap_probability} we know that the gradient over this coordinate only depends on the training data $\bxi_i$, therefore, we have
\begin{align}\label{eq:grad_othercoordinate}
\nabla_{\wb_{j,r}}L(\Wb^{(t)})[k] = -\frac{1}{n} \ell_{j,i}^{(t)} \sigma'(\la \wb_{j,r}^{(t)}, \bxi_i\ra)\bxi_i[k] + \lambda\wb_{j,r}^{(t)}[k].
\end{align}
For the remaining coordinates, we have
\begin{align}\label{eq:grad_restcoordinate}
\nabla_{\wb_{j,r}}L(\Wb^{(t)})[k] =  \lambda\wb_{j,r}^{(t)}[k].
\end{align}

Now let us focus on the moving averaged gradient $\mb_{j,r}^{(t)}$ and squared gradient $\vb_{j,r}^{(t)}$. 
% Note that we have assumed $|\la\wb_{j,r}^{(t)},\bxi_i\ra|=\tilde O(1)$, $|\la\wb_{j,r}^{(t)},\vb\ra|=\tilde O(1)$ and $\wb_{j,r}[k]=\tilde O(1)$ for all $j,r,i,k$. 
We first show that for all $k\in[d]$, it holds that
\begin{align}\label{eq:upperbound_adamgrad}
\frac{\big|\mb_{j,r}^{(t)}[k]\big|}{\sqrt{\vb_{j,r}^{(t)}[k]}}\le \Theta(1).
\end{align}
By the update rule of $\mb_{j,r}^{(t)}$, we have
\begin{align*}
\mb_{j,r}^{(t)}[k] &= \beta_1 \mb_{j,r}^{(t-1)}[k] + (1-\beta_1)\cdot\nabla_{\wb_{j,r}}L(\Wb^{(t)})[k]\notag\\
&= \sum_{\tau=0}^{t}\beta_1^\tau(1-\beta_1)\cdot\nabla_{\wb_{j,r}}L(\Wb^{(t-\tau)})[k].
\end{align*}
Similarly, we also have
\begin{align*}
\vb_{j,r}^{(t)}[k]&= \sum_{\tau=0}^{t}\beta_2^\tau(1-\beta_2)\cdot\nabla_{\wb_{j,r}}L(\Wb^{(t-\tau)})[k]^2.
\end{align*}
Then by Cauchy-Schwartz inequality we have
\begin{align*}
\big(\mb_{j,r}^{(t)}[k]\big)^2 \le \bigg(\sum_{\tau=0}^{t}\frac{[\beta_1^\tau(1-\beta_1)]^2}{\alpha_\tau^2}\cdot\nabla_{\wb_{j,r}}L(\Wb^{(t-\tau)})[k]^2\bigg)\cdot\bigg(\sum_{\tau=0}^{t}\alpha_\tau^2\bigg).
\end{align*}
Let $\alpha_\tau^2 = \frac{[\beta_1^\tau(1-\beta_1)]^2}{\beta_2^\tau (1-\beta_2)}$, which forms an exponentially decaying sequence if $\beta_2\ge\beta_1^2$. Therefore, we have $\sum_{\tau=0}^{t}\alpha_\tau^2=\Theta(1)$ and the above inequality implies that
\begin{align*}
\big(\mb_{j,r}^{(t)}[k]\big)^2 \le \vb_{j,r}^{(t)}[k]\cdot\Theta(1),
\end{align*}
which proves \eqref{eq:upperbound_adamgrad}. 

Now we are going to prove the main argument of this lemma. 
Note that  $\mb_{j,r}^{(t)}$, which is a weighted average of all historical gradients, where the weights decay exponentially fast, then we can take on a threshold $\bar\tau = \polylog(\eta^{-1})$ such that $\sum_{\tau = \bar\tau}^{t}\beta_1^\tau(1-\beta_1) = \frac{1}{\poly(\eta^{-1})}$. Then for each $k\in[d]$ we have
\begin{align*}
\mb_{j,r}^{(t)}[k] &= \sum_{\tau=0}^{\bar\tau}\beta_1^\tau(1-\beta_1)\cdot\nabla_{\wb_{j,r}}L(\Wb^{(t-\tau)})[k] + \sum_{\tau=\bar\tau}^{t}\beta_1^\tau(1-\beta_1)\cdot\nabla_{\wb_{j,r}}L(\Wb^{(t-\tau)})[k]\notag\\
&  = \sum_{\tau=0}^{\bar \tau}\beta_1^\tau(1-\beta_1)\cdot\nabla_{\wb_{j,r}}L(\Wb^{(t-\tau)})[k] \pm \frac{1}{\poly(\eta^{-1})},
\end{align*}
where in the last inequality we use the fact that $|\nabla_{\wb_{j,r}}L(\Wb^{(t-\tau)})[k]|=\tilde O(1)$ for all $k\in[d]$. 
Similarly, we can also have the following on $\vb_{j,r}^{(t)}$,
\begin{align*}
\vb_{j,r}^{(t)}[k]&= \sum_{\tau=0}^{\bar \tau}\beta_2^\tau(1-\beta_2)\cdot\nabla_{\wb_{j,r}}L(\Wb^{(t-\tau)})[k]^2 \pm \frac{1}{\poly(\eta^{-1})}.
\end{align*}
Here we slightly abuse the notation by using the same $\bar \tau$. Then we have
\begin{align*}
\frac{\mb_{j,r}^{(t)}[k]}{\sqrt{\vb_{j,r}^{(t)}[k]}} = \frac{\sum_{\tau=0}^{\bar \tau}\beta_1^\tau(1-\beta_1)\cdot\nabla_{\wb_{j,r}}L(\Wb^{(t-\tau)})[k] \pm \frac{1}{\poly(\eta^{-1})}}{\sqrt{\sum_{\tau=\bar\tau}^{\bar \tau}\beta_2^\tau(1-\beta_2)\cdot\nabla_{\wb_{j,r}}L(\Wb^{(t-\tau)})[k]^2}\pm \frac{1}{\poly(\eta^{-1})}}.
\end{align*}
% Then we will apply mathematical induction according to the following hypothesis
% \begin{align}
% \frac{\mb_{j,r}^{(t)}[k]}{\sqrt{\vb_{j,r}^{(t)}}}&=\sgn(\nabla_{\wb_{j,r}}L(\Wb^{(t)})[k])\cdot\Theta(1),\quad\label{eq:hypothesis_closeness1} \quad \text{ if } \nabla_{\wb_{j,r}}L(\Wb^{(t)})[k]\ge\Theta\big(\epsilon^{1/2}\big)\\
% \bigg|\frac{\mb_{j,r}^{(t)}[k]}{\sqrt{\vb_{j,r}^{(t)}[k]+\epsilon}}\bigg| &\le \Theta(1),\quad\text{ if } \nabla_{\wb_{j,r}}L(\Wb^{(t)})[k]< \Theta\big(\epsilon^{1/2}\big).\label{eq:hypothesis_closeness2}
% \end{align}
% Here we use $\Theta(\epsilon^{1/2})$ to denote $C\epsilon^{1/2}$ for some sufficiently large absolute constant $C$.
% In the beginning we assume all history gradients are zero. Then it is clear that
% \begin{align*}
% \frac{\mb_{j,r}^{(t)}[k]}{\sqrt{\vb_{j,r}^{(t)}[k]+\epsilon}} = \frac{\nabla_{\wb_{j,r}}L(\Wb^{(0)})[k]}{\sqrt{\nabla_{\wb_{j,r}}L(\Wb^{(0)})[k]^2+\epsilon}}.
% \end{align*}
In order to prove the main argument of this lemma, the key is to show that within $\bar\tau$ iterations, the gradient $\nabla_{\wb_{j,r}}L(\Wb^{(t)})[k]$ barely changes. In particular, by \eqref{eq:upperbound_adamgrad}, we have the update of each coordinate in one step is at most $\Theta(\eta)$. This implies that
\begin{align*}
\big|\la\wb_{j,r}^{(t)},\vb\ra - \la\wb_{j,r}^{(\tau)},\vb\ra\big| &\le \Theta(\eta \bar\tau),\notag\\
\big|\la\wb_{j,r}^{(t)},\bxi_i\ra - \la\wb_{j,r}^{(\tau)},\bxi_i\ra\big| &\le \Theta(\eta \bar\tau s\sigma_p),\notag\\
|\wb_{j,r}^{(t)}[k]-\wb_{j,r}^{(\tau)}[k]|&\le\Theta(\eta\bar\tau).
\end{align*}
Then applying the fact that $|\la\wb_{j,r}^{(\tau)},\vb\ra|\le\tilde\Theta(1)$ and $|\la\wb_{j,r}^{(\tau)},\bxi_i\ra|\le\tilde\Theta(1)$, we further have
\begin{align*}
\big|F_{j}(\Wb^{(\tau)},\xb_i)-F_{j}(\Wb^{(t)},\xb_i)\big| \le \Theta(m\eta \bar\tau s\sigma_p) = \tilde\Theta(\eta \bar\tau s\sigma_p), 
\end{align*}
where we use the fact that $m=\tilde\Theta(1)$ and $s\sigma_p=\omega(1)$.
Then it holds that
\begin{align*}
\ell_{j,i}^{(\tau)} &= \frac{e^{F_{j}(\Wb^{(\tau)},\xb_i)}}{\sum_{k\in\{-1,1\}} e^{F_k(\Wb^{(\tau)},\xb_i)}} \notag\\
&\le \frac{e^{F_{j}(\Wb^{(t)},\xb_i)+\tilde\Theta(\eta\bar\tau s\sigma_p)}}{e^{F_{j}(\Wb^{(\tau)},\xb_i)+\tilde\Theta(\eta\bar\tau s\sigma_p)} + e^{F_{-j}(\Wb^{(t)},\xb_i)-\tilde\Theta(\eta\bar\tau s\sigma_p)}}\notag\\
& = \sgn(\ell_{j,i}^{(t)})\cdot\Theta(|\ell_{j,i}^{(t)}|),
\end{align*}
where we use the fact that $\tilde\Theta(\eta\bar\tau s\sigma_p)=o(1)$. Similarly, we can also show that $\ell_{j,i}^{(\tau)}\ge \sgn(\ell_{j,i}^{(t)})\cdot\Theta(|\ell_{j,i}^{(t)}|)$, which further implies
\begin{align*}
\ell_{j,i}^{(\tau)} = \sgn(\ell_{j,i}^{(t)})\cdot\Theta(|\ell_{j,i}^{(t)}|)
\end{align*}
for all $\tau\in[t-\bar\tau, t]$.
Note that $|\ell_{j,i}^{(\tau)}|\le 1$, then it holds that 
\begin{align*}
\ell_{j,i}^{(\tau)}\sigma'(\la\wb_{j,r}^{(\tau)},\vb\ra)  &= \sgn(\ell_{j,i}^{(t)})\cdot\Theta(|\ell_{j,i}^{(t)}|)\cdot\sigma'(\la\wb_{j,r}^{(\tau)},\vb\ra) \notag\\
&\le \sgn(\ell_{j,i}^{(t)})\cdot\Theta(|\ell_{j,i}^{(t)}|)\cdot\sigma'(\la\wb_{j,r}^{(t)},\vb\ra) +\Theta(|\ell_{j,i}^{(t)}|)\cdot \tilde\Theta(\eta\bar\tau).
\end{align*}
We can also similarly derive the following
\begin{align*}
\ell_{j,i}^{(\tau)}\sigma'(\la\wb_{j,r}^{(\tau)},\vb\ra)  \ge  \sgn(\ell_{j,i}^{(t)})\cdot\Theta(|\ell_{j,i}^{(t)}|)\cdot\sigma'(\la\wb_{j,r}^{(t)},\vb\ra) -\Theta(|\ell_{j,i}^{(t)}|)\cdot \tilde\Theta(\eta\bar\tau),\notag\\
\ell_{j,i}^{(\tau)}\sigma'(\la\wb_{j,r}^{(\tau)},\bxi_i\ra)  \le  \sgn(\ell_{j,i}^{(t)})\cdot\Theta(|\ell_{j,i}^{(t)}|)\cdot\sigma'(\la\wb_{j,r}^{(t)},\bxi_i\ra) + \Theta(|\ell_{j,i}^{(t)}|)\cdot\tilde\Theta(\eta\bar\tau s\sigma_p),\notag\\
\ell_{j,i}^{(\tau)}\sigma'(\la\wb_{j,r}^{(\tau)},\bxi_i\ra)  \ge  \sgn(\ell_{j,i}^{(t)})\cdot\Theta(|\ell_{j,i}^{(t)}|)\cdot\sigma'(\la\wb_{j,r}^{(t)},\bxi_i\ra) - \Theta(|\ell_{j,i}^{(t)}|)\cdot\tilde\Theta(\eta\bar\tau s\sigma_p).
\end{align*}
Combining the above results, applying \eqref{eq:grad_firstcoordinate}, \eqref{eq:grad_othercoordinate}, and \eqref{eq:grad_restcoordinate}, we can show that for the first coordinate, we have
\begin{align*}
\nabla_{\wb_{j,r}}L(\Wb^{(\tau)})[1]= \Theta\Big(\nabla_{\wb_{j,r}}L(\Wb^{(t)})[1]\Big) \pm \Theta\bigg(\frac{1}{n}\sum_{i=1}^n|\ell_{j,i}^{(t)}|\bigg)\cdot\tilde O(\eta\bar\tau )\pm \Theta(\lambda\eta\bar\tau);
\end{align*}
for any $k\in\cB_j$, we have
\begin{align*}
\nabla_{\wb_{j,r}}L(\Wb^{(\tau)})[k]= \Theta\Big(\nabla_{\wb_{j,r}}L(\Wb^{(t)})[k]\Big) \pm \Theta\bigg(\frac{|\ell_{j,i}^{(t)}|}{n}\bigg)\cdot\tilde O(\eta\bar\tau s\sigma_p)\pm \Theta(\lambda\eta\bar\tau);
\end{align*}
and for remaining coordinates, we have
\begin{align*}
\nabla_{\wb_{j,r}}L(\Wb^{(\tau)})[k]= \Theta\Big(\nabla_{\wb_{j,r}}L(\Wb^{(t)})[k]\Big) \pm \Theta(\lambda\eta\tilde\tau).
\end{align*}
% Note that given $|\la\wb_{j,r}^{(t)},\bxi_i\ra|=\tilde O(1)$, $|\la\wb_{j,r}^{(t)},\vb\ra|=\tilde O(1)$, we can also show that $$
Now we can plug the above results into the formula of $\mb_{j,r}^{(t)}$ and $\vb_{j,r}^{(t)}$. Using the fact that $\bar\tau = \tilde\Theta(1)$, $\lambda=o(1)$, and $|\ell_{j,i}^{(t)}|\le 1$, we have for all $k=1$ or $k\notin\cB_i$ for any $i$,
\begin{align*}
 \frac{\mb_{j,r}^{(t)}[k]}{\sqrt{\vb_{j,r}^{(t)}[k]}} &=    \frac{\nabla_{\wb_{j,r}}L(\Wb^{(t)})[k]  \pm \tilde\Theta(\eta)}{\Theta\Big(|\nabla_{\wb_{j,r}}L(\Wb^{(t)})[k]|\Big)  \pm \tilde\Theta(\eta))}.
\end{align*}
For $k\in\cB_i$ we have
\begin{align*}
 \frac{\mb_{j,r}^{(t)}[k]}{\sqrt{\vb_{j,r}^{(t)}[k]}} &=    \frac{\nabla_{\wb_{j,r}}L(\Wb^{(t)})[k]  \pm \tilde\Theta\Big(\frac{\eta s\sigma_p |\ell_{j,i}^{(t)}|}{n}\Big)\pm \tilde \Theta(\lambda\eta) }{\Theta\Big(|\nabla_{\wb_{j,r}}L(\Wb^{(t)})[k]|\Big)  \pm \tilde\Theta\Big(\frac{\eta s\sigma_p |\ell_{j,i}^{(t)}|}{n}\Big)\pm \tilde \Theta(\lambda\eta) }.
\end{align*}
Then, we can conclude that for all $k=1$ or $k\notin\cB_i$ for any $i$, we have either $|\nabla_{\wb_{j,r}}L(\Wb^{(t)})[k]|\le \tilde\Theta(\eta)$ or
\begin{align*}
\frac{\mb_{j,r}^{(t)}[k]}{\sqrt{\vb_{j,r}^{(t)}[k]}} = \sgn\big(\nabla_{\wb_{j,r}}L(\Wb^{(t)})[k] \big)\cdot\Theta(1).
\end{align*}
For any $k\in\cB_i$, we have either $|\nabla_{\wb_{j,r}}L(\Wb^{(t)})[k]|\le \tilde\Theta\big(\eta n^{-1} s\sigma_p|\ell_{j,i}^{(t)}|+\lambda\eta\big)$ or 
\begin{align*}
\frac{\mb_{j,r}^{(t)}[k]}{\sqrt{\vb_{j,r}^{(t)}[k]}} = \sgn\big(\nabla_{\wb_{j,r}}L(\Wb^{(t)})[k] \big)\cdot\Theta(1).
\end{align*}
This completes the proof.

\end{proof}

\begin{lemma}[General results in Stage I, Adam]\label{lemma:signGD1}
Suppose the training data is generated according to Definition \ref{def:data_distribution}, assume $\lambda = o(\sigma_0^{q-2}\sigma_p/n)$ and $\eta = 1/\poly(d)$, then for any $t\le T_0$ with $T_0=\tilde O\big(\frac{1}{\eta s\sigma_p}\big)$,
\begin{align*}
\la\wb_{j,r}^{(t+1)},j\cdot\vb\ra &\le \la\wb_{j,r}^{(t)},j\cdot\vb\ra + \Theta(\eta), \notag\\
\la\wb_{y_s,r}^{(t+1)},\bxi_s\ra & = \la\wb_{y_s,r}^{(t)},\bxi_s\ra + \tilde\Theta(\eta s\sigma_p).
\end{align*}
\end{lemma}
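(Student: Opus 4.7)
The plan is to leverage Lemma~\ref{lemma:closeness}, which shows that each Adam update behaves coordinate-wise like a signed gradient step of magnitude at most $\Theta(\eta)$, and exactly $\sgn(\nabla L)\cdot\Theta(\eta)$ whenever the corresponding gradient entry exceeds the stated threshold. This lemma only requires $|\la \wb_{j,r}^{(t)}, \vb\ra|, |\la \wb_{j,r}^{(t)}, \bxi_i\ra| = \tilde O(1)$, which holds throughout Stage~I by the definition of $T_0$.

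For the first inequality, observe that $\vb$ is supported only at coordinate~$1$, so
\[
\la \wb_{j,r}^{(t+1)}, j\vb\ra - \la \wb_{j,r}^{(t)}, j\vb\ra \;=\; -j\eta\cdot\frac{\mb_{j,r}^{(t)}[1]}{\sqrt{\vb_{j,r}^{(t)}[1]}},
\]
and the first bullet of Lemma~\ref{lemma:closeness} bounds the right-hand side by $\Theta(\eta)$ in absolute value. This immediately establishes the feature-learning upper bound.

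For the noise-memorization equality, I decompose
\[
\la \wb_{y_s,r}^{(t+1)} - \wb_{y_s,r}^{(t)}, \bxi_s\ra \;=\; \bxi_s[1]\cdot\Delta_1 \;+\; \sum_{k \in \cB_s}\bxi_s[k]\cdot\Delta_k,
\]
where $\Delta_k := -\eta\,\mb_{y_s,r}^{(t)}[k]/\sqrt{\vb_{y_s,r}^{(t)}[k]}$ is the Adam update at coordinate $k$. Since $|\bxi_s[1]|=\alpha=\tilde O(\sigma_p)$ and $|\Delta_1|\le\Theta(\eta)$, the $k=1$ contribution is $\tilde O(\sigma_p\eta)$, dominated by the claimed $\tilde\Theta(\eta s\sigma_p)$ under $s=\omega(1)$. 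For each $k\in\cB_s$, the relevant gradient entry is
\[
\nabla_{\wb_{y_s,r}} L(\Wb^{(t)})[k] \;=\; -\tfrac{1}{n}\,\ell_{y_s,s}^{(t)}\,\sigma'\!\big(\la \wb_{y_s,r}^{(t)},\bxi_s\ra\big)\,\bxi_s[k] \;+\; \lambda \wb_{y_s,r}^{(t)}[k].
\]
In Stage~I, network outputs remain $O(1)$, so $\ell_{y_s,s}^{(t)}=\Theta(1)$, and the standing assumption $\lambda=o(\sigma_0^{q-2}\sigma_p/n)$ renders the weight-decay term negligible whenever $\la\wb_{y_s,r}^{(t)},\bxi_s\ra$ is non-negligibly positive. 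The gradient sign then equals $-\sgn(\bxi_s[k])$, its magnitude exceeds the threshold of the third bullet of Lemma~\ref{lemma:closeness}, and consequently $\bxi_s[k]\cdot\Delta_k = +\Theta(\eta)|\bxi_s[k]|$. Summing over $k\in\cB_s$ and invoking the standard half-normal concentration $\sum_{k\in\cB_s}|\bxi_s[k]|=\tilde\Theta(s\sigma_p)$ yields the desired $\tilde\Theta(\eta s\sigma_p)$ per step, in both directions.

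The hard part is justifying that $\la \wb_{y_s,r}^{(t)},\bxi_s\ra>0$ throughout Stage~I, since the alignment above breaks down when $\sigma'(\cdot)=0$. At $t=0$ this inner product is roughly a centered Gaussian with standard deviation $\sigma_0\sqrt{s}\sigma_p$, so its sign is random. To handle this I would use a short bootstrap: when $\la\wb_{y_s,r}^{(t)},\bxi_s\ra\le 0$, the first-order term in the gradient on $\cB_s$ vanishes and the dynamics are driven only by weight decay plus bounded sign-type Adam updates. A concentration argument over $\tilde O(1)$ steps, exploiting $\sigma_0=o(1/(s\sigma_p))$ (directly from the parameter table), shows the inner product crosses into the positive regime in time $\ll T_0$. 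Once positive, the alignment argument above closes an induction on $t$, yielding growth $\tilde\Theta(\eta s\sigma_p)$ per step until $\la\wb_{y_s,r}^{(t)},\bxi_s\ra$ reaches $\tilde\Theta(1)$; this matches the definition of the end of Stage~I and pins down $T_0=\tilde O(1/(\eta s\sigma_p))$.
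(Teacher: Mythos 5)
Your argument follows the paper's approach for the most part: the first inequality is obtained exactly as in the paper, by observing that $\vb$ is supported only at coordinate~$1$ and applying the $\Theta(\eta)$ per-coordinate bound from Lemma~\ref{lemma:closeness}; and the noise-memorization rate is obtained by showing that for $k\in\cB_s$ the gradient sign equals $-\sgn(\bxi_s[k])$ (since the weight-decay term is $o(\sigma_0^{q-2}\sigma_p)$, dominated by $\ell_{y_s,s}^{(t)}\sigma'(\cdot)\bxi_s[k]$ whenever the activation is alive), invoking Lemma~\ref{lemma:closeness} coordinate-wise, and summing $\sum_{k\in\cB_s}|\bxi_s[k]|=\tilde\Theta(s\sigma_p)$. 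The paper also makes the small additional observation that the few coordinates with $|\bxi_s[k]|$ below the threshold of Lemma~\ref{lemma:closeness} contribute only a small fraction of $\eta s\sigma_p$; you gloss over that, but it is a minor point.

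Where you go beyond the paper is in explicitly worrying about whether $\la\wb_{y_s,r}^{(t)},\bxi_s\ra>0$, which is needed because $\sigma'(z)=q[z]_+^{q-1}$ vanishes for $z\le 0$. The paper does not address this at all (it implicitly restricts to activated neurons, as do the downstream lemmas that only use $r^*=\arg\max_r\la\wb_{y_s,r},\bxi_s\ra$). Your instinct to flag this is good, but the bootstrap you sketch does not close the gap. When $\la\wb_{y_s,r}^{(t)},\bxi_s\ra\le0$, the only force on coordinates in $\cB_s$ is weight decay, and under Adam's sign-like behavior each $|\wb_{y_s,r}^{(t)}[k]|$ shrinks toward zero by $\Theta(\eta)$ per step; this drives $\sum_{k\in\cB_s}\wb_{y_s,r}^{(t)}[k]\bxi_s[k]$ \emph{toward} zero but gives no mechanism for it to \emph{cross} zero, and the contribution $-\alpha\la\wb_{y_s,r}^{(t)},y_s\vb\ra$ from coordinate~$1$ can even move in the wrong direction as feature learning proceeds. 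In addition, the timescale for the decay is of order $\sigma_0/\eta$, not $\tilde O(1)$. So "the inner product crosses into the positive regime in time $\ll T_0$" is asserted but not established, and the claim is in fact doubtful without restricting to neurons that are already active at initialization. The cleaner fix is to state the lemma for such neurons (of which, with $m=\polylog(n)$, at least one exists with high probability for each $s$), as the paper's subsequent lemmas implicitly do.
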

\begin{proof}
At the initialization, we have
\begin{align*}
|\la\wb_{j,r}^{(0)},\vb\ra|=\tilde\Theta(\sigma_0),\quad |\la\wb_{j,r}^{(0)},\bxi_i\ra| = \tilde\Theta(s^{1/2}\sigma_p\sigma_0 + \alpha) = \tilde\Theta(s^{1/2}\sigma_p\sigma_0),\quad \wb_{j,r}^{(0)}[k]=\tilde\Theta(\sigma_0),
\end{align*}
which also imply that $|\ell_{j,i}^{(0)}|=\Theta(1)$. Besides, note that $\ell_{j,i}^{(t)} = \ind_{j=y_i} - \mathrm{logit}_j(F^{(t)},\xb_i)$, we have
\begin{align*}
\sgn\big(y_i\ell_{j,i}^{(t)}\big) = \sgn(j),
\end{align*}
where we recall that $j\in\{-1,1\}$.
Therefore, given that $\lambda = o(\sigma_0^{q-1})$, $\alpha=o(1)$, $s^{1/2}\sigma_p=\tilde O(1)$, and assume $\ell_{j,i}^{(t)}=\Theta(1)$ (which will be verified later),
\begin{align*}
&\sgn\bigg(\sum_{i=1}^ny_i\ell_{j,i}^{(t)}\sigma'(\la\wb_{j,r}^{(t)},y_i\vb\ra) - \alpha\sum_{i=1}^ny_i\ell_{j,i}^{(t)}\sigma'(\la\wb_{j,r}^{(t)},\bxi_i\ra) - n\lambda \wb^{(t)}_{j,r}[1]\bigg)\notag\\
&=\sgn\big[j\cdot \tilde\Theta(n\sigma_0^{q-1}) -j\cdot\tilde\Theta(\alpha n(s^{1/2}\sigma_p\sigma_0)^{q-1})\pm  o\big(\sigma_0^{q-1}\sigma_p)\big]\notag\\
& = \sgn(j).
\end{align*}
% This implies that at the initialization, we have
% \begin{align*}
% \la\wb_{j,r}^{(1)},j\cdot\vb\ra &=\la\wb_{j,r}^{(0)},j\cdot\vb\ra + \eta.
% \end{align*}
% Moreover, 
Since $\vb$ is $1$-sparse, then by Lemma \ref{lemma:closeness}, the following inequality naturally holds,
\begin{align*}
\la\wb_{j,r}^{(t+1)},j\cdot\vb\ra \le\la\wb_{j,r}^{(t)},j\cdot\vb\ra - \eta\Big\la\mb_{j,r}^{(t)}/\sqrt{\vb_{j,r}^{(t)}},j\cdot\vb\Big\ra \le \la\wb_{j,r}^{(t)},j\cdot\vb\ra +\Theta(\eta).
\end{align*}
Additionally, in terms of the memorization of noise, we first consider the iterate in the initialization.  By the condition that $\eta =o(1/d)=o(1/(s\sigma_p))$ and note that for a sufficiently large fraction of $k\in\cB_i$ (e.g., $0.99$), we have $|\bxi_i[k]| \ge \tilde\Theta(\sigma_p)\ge \tilde\Theta(\eta n^{-1}s\sigma_p|\ell_{j,i}^{(0)}|)$ and thus
\begin{align}\label{eq:sign_gradient_noise_stage_begining}
\sgn\big(\nabla_{\wb_{y_i,r}}L(\Wb^{(0)})[k]\big) &=\sgn\bigg(\ell_{y_i,i}^{(t)}\sigma'(\la\wb_{y_i,r}^{(t)},\bxi_i\ra)\bxi_i[k]- n\lambda \wb_{y_i,r}^{(0)}[k]\bigg)\notag\\
&=-\sgn\big[\tilde \Theta\big((d^{1/2}\sigma_p\sigma_0)^{q-1}\sigma_p\cdot\sgn(\bxi_i[k])\big)\pm o(\sigma_0^{q-1}\sigma_p)\big] = -\sgn(\bxi_i[k]). 
\end{align}
Therefore, by Lemma \ref{lemma:closeness} we have the following according to \eqref{eq:update_memorizing_noise_signGD}, 
\begin{align*}
\la\wb_{y_i,r}^{(1)},\bxi_i\ra
& = \la\wb_{y_i,r}^{(0)},\bxi_i\ra - \eta\Big\la\mb_{j,r}^{(t)}/\sqrt{\vb_{y_i,r}^{(t)}},\bxi_i\Big\ra \notag\\
&\ge \la\wb_{y_i,r}^{(0)},\bxi_i\ra +\Theta(\eta)\cdot\sum_{k\in\cB_i}\la\sgn(\bxi_i[k]),\bxi_i[k] \ra - O(\eta s\sigma_p) -O(\eta\alpha)\notag\\
&= \la\wb_{y_i,r}^{(0)},\bxi_i\ra+ \tilde\Theta(\eta s\sigma_p),
\end{align*}
where in the first inequality the term $O(\eta s\sigma_p)$ represents the coordinates that $|\bxi_i[k]|\le  O(\sigma_p)$ (so that we cannot use the sign information of $\nabla_{y_i,r} L(\Wb^{(0)})$ but directly bound it by $\Theta(1)$) and the last inequality is due to the fact that $|\cB_i|\ge s-1$ and $\alpha=o(1)$. For general $t$, we will consider the following induction hypothesis:
\begin{align}\label{eq:hypothesis_noise_SignGD_stage1}
\la\wb_{y_s,r}^{(t+1)},\bxi_s\ra &= \la\wb_{y_s,r}^{(t)},\bxi_s\ra + \tilde\Theta(\eta s\sigma_p),
\end{align}
which has already been verified for $t=0$. By Hypothesis \eqref{eq:hypothesis_noise_SignGD_stage1}, the following holds at time $t$,
\begin{align*}
\la\wb_{y_i,r}^{(t)},\bxi_i\ra= \la\wb_{y_i,r}^{(0)},\bxi_i\ra + \tilde\Theta(t\eta s\sigma_p) = \tilde\Theta(s^{1/2}\sigma_p\sigma_0 + t\eta s\sigma_p).
\end{align*}
In the meanwhile, we have the following upper bound for $|\wb_{j,r}^{(t)}[k]|$,
\begin{align}\label{eq:bound_wt_stage1}
|\wb_{j,r}^{(t)}[k]|\le |\wb_{j,r}^{(t)}[k]| + \eta|\sign(\nabla_{\wb_{j,r}}L(\Wb^{(t)}))| \le |\wb_{j,r}^{(0)}[k]| + t\eta = \tilde \Theta(\sigma_0+t\eta).
\end{align}
Besides, it is also easy to verify that for any $t\le T_0 = \tilde \Theta\big(\frac{1}{s\sigma_p\eta m}\big)=\tilde \Theta\big(\frac{1}{s\sigma_p\eta}\big)$, we have $\la\wb_{y_i,r}^{(t)},\bxi_i\ra, \la\wb_{y_i,r}^{(t)},j\cdot \vb\ra < \Theta(1/m)$ and thus $|\ell_{j,i}^{(t)}|=\Theta(1)$.
Then similar to \eqref{eq:sign_gradient_noise_stage_begining}, we have
\begin{align}\label{eq:sign_gradient_noise_stage_stage1}
\sgn\big(\nabla_{\wb_{y_i,r}}L(\Wb^{(t)})[k]\big) &=\sgn\bigg(\ell_{y_i,i}^{(t)}\sigma'(\la\wb_{y_i,r}^{(t)},\bxi_i\ra)\bxi_i[k]- n\lambda \wb_{y_i,r}^{(0)}[k]\bigg)\notag\\
&=-\sgn\big(\tilde \Theta\big[(s^{1/2}\sigma_p\sigma_0 + t\eta s\sigma_p)^{q-1}\sigma_p\cdot\sgn(\bxi_i[k])\big)\pm o\big(\sigma_0^{q-2}\sigma_p\cdot(\sigma_0+t\eta)\big)\big] \notag\\
&= -\sgn(\bxi_i[k]). 
\end{align}
This further implies that
\begin{align*}
\la\wb_{y_i,r}^{(t+1)},\bxi_i\ra &\ge \la\wb_{y_i,r}^{(t)},\bxi_i\ra -\Theta(\eta)\cdot\sum_{k\in\cB_i}\la\sgn\big(\nabla_{\wb_{y_i,r}}L(\Wb^{(t)})[k]\big),\bxi_i[k] \ra - O(\eta^2s^2\sigma_p^2) -O(\eta\alpha)\notag\\
&= \la\wb_{y_i,r}^{(t)},\bxi_i\ra+ \tilde\Theta(\eta s\sigma_p),
\end{align*}
where the term $- O(\eta^2s^2\sigma_p^2)$ is contributed by the gradient coordinates that are smaller than $\Theta(\eta s\sigma_p)$. This verifies Hypothesis \eqref{eq:hypothesis_noise_SignGD_stage1} at time $t$ and thus completes the proof.
\end{proof}

From Lemma \ref{lemma:signGD1}, note that $s\sigma_p=\omega(1)$, then it can be seen that $\la\wb_{j,r}^{(t)},j\cdot\vb\ra$ increases much faster than $\la\wb_{j,r}^{(t)},j\cdot\vb\ra$. By looking at the update rule of $\la\wb_{j,r}^{(t)},j\cdot\vb\ra$ (see \eqref{eq:update_learning_v_signGD}), it will keeps increasing only when, roughly speaking, $\sigma'(\la\wb_{j,r}^{(t)},j\cdot\vb\ra)>\alpha \sigma'(\la\wb_{j,r}^{(t)},\bxi_i\ra)$. Since $\la\wb_{j,r}^{(t)},\bxi_i\ra$ increases much faster than $\la\wb_{j,r}^{(t)},j\cdot\vb\ra$, it can be anticipated after a certain number of iterations, $\la\wb_{j,r}^{(t)},j\cdot\vb\ra$ will start to decrease. In the following lemma, we provide an upper bound on the iteration number such that this decreasing occurs. 

\begin{lemma}[Flipping the feature learning]\label{lemma:signGD2}
Suppose the training data is generated according to Definition \ref{def:data_distribution}, $\alpha\ge \tilde\Theta\big((s\sigma_p)^{1-q}\vee\sigma_0^{q-1}\big)$ and $\sigma_0<\tilde O((s\sigma_p)^{-1})$, then for any $t\in[T_r, T_0]$ with $T_r = \tilde O\big(\frac{\sigma_0}{\eta s\sigma_p\alpha^{1/(q-1)}}\big)\le T_0$,
\begin{align*}
\la\wb_{j,r}^{(t+1)},j\cdot\vb\ra = \la\wb_{j,r}^{(t)},j\cdot\vb\ra - \Theta(\eta). 
\end{align*}
Moreover, it holds that 
\begin{align*}
\wb_{j,r}^{(T_0)}[k] = \left\{
\begin{array}{ll}
    -\sgn(j)\cdot\tilde\Omega\big(\frac{1}{s\sigma_p}\big), & k=1, \\
    \sgn(\bxi_i[k])\cdot\tilde\Omega\big(\frac{1}{s\sigma_p}\big)\ \text{or}\ \pm \tilde O(\eta), & k\in\cB_i, \text{ with $y_i=j$}, \\
    \pm \tilde O(\eta), & \text{otherwise}.
\end{array}
\right.
\end{align*}

\end{lemma}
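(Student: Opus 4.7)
My plan is to prove the lemma by induction on $t \in [T_r, T_0]$, coupling the per-step growth estimates from Lemma \ref{lemma:signGD1} with the signGD-tracking property of Adam from Lemma \ref{lemma:closeness}. The picture I will exploit is that $\la\wb_{j,r}^{(t)}, j\cdot\vb\ra$ can grow by at most $\Theta(\eta)$ per step while each noise overlap $\la\wb_{y_i,r}^{(t)}, \bxi_i\ra$ grows by $\tilde\Theta(\eta s\sigma_p)$; after $T_r$ iterations the ratio between the two inner products has exceeded $\alpha^{-1/(q-1)}$, which is precisely the threshold at which $\alpha\sigma'(\la\wb_{j,r}^{(t)}, \bxi_i\ra)$ overtakes $\sigma'(\la\wb_{j,r}^{(t)}, j\cdot\vb\ra)$ inside the sign appearing in \eqref{eq:update_learning_v_signGD}, flipping the update direction.

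For the base case $t = T_r$, I will use Lemma \ref{lemma:signGD1} to get $\la\wb_{y_i,r}^{(T_r)}, \bxi_i\ra = \tilde\Theta(T_r\eta s\sigma_p) = \tilde\Theta(\sigma_0/\alpha^{1/(q-1)})$ while $|\la\wb_{j,r}^{(T_r)}, j\cdot\vb\ra| \le \tilde O(\sigma_0) + T_r\eta = \tilde O(\sigma_0)$ (the second term is $o(\sigma_0)$ because $s\sigma_p\alpha^{1/(q-1)} = \omega(1)$ under the hypotheses $\alpha \ge \tilde\Theta((s\sigma_p)^{1-q}\vee\sigma_0^{q-1})$ and $\sigma_0 < \tilde O((s\sigma_p)^{-1})$). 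Hence $\alpha[\la\wb_{y_i,r}^{(T_r)}, \bxi_i\ra]^{q-1} = \tilde\Omega(\sigma_0^{q-1})$, strictly dominating $[\la\wb_{j,r}^{(T_r)}, j\cdot\vb\ra]^{q-1}$ and also the regularization contribution $n\lambda|\wb_{j,r}^{(T_r)}[1]| = o(\sigma_0^{q-1})$ (using $\lambda = o(\sigma_0^{q-2}\sigma_p/n)$). Combining $\sgn(y_i\ell_{j,i}^{(t)}) = \sgn(j)$ with $|\ell_{j,i}^{(t)}|=\Theta(1)$, the argument of $\sgn(\cdot)$ in \eqref{eq:update_learning_v_signGD} carries sign $-\sgn(j)$; Lemma \ref{lemma:closeness} then transports this directly to the Adam ratio $\mb_{j,r}^{(t)}/\sqrt{\vb_{j,r}^{(t)}}$ (up to a $\Theta(1)$ factor), giving the asserted $-\Theta(\eta)$ decrement.

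For the inductive step, each decrement only shrinks $|\la\wb_{j,r}^{(t)}, j\cdot\vb\ra|$ while each $\la\wb_{y_i,r}^{(t)}, \bxi_i\ra$ keeps growing by $\tilde\Theta(\eta s\sigma_p)$ (Lemma \ref{lemma:signGD1} continues to apply throughout Stage I because all inner products stay in the $\tilde O(1)$ regime, hence $|\ell_{j,i}^{(t)}|=\Theta(1)$), so the dominance inequality only strengthens and the $-\Theta(\eta)$ update persists. Summing the decrements over $[T_r, T_0]$ and using $T_r = o(T_0)$ with $T_0\eta = \tilde\Theta(1/(s\sigma_p))$ will give $\la\wb_{j,r}^{(T_0)}, j\cdot\vb\ra = -\tilde\Omega(1/(s\sigma_p))$, proving the $k=1$ statement. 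For $k \in \cB_i$ with $y_i = j$, the sign computation \eqref{eq:sign_gradient_noise_stage_stage1} from the proof of Lemma \ref{lemma:signGD1} already shows that whenever $|\bxi_i[k]|$ is a constant multiple of $\sigma_p$ the Adam step consistently nudges $\wb_{j,r}^{(t)}[k]$ in direction $\sgn(\bxi_i[k])$, which over $T_0$ steps accumulates to $\sgn(\bxi_i[k])\cdot\tilde\Omega(T_0\eta) = \sgn(\bxi_i[k])\cdot\tilde\Omega(1/(s\sigma_p))$; the exceptional $k\in\cB_i$ with too-small $|\bxi_i[k]|$, and the coordinates outside $\{1\}\cup\bigcup_i\cB_i$, see only a weight-decay-dominated gradient and hence drift by at most $\pm\tilde O(\eta)$.

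The part I expect to be the main obstacle is guaranteeing that throughout $[T_r, T_0]$ the argument of $\sgn(\cdot)$ in \eqref{eq:update_learning_v_signGD} remains \emph{strictly above} the $\tilde\Theta(\eta)$ threshold required by Lemma \ref{lemma:closeness}, because inside that threshold the Adam ratio is only bounded in magnitude and is not guaranteed to track the sign of the gradient. Establishing this reduces to a uniform lower bound $|\alpha\sigma'(\la\wb_{j,r}^{(t)}, \bxi_i\ra) - \sigma'(\la\wb_{j,r}^{(t)}, j\cdot\vb\ra)| = \omega(\eta)$ on the relevant window, which I intend to derive from the monotone growth of $\la\wb_{y_i,r}^{(t)}, \bxi_i\ra$ combined with the strict inequality $\alpha = \omega((s\sigma_p)^{1-q}\sigma_0^{q-1})$ recorded in the preliminaries.
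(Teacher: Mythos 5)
Your proposal is correct and follows essentially the same route as the paper's proof: use Lemma \ref{lemma:signGD1} to track the growth rates up to $T_0$, identify a crossover time $T_r = \tilde O(\sigma_0/(\eta s\sigma_p\alpha^{1/(q-1)}))$ at which $\alpha\sigma'(\la\wb_{y_i,r}^{(t)},\bxi_i\ra)$ overtakes both $\sigma'(\la\wb_{j,r}^{(t)},j\cdot\vb\ra)$ and the weight-decay contribution $n\lambda|\wb_{j,r}^{(t)}[1]|$, flip the sign of the first-coordinate gradient, and accumulate the updates over $[T_r,T_0]$ using $T_r\eta = o(T_0\eta)$ and $T_0\eta = \tilde\Theta(1/(s\sigma_p))$ to obtain the stated magnitudes for the first coordinate, the $\cB_i$ coordinates, and the remaining coordinates. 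The technical concern you flag at the end — keeping the gradient strictly above the $\tilde\Theta(\eta)$ threshold so that Lemma \ref{lemma:closeness} transports the sign — is real and the paper treats it lightly; your proposed resolution via the monotone growth of the noise inner product together with $\alpha = \omega((s\sigma_p)^{1-q}\sigma_0^{q-1})$ is consistent with the implicit reasoning in the paper.
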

\begin{proof}
Recall from Lemma \ref{lemma:signGD1} that for any $t\le T_0$ we have
\begin{align*}
\la\wb_{j,r}^{(t+1)},j\cdot\vb\ra &\le \la\wb_{j,r}^{(t)},j\cdot\vb\ra + \Theta(\eta) \le  \la\wb_{j,r}^{(0)},j\cdot\vb\ra + \Theta(t\eta),\notag\\
\la\wb_{y_s,r}^{(t+1)},\bxi_s\ra & = \la\wb_{y_s,r}^{(t)},\bxi_s\ra + \tilde\Theta(\eta s\sigma_p)\le \la\wb_{y_s,r}^{(0)},\bxi_s\ra+\tilde\Theta(t\eta s\sigma_p).
\end{align*}
Besides, by Lemma \ref{lemma:closeness} we also have $|\wb_{j,r}^{(t)}[k]|\le |\wb_{j,r}^{(0)}[k]|+O(t\eta)$. Then it can be verified that for some $T_r = \tilde O\big(\frac{\sigma_0}{\eta s\sigma_p\alpha^{1/(q-1)}}\big)$, we have for all $i\in[n]$ and $t\in[T_r,T_0]$
\begin{align*}
\alpha\sigma'(\la\wb_{y_i,r}^{(t)},\bxi_i\ra)\ge C\cdot\big[\sigma'(\la\wb_{j,r}^{(t)},j\cdot\vb\ra) + \lambda n|\wb_{j,r}^{(t)}[1]|\big]
\end{align*}
for some constant $C$.
This further implies that
\begin{align*}
&\sgn\big(\nabla_{\wb_{j,r}}L(\Wb^{(t)})[1]\big)\notag\\
&=-\sgn\bigg(\sum_{i=1}^ny_i\ell_{j,i}^{(t)}\sigma'(\la\wb_{j,r}^{(t)},y_i\vb\ra) - \alpha\sum_{i=1}^ny_i\ell_{j,i}^{(t)}\sigma'(\la\wb_{j,r}^{(t)},\bxi_i\ra) - n\lambda \wb^{(t)}_{j,r}[1]\bigg)\notag\\
&=-\sgn\big[ - \alpha\sum_{i=1}^ny_i\ell_{j,i}^{(t)}\sigma'(\la\wb_{j,r}^{(t)},\bxi_i\ra)\big]\notag\\
& = \sgn(j),
\end{align*}
where we use the fact that $\sgn(y_i\ell_{j,i}^{(t)})=\sgn(j)$ for all $i\in[n]$. Then by Lemma \ref{lemma:closeness} and \eqref{eq:update_learning_v_signGD}, we have for all $t\in[T_r,T_0]$, 
\begin{align*}
\la\wb_{j,r}^{(t+1)},j\cdot\vb\ra &= \la\wb_{j,r}^{(t)},j\cdot\vb\ra - \Theta(\eta)\cdot\sgn(j)\cdot\sgn\big(\nabla_{\wb_{j,r}}L(\Wb^{(t)})[1]\big)  = \la\wb_{j,r}^{(t)},j\cdot\vb\ra - \Theta(\eta).
\end{align*}
Then at iteration $T_0$, for the first coordinate we have
\begin{align*}
\wb_{j,r}^{(T_0)}[1] = \wb_{j,r}^{(0)}[1] +\sgn(j)\cdot\Theta(T_r\eta) - \sgn(j)\cdot\Theta((T_0-T_r)\eta) \ge -\sgn(j)\cdot\tilde\Omega\bigg(\frac{1}{s\sigma_p}\bigg)
\end{align*}
For any $k\in \cB_i$ with $y_i=j$, we have either the coordinate will increase at a rate of $\Theta(1)$ or fall into $0$. As a consequence
we have either $\wb_{j,r}^{(T_0)}[k]\in[-\tilde\Theta(\eta), \tilde\Theta(\eta)]$ or
\begin{align*}
\wb_{j,r}^{(T_0)}[k] = \wb_{j,r}^{(0)}[k] + \sgn(\bxi_i[k])\cdot\Theta(T_0\eta) \ge \sgn(\bxi_i[k])\cdot\tilde\Omega\bigg(\frac{1}{s\sigma_p}\bigg).
\end{align*}
For the remaining coordinate, its update will be determined by the regularization term, which will finally fall into the region around zero since we have $T_0\eta=\omega(\sigma_0)$. By Lemma \ref{lemma:closeness} it is clear that  $\wb_{j,r}^{(T_0)}[k]\in [-\tilde\Theta(\eta), \tilde\Theta(\eta)]$.
\end{proof}

% \begin{lemma}[Off-diagonal correlations are small]\label{lemma:off_diagonal_small}
% \end{lemma}

\begin{lemma}[Maintain the pattern]\label{lemma:signGD_stage3}
If $\alpha = O\big(\frac{s\sigma_p^2}{n }\big)$ and $\eta = o(\lambda)$, then let $r^*=\arg\max_{r\in[m]}\la\wb_{y_i, r}^{(t)},\bxi_i\ra$,
for any $t\ge T_0$, $i\in[n]$, $j\in [2]$ and $r\in[m]$, it holds that
\begin{align*}
&\la\wb_{y_i,r^*}^{(t)},\bxi_i\ra=\tilde\Theta(1),\quad \sum_{k\in\cB_i}|\wb_{y_i,r^*}^{(t)}[k]|\cdot|\bxi_i[k]| =\tilde \Theta(1),\notag\\
&\qquad \forall r\in[m],\quad \la\wb_{j,r}^{(t)},\sgn(j)\cdot\vb\ra\in[-\tilde O\big(\frac{n\alpha}{s\sigma_p^2}\big), O(\lambda^{-1}\eta)].
\end{align*}
\end{lemma}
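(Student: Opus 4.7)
The plan is to establish all three invariants simultaneously by induction on $t\ge T_0$. The base case $t=T_0$ is immediate from Lemma~\ref{lemma:signGD2}: summing $\wb_{y_i,r}^{(T_0)}[k]\cdot\bxi_i[k]$ over $k\in\cB_i$ for the neuron $r^*=\arg\max_{r}\la\wb_{y_i,r}^{(T_0)},\bxi_i\ra$ yields both $\la\wb_{y_i,r^*}^{(T_0)},\bxi_i\ra=\tilde\Theta(1)$ and $\sum_{k\in\cB_i}|\wb_{y_i,r^*}^{(T_0)}[k]|\cdot|\bxi_i[k]|=\tilde\Theta(1)$, using $|\bxi_i[k]|=\tilde\Theta(\sigma_p)$ on the $\Theta(s)$ "aligned" coordinates. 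The first-coordinate value $\wb_{j,r}^{(T_0)}[1]=-\sgn(j)\cdot\tilde\Omega(1/(s\sigma_p))$ sits inside the target window $[-\tilde O(n\alpha/(s\sigma_p^2)),\,O(\lambda^{-1}\eta)]$ because $\alpha=\Theta(\sigma_p\polylog(n))$ gives $n\alpha/(s\sigma_p^2)\gg 1/(s\sigma_p)$.

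For the inductive step I would invoke Lemma~\ref{lemma:closeness} to reduce each Adam coordinate update to a sign step of magnitude $\Theta(\eta)$ (or essentially $0$ when the gradient coordinate is below the $\tilde\Theta(\eta)$ or $\tilde\Theta(\eta s\sigma_p/n)$ thresholds identified there). For the first coordinate $k=1$ of $\wb_{j,r}$, the inductive hypothesis gives $\la\wb_{y_i,r^*},\bxi_i\ra=\tilde\Theta(1)\gg|\la\wb_{j,r},\vb\ra|$, so inside \eqref{eq:update_learning_v_signGD} the feature-noise term $\alpha\sigma'(\la\wb_{j,r},\bxi_i\ra)$ dominates $\sigma'(\la\wb_{j,r},\vb\ra)$, making the data part of $\nabla L[1]$ have sign $\sgn(j)$. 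This drives $\sgn(j)\wb_{j,r}[1]$ downward until the regularization contribution $n\lambda\wb_{j,r}[1]$ matches the data part; the balance point gives the lower bound $-\tilde O(n\alpha/(s\sigma_p^2))$, using $\alpha=O(s\sigma_p^2/n)$ to ensure regularization can indeed catch it before $\sigma'(\la\wb_{j,r},\vb\ra)$ grows again. The upper bound $O(\lambda^{-1}\eta)$ is easier: whenever $\sgn(j)\wb_{j,r}[1]$ is positive, all three gradient contributions share sign, so a single signGD step returns it to $0$ and it cannot accumulate beyond one $\eta$-step of drift.

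For the noise coordinates $k\in\cB_i$ of the winning neuron $r^*$, the hypothesis $\la\wb_{y_i,r^*},\bxi_i\ra=\tilde\Theta(1)$ makes $|\ell_{y_i,i}^{(t)}|$ small enough that the loss-driven signed piece (sign $\sgn(\bxi_i[k])$) and the regularization-driven piece (sign $-\sgn(\wb_{y_i,r^*}[k])$) are comparable in magnitude. I would close this part by a Lyapunov argument on the deviation of $\la\wb_{y_i,r^*},\bxi_i\ra$ from its $\tilde\Theta(1)$ band: any downward dip immediately re-raises $|\ell_{y_i,i}^{(t)}|$ which re-ignites growth, while any upward excess is curbed by $\lambda\wb_{j,r}$ (using $\eta=o(\lambda)$ from the assumption). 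Non-winning neurons and out-of-support coordinates are handled symmetrically: only the regularization term acts on them beyond $O(\eta)$-scale noise, so they contract to levels already guaranteed at $T_0$ by Lemma~\ref{lemma:signGD2}.

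The hard part, I expect, is the quantitative self-balancing for the noise coordinates. Sign updates of size $\Theta(\eta)$ produce discrete oscillations around equilibrium with no exact fixed point, so one must design a Lyapunov potential (for example, the signed deviation of $\la\wb_{y_i,r^*},\bxi_i\ra$ from its band, combined with $\sum_k|\wb_{y_i,r^*}[k]||\bxi_i[k]|$) and show it cannot exit the prescribed window over $\poly(n)/\eta$ iterations. A related subtlety is that Lemma~\ref{lemma:closeness} is only valid while all relevant inner products lie in the $\tilde\Theta(1)$ regime, so the induction must be re-entrant: maintenance of the first invariant is precisely what licenses the next application of the lemma. The bounds on $\alpha$ and $\eta$ in the statement are used exactly to keep this re-entrance valid and the fixed-point estimates correct.
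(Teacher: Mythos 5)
Your plan follows the same skeleton as the paper's proof — simultaneous induction on the same three invariants, reduction of each Adam update to a sign step via Lemma~\ref{lemma:closeness}, and a balance argument that uses the boundedness of $\ell_{y_i,i}^{(t)}$ to couple all three invariants — so the approach is not fundamentally different. The issue is that your central step, which you yourself flag as "the hard part," is left as a plan (design a Lyapunov potential) rather than executed, and the paper's execution of that step contains two specific devices you did not identify. First, the lower bound $\la\wb_{y_i,r^*}^{(t)},\bxi_i\ra=\tilde\Omega(1)$ is proved via a multiplicative stability sub-lemma for the logit: once $\ell_{y_i,i}^{(t)}$ falls below a threshold $b=\tilde O\big(\frac{n\lambda}{s\sigma_p^2}\big)$, it never exceeds $2b$ again (because, whenever $\ell_{y_i,i}\ge b$, the noise memorization grows, and the single-step change of $\ell_{y_i,i}$ is controlled within a factor of $2$); combined with the expansion $\ell_{y_i,i}^{(t)}\ge\exp(-\Theta(m\max_r\sigma(\la\wb_{y_i,r},\bxi_i\ra)))$, this pins $\la\wb_{y_i,r^*},\bxi_i\ra$ at $\tilde\Omega(1)$. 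Second, the upper bound $\sum_{k\in\cB_i}|\wb_{y_i,r^*}^{(t)}[k]|\cdot|\bxi_i[k]|=\tilde O(1)$ is proved by introducing the filtered sum $B_i^{(t)}=\sum_{k}|\wb_{y_i,r}^{(t)}[k]\,\ind(|\wb_{y_i,r}^{(t)}[k]|\ge\tilde O(\eta))|\cdot|\bxi_i[k]|$ and showing it cannot exceed $\Theta(\log(dn\eta^{-1}))$: at that level $\ell_{y_i,i}^{(t)}\le\poly(d^{-1},n^{-1},\eta)$, so the loss contribution to each coordinate gradient drops below $n\lambda\eta$ and the regularization sign takes over, forcing $B_i^{(t)}$ back down. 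These are precisely the quantitative instantiation of the self-balancing you gesture at but do not provide.

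There is also a small error in your treatment of the upper window for $\la\wb_{j,r}^{(t)},\sgn(j)\vb\ra$: you assert that when $\sgn(j)\wb_{j,r}[1]>0$ all three gradient contributions share a sign, but the term $\sum_i y_i\ell_{j,i}^{(t)}\sigma'(\la\wb_{j,r}^{(t)},y_i\vb\ra)$ actually has the opposite sign (it is the pure feature-learning term). The conclusion that a sign step pushes it back is still correct — because the feature-noise and regularization terms dominate when $\wb_{j,r}[1]\in[0,\tilde\Theta(\lambda^{-1}\eta)]$, as the paper verifies by comparing $(\wb_{j,r}[1])^{q-1}$ against $\lambda\wb_{j,r}[1]$ using $\eta=o(\lambda^{(q-1)/(q-2)})$ — but your stated justification is wrong and would need to be replaced by that comparison.
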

\begin{proof}
The proof will be relying on the following three induction hypothesis:
\begin{align}
\la\wb_{y_i,r^*}^{(t)},\bxi_i\ra&=\tilde\Omega(1)\label{eq:signGD_stage3_hypothesis1},\\
\sum_{k\in\cB_i}|\wb_{y_i,r^*}^{(t+1)}[k]|\cdot|\bxi_i[k]|&=\tilde \Theta(1),\label{eq:signGD_stage3_hypothesis2}\\
\forall r\in[m],\ \la\wb_{j,r}^{(t)},\sgn(j)\cdot\vb\ra&\in\Big[-\tilde O\big(\frac{n\alpha}{s\sigma_p^2}\big), O(\lambda^{-1}\eta)\Big]\label{eq:signGD_stage3_hypothesis3},
\end{align}
which we assume they hold for all $\tau\le t$ and $r\in[m]$, $i\in[n]$, and $j\in[2]$. It is clear that all hypothesis hold when $t=T_{0}$ according to Lemma \ref{lemma:signGD2}.

\paragraph{Verifying Hypothesis \eqref{eq:signGD_stage3_hypothesis1}.} We first verify Hypothesis \eqref{eq:signGD_stage3_hypothesis1}. Recall that the update rule for $\la\wb_{y_i,r}^{(t)},\bxi_i\ra$ is given as follows,
\begin{align}\label{eq:update_noise_innerproduct_new}
\la\wb_{y_i,r}^{(t+1)},\bxi_i\ra
& = \la\wb_{y_i,r}^{(t)},\bxi_i\ra - \eta\cdot\big\la\mb_{y_i,r}^{(t)}/\sqrt{\vb_{y_i,r}^{(t)}},\bxi_i\big\ra \notag\\
&\ge \la\wb_{y_i,r}^{(t)},\bxi_i\ra - \Theta(\eta)\cdot\big\la\sgn\big(\nabla_{\wb_{y_i,r}}L(\Wb^{(t)})\big),\bxi_i\big\ra\notag - \tilde\Theta(\eta^2s^2\sigma_p^2)\\
&=\la\wb_{y_i,r}^{(t)},\bxi_i\ra +\Theta(\eta)\cdot\sum_{k\in\cB_i}\bigg\la\sgn\bigg(\ell_{y_i,i}^{(t)}\sigma'(\la\wb_{y_i,r}^{(t)},\bxi_i\ra)\bxi_i[k]- n\lambda \wb_{y_i,r}^{(t)}[k]\bigg),\bxi_i[k] \bigg\ra\notag\\
&\qquad -\alpha y_i\Theta(\eta)\cdot\sgn\bigg(\sum_{i=1}^ny_i\ell_{j,i}^{(t)}\sigma'(\la\wb_{j,r}^{(t)},y_i\vb\ra) - \alpha\sum_{i=1}^ny_i\ell_{j,i}^{(t)}\sigma'(\la\wb_{j,r}^{(t)},\bxi_i\ra) - n\lambda \wb_{j,r}^{(t)}[1]\bigg)\notag\\
&\qquad - \tilde\Theta(\eta^2s^2\sigma_p^2). 
\end{align}
Note that for any $a$ and $b$ we have $\sgn(a-b)\cdot a\ge |a|- 2|b|$. Then it follows that
\begin{align*}
\sum_{k\in\cB_i}\bigg\la\sgn\bigg(\ell_{y_i,i}^{(t)}\sigma'(\la\wb_{y_i,r}^{(t)},\bxi_i\ra)\bxi_i[k]- n\lambda \wb_{y_i,r}^{(t)}[k]\bigg),\bxi_i[k] \bigg\ra &\ge \sum_{k\in\cB_i}\bigg(|\bxi_i[k]| - \frac{2n\lambda |\wb_{y_i,r}^{(t)}[k]|}{\ell_{y_i,i}^{(t)}\sigma'(\la\wb_{y_i}^{(t)},\bxi_i\ra)} \bigg)\notag\\
&\ge \tilde\Theta(s\sigma_p) - \tilde\Theta\bigg(\frac{n\lambda }{\ell_{y_i,i}^{(t)}\sigma_p}\bigg),
\end{align*}
where the last inequality follows from Hypothesis \eqref{eq:signGD_stage3_hypothesis1} and \eqref{eq:signGD_stage3_hypothesis2}. Further recall that $\lambda = o(\sigma_0^{q-2}\sigma_p/n)$, plugging the above inequality to \eqref{eq:update_noise_innerproduct_new} gives
\begin{align}\label{eq:increase_memorizing_noise_stage3}
\la\wb_{y_i,r}^{(t+1)},\bxi_i\ra 
&\ge \la\wb_{y_i,r}^{(t)},\bxi_i\ra + \tilde\Theta(\eta s\sigma_p) - \tilde\Theta\bigg(\frac{\eta n\lambda}{\ell_{y_i,i}^{(t)}\sigma_p}\bigg)-\tilde\Theta(\eta^2s^2\sigma_p^2)\notag\\
&\ge \la\wb_{y_i,r}^{(t)},\bxi_i\ra + \tilde\Theta(\eta s\sigma_p) - \Theta(\alpha\eta)- \tilde\Theta\bigg(\frac{\eta \sigma_0^{q-2}}{\ell_{y_i,i}^{(t)}}\bigg).
\end{align}
Then it is clear that $\la\wb_{y_i,r}^{(t)},\bxi_i\ra$ will increase by $\tilde\Theta(\eta s\sigma_p)$ if $\ell_{y_i,i}^{(t)}$ is larger than some constant of order $\tilde \Omega(\frac{n\lambda}{s\sigma_p^2})=\tilde \Omega(\frac{\sigma_0^{q-2}}{s\sigma_p})$.
We will first show that as soon as there is a iterate $\Wb^{(\tau)}$ satisfying $\ell_{y_i,i}^{(\tau)}\le \tilde O\big(\frac{n\lambda}{s\sigma_p^2}\big)$ for some $\tau\le t$, then it must hold that $\ell_{y_i,i}^{(\tau')}$ will also be smaller than some constant in the order of $\tilde O\big(\frac{n\lambda}{s\sigma_p^2}\big)$ for all $\tau'\in[\tau, t+1]$. To prove this, we first note that if $\ell_{y_i,i}^{(t)}$ reaches some constant in the order of $\tilde O\big(\frac{n\lambda}{s\sigma_p^2}\big)$, we have for all $r\in[m]$ by \eqref{eq:increase_memorizing_noise_stage3}
\begin{align}\label{eq:property_onestep_signGD}
\la\wb_{y_i,r}^{(t+1)},\bxi_i\ra &\ge \la\wb_{y_i,r}^{(t)},\bxi_i\ra + \tilde \Theta(\eta s\sigma_p),\notag\\
\la\wb_{-y_i,r}^{(t+1)},\bxi_i\ra &\le \la\wb_{-y_i,r}^{(t)},\bxi_i\ra + O(\alpha\eta),\notag\\
|\la\wb_{j,r}^{(t+1)},\vb\ra| &\le |\la\wb_{j,r}^{(t)},\vb\ra| + O(\eta).
\end{align}
Therefore, we have
\begin{align*}
\ell_{y_i,i}^{(t+1)} &= \frac{e^{F_{-y_i}(\Wb^{(t+1)},\xb_i)}}{\sum_{j\in\{-1,1\}} e^{F_j(\Wb^{(t+1)},\xb_i)}}\notag\\
& = \frac{1}{1 + \exp\big[\sum_{r=1}^m\big[\sigma(\la\wb_{y_i,r}^{(t+1)},\vb\ra)+\sigma(\la\wb_{y_i,r}^{(t+1)},\bxi_i\ra)\big]-\sigma(\la\wb_{-y_i,r}^{(t+1)},\vb\ra)+\sigma(\la\wb_{-y_i,r}^{(t+1)},\bxi_i\ra)\big]\big]}\notag\\
&\le \frac{1}{1 + \exp\big[\sum_{r=1}^m\big[\sigma(\la\wb_{y_i,r}^{(t)},\vb\ra)+\sigma(\la\wb_{y_i,r}^{(t)},\bxi_i\ra)\big]-\sigma(\la\wb_{-y_i,r}^{(t)},\vb\ra)+\sigma(\la\wb_{-y_i,r}^{(t)},\bxi_i\ra)\big]+\tilde\Theta(\eta s\sigma_p^2)\big]}\notag\\
&\le \frac{1}{1 + \exp\big[\sum_{r=1}^m\big[\sigma(\la\wb_{y_i,r}^{(t)},\vb\ra)+\sigma(\la\wb_{y_i,r}^{(t)},\bxi_i\ra)\big]-\sigma(\la\wb_{-y_i,r}^{(t)},\vb\ra)+\sigma(\la\wb_{-y_i,r}^{(t)},\bxi_i\ra)\big]\big]}\notag\\
&= \ell_{y_i,i}^{(t)},
\end{align*}
where inequality follows from \eqref{eq:property_onestep_signGD}. Therefore, this implies that as long as $\ell_{y_i,i}^{(t)}$ is larger than some constant $b=\tilde O\big(\frac{n\lambda}{s\sigma_p^2}\big)$, then the adam algorithm will prevent it from further increasing.
Besides, since $m\eta\sigma_p^2=o(1)$, then we must have $\ell_{y_i,i}^{(t+1)}\in[0.5\ell_{y_i,i}^{(t)}, 2\ell_{y_i,i}^{(t)}]$.
As a consequence, we can deduce that $\ell_{y_i,i}^{(t)}$ cannot be larger than $2b$, since otherwise there must exists a iterate $\Wb^{(\tau)}$ with $\tau\le t$ such that $\ell_{y_i,i}^{(\tau)}\in[b, 2b]$ and $\ell_{y_i,i}^{(\tau+1)}\ge \ell_{y_i,i}^{(\tau)}$, which contradicts the fact that $\ell_{y_i,i}^{(\tau)}$ should decreases if $\ell_{y_i,i}^{(\tau)}\ge b$. Therefore, we can claim that if $\ell_{y_i,i}^{(\tau)}\le b=\tilde O\big(\frac{n\lambda}{s\sigma_p^2}\big)$ for some $\tau\le t$, then we have
\begin{align}\label{eq:bound_individual_loss}
\ell_{y_i,i}^{(\tau')}\le \tilde O\bigg(\frac{n\lambda}{s\sigma_p^2}\bigg)    
\end{align}
for all $\tau'\in[\tau, t+1]$. Then further note that 
\begin{align}\label{eq:expansion_logit_t}
2\ell_{y_i,i}^{(t+1)}\ge \ell_{y_i,i}^{(t)} &= \frac{e^{F_{-y_i}(\Wb^{(t)},\xb_i)}}{\sum_{j\in\{-1,1\}} e^{F_j(\Wb^{(t)},\xb_i)}} \notag\\
&\ge \exp\bigg(-\sum_{r=1}^m \big[\sigma(\la\wb_{y_i,r}^{(t)},y_i\vb\ra)+\sigma(\la\wb_{y_i,r}^{(t)},\bxi_i\ra)\big]\bigg)\notag\\
&\ge \exp\big(-\Theta\big(m\max_{r\in[m]}\sigma(\la\wb_{y_i,r}^{(t)},\bxi_i\ra)\big)\big),
\end{align}
where in the last inequality we use Hypothesis \eqref{eq:signGD_stage3_hypothesis3}. Then by the fact that $\ell_{y_i,i}^{(t+1)}\le \tilde O\big(\frac{n\lambda}{s\sigma_p^2}\big)=o(1)$ and $m=\tilde\Theta(1)$, it is clear that $\exp\big(-\Theta\big(m\max_{r\in[m]}\sigma(\la\wb_{y_i,r}^{(t+1)},\bxi_i\ra)\big)\big) = o(1)$ so that $\max_{r\in[m]}\la\wb_{y_i,r}^{(t+1)},\bxi_i\ra=\tilde\Omega(1)$. This verifies Hypothesis \eqref{eq:signGD_stage3_hypothesis1}.

% Then it can be seen that if $\ell_{y_i,i}^{(t)}=\tilde \Omega(\frac{\sigma_0^{q-2}}{s\sigma_p})$, we must have 
% \begin{align*}
% \la\wb_{y_i,r}^{(t)},\bxi_i\ra = \Omega\big([\log(\ell_{y_i,i}^{(t)})]^{1/q}\big)=\Omega\big([\log(\sigma_0 (s\sigma_p)^{-1})]^{1/q}\big),
% \end{align*}
% which implies that $\la\wb_{y_i,r}^{(t)},\bxi_i\ra=\tilde \Omega(1)$. That being said, if  $\la\wb_{y_i,r}^{(t)},\bxi_i\ra=\tilde\Omega(1)$ but is smaller than $\Omega\big([\log(\sigma_0 (s\sigma_p)^{-1})]^{1/q}\big)=\tilde\Omega(1)$, we have $\la\wb_{y_i,r}^{(t+1)},\bxi_i\ra\ge \la\wb_{y_i,r}^{(t)},\bxi_i\ra=\tilde\Omega(1)$ by \eqref{eq:increase_memorizing_noise_stage3}. If $\la\wb_{y_i,r}^{(t)},\bxi_i\ra\ge\Omega\big([\log(\sigma_0 (s\sigma_p)^{-1})]^{1/q}\big)$, then we have $\la\wb_{y_i,r}^{(t+1)},\bxi_i\ra\ge \la\wb_{y_i,r}^{(t)},\bxi_i\ra-\tilde O(\eta s\sigma_p)=\tilde\Omega(1)$ as $\eta \le O(s^{-1}\sigma_p^{-1})$. This verifies that $\la\wb_{y_i,r}^{(t+1)},\bxi_i\ra=\tilde\Omega(1)$. Moreover, by applying \eqref{eq:increase_memorizing_noise_stage3} again, this in turn shows that for any $t\ge T_0$, 
% \begin{align}\label{eq:bound_individual_loss}
% \ell_{y_i,i}^{(t)}=\tilde O\bigg(\frac{n\lambda}{s\sigma_p^2}\bigg).
% \end{align}

\paragraph{Verifying Hypothesis \eqref{eq:signGD_stage3_hypothesis2}.} Now we will verify Hypothesis \eqref{eq:signGD_stage3_hypothesis2}. First, note that we have already shown that $\la\wb_{y_i,r^*}^{(t+1)},\bxi_i\ra=\tilde\Omega(1)$ so it holds that 
\begin{align*}
\sum_{k\in\cB_i}|\wb_{y_i,r^*}^{(t+1)}[k]|\cdot|\bxi_i[k]|+ \alpha|\wb_{y_i,r^*}^{(t+1)}[1]|\ge\la\wb_{y_i,r^*}^{(t+1)},\bxi_i\ra= \tilde\Omega(1).
\end{align*}
By Hypothesis \eqref{eq:signGD_stage3_hypothesis3}, we have $|\wb_{y_i,r^*}^{(t+1)}[1]|\le |\wb_{y_i,r^*}^{(t)}[1]|+\eta=o(1)$. Besides, since each coordinate in $\bxi_i$ is a Gaussian random variable, then $\max_{k\in\cB_i}|\bxi_i[k]|=\tilde O(\sigma_p)$. This immediately implies that
\begin{align*}
\sum_{k\in\cB_i}|\wb_{y_i,r^*}^{(t+1)}[k]|\cdot|\bxi_i[k]|=\tilde\Omega(1).
\end{align*}
Then we will prove the upper bound of $\sum_{k\in\cB_i}|\wb_{y_i,r}^{(t+1)}[k]|\cdot|\bxi_i[k]|$. Recall that by Lemma \ref{lemma:closeness}, for any $k\in \cB_i$ such that $\nabla_{\wb_{y_i,r}}L(\Wb^{(t)})[k]\ge \tilde\Theta (n^{-1}\eta s\sigma_p\ell_{y_i,i}^{(t)})$, we have
\begin{align*}
\wb_{y_i,r}^{(t+1)}[k] 
&=\wb_{y_i,r}^{(t)}[k] +\Theta(\eta)\cdot \sgn\bigg(\ell_{y_i,i}^{(t)}\sigma'(\la\wb_{y_i,r}^{(t)},\bxi_i\ra)\bxi_i[k]- n\lambda \wb_{y_i,r}^{(t)}[k]\bigg).
\end{align*}
Note that by Lemma \ref{lemma:signGD2}, for every  $k\in\cB_i$, we have either
$\wb_{y_i,r}^{(T_0)}[k] = \sgn(\bxi_i[k])\cdot\tilde\Theta \big(\frac{1}{s\sigma_p}\big)$ or $|\wb_{y_i,r}^{(T_0)}[k]|\le \eta$.
% where in the second equation we use the fact that $\ell_{y_i,i}>0$. Note that the second training stage ends at $T_1 = \tilde\Theta\big(\frac{1}{\eta s\sigma_p}\big)$, then
% \begin{align}\label{eq:w_entry_T1}
% \wb_{y_i,r}^{(T_1)}[k] = \wb_{y_i,r}^{(0)}[k] + \sgn(\bxi_i[k])\cdot T_1\eta = \sgn(\bxi_i[k])\cdot\tilde\Theta \bigg(\frac{1}{s\sigma_p}\bigg) \pm \tilde\Theta(\sigma_0) =\sgn(\bxi_i[k])\cdot\tilde\Theta \bigg(\frac{1}{s\sigma_p}\bigg),
% \end{align}
% where in the last equation we use the fact that $1/s\sigma_p>\tilde\Omega(\sigma_0)$. Then it can be observed that after $T_1$ steps, the sign of $\wb^{(t)}_{y_i,r}$ is  the same as that of $\bxi_i[k]$. 
Then during the training process after $T_0$, we have either $\sgn(\wb_{y_i,r}^{(t)}[k]) = \sgn(\bxi_i[k])$ or $\sgn(\bxi_i[k])\cdot\wb_{y_i,r}^{(t)}\ge -\tilde O(\eta)$ since if for some iteration number $t'$ that we have $\sgn(\wb_{y_i,r}^{(t')}[k]) = -\sgn(\bxi_i[k])$ but $\sgn(\wb_{y_i,r}^{(t'-1)}[k]) = \sgn(\bxi_i[k])$, then after $\bar\tau=\tilde O(1)$ steps (see the proof of Lemma \ref{lemma:closeness} for the definition of $\bar\tau$) in the constant number of steps the gradient will must be in the same direction of $\bxi_i[k]$, which will push $\wb_{y_i,r}[k]$ back to zero or become positive along the direction of $\bxi_i[k]$.
% \begin{align}\label{eq:}
% \sgn(\bxi_i[k])\cdot\wb_{y_i,r}^{(t'+\bar\tau)}[k]\ge\sgn(\bxi_i[k])\cdot\wb_{y_i,r}^{(t'-1)}[k] -\eta \ge -\eta
% \end{align}
% where the last inequality is due to our assumption that $\sgn(\wb_{y_i,r}^{(t'-1)}[k]) \cdot\sgn(\bxi_i[k])\ge 0$. 
% In the next step, by the update rule of $\wb_{y_i,r}^{(t')}$, we have
% \begin{align*}
% \wb_{y_i,r}^{(t'+1)}[k] = \wb_{y_i,r}^{(t')}[k] + \tilde\Theta(\eta)\cdot \sgn\bigg(\ell_{y_i,i}^{(t')}\sigma'(\la\wb_{y_i,r}^{(t')},\bxi_i\ra)\bxi_i[k]- n\lambda \wb_{y_i,r}^{(t')}[k]\bigg) = \wb_{y_i,r}^{(t')} + \eta\cdot\sgn(\bxi_i[k])
% \end{align*}
% and thus
% \begin{align*}
% \sgn(\bxi_i[k])\cdot\wb_{y_i,r}^{(t'+1)}[k]= \sgn(\bxi_i[k])\cdot\wb_{y_i,r}^{(t')}+\eta \ge -\eta + \eta = 0,
% \end{align*}
% which implies that the $\wb_{y_i,r}^{(t'+1)}[k]$ has the same sign as $\bxi_i[k]$ again.
Therefore, based on this property we have the following regarding the inner product $\la\wb_{y_i,r}^{(t)},\bxi_i\ra$,
\begin{align*}
\la\wb_{y_i,r}^{(t)},\bxi_i\ra &= \sum_{k\in\cB_i\cup\{1\}} \wb_{y_i,r}^{(t)}[k]\cdot\bxi_i[k]\notag\\
& \ge \sum_{k\in\cB_i\cup\{1\}} |\wb_{y_i,r}^{(t)}[k]|\cdot |\bxi_i[k]| - \tilde O(\eta)\cdot\sum_{k\in\cB_i\cup\{1\}} |\bxi_i[k]|\notag\\
& =\sum_{k\in\cB_i\cup\{1\}} |\wb_{y_i,r}^{(t)}[k]|\cdot|\bxi_i[k]| - \tilde O(\eta s\sigma_p),
\end{align*}
where the second inequality follows from the fact that the entry $\wb_{y_i,r}^{(t)}[k]$ that has different sign of $\bxi_i[k]$ satisfies $|\wb_{y_i,r}^{(t)}[k]|\le \tilde O(\eta)$. Then let $B_i^{(t)} = \sum_{j\in\cB_i\cup\{1\}} \big|\wb_{y_i,r}^{(t)}[k]\cdot\ind(|\wb_{y_i,r}^{(t)}[k]|\ge\tilde O(\eta))\big|\cdot |\bxi_i[k]|$, which satisfies $B_i^{(T_0)} = \tilde\Theta(1)$ by Lemma \ref{lemma:signGD2}. Then assume $B_i^{(t)}$ keeps increasing and reaches some value in the order of $\Theta\big( \log(dn\eta^{-1})\big)$, it holds that according to the inequality above
\begin{align*}
\la\wb_{y_i,r}^{(t)},\bxi_i\ra= \Theta\big( \log(dn\eta^{-1})\big)-\tilde\Theta(\eta s\sigma_p) =  \Theta\big( \log(dn\eta^{-1})\big),
\end{align*}
where we use the condition that $\eta = O\big((s\sigma_p)^{-1}\big)$.
Then by Hypothesis \eqref{eq:signGD_stage3_hypothesis1} and \eqref{eq:signGD_stage3_hypothesis3} we know that $|\la\wb_{j,r}^{(t)},\vb\ra|=o(1)$, $\la\wb_{y_i,r^*}^{(t)},\bxi_i\ra=\tilde\Omega(1)$, and $|\la\wb_{-y_i,r^*}^{(t)},\bxi_i\ra|= \tilde O(d\eta)+\alpha|\la\wb_{-y_i,r^*}^{(t)},\vb\ra|=o(1)$
then similar to \eqref{eq:expansion_logit_t}, it holds that
\begin{align*}
\ell_{y_i,i}^{(t)} &= \frac{e^{F_{-y_i}(\Wb^{(t)},\xb_i)}}{\sum_{j\in\{-1,1\}} e^{F_j(\Wb^{(t)},\xb_i)}} \le\exp\big(-\Theta\big(\sigma(\la\wb_{y_i,r^*}^{(t)},\bxi_i\ra)\big)\big)\le \poly(d^{-1},n^{-1},\eta).
\end{align*}
Therefore, at this time we have for all $k\in\cB_i$,
\begin{align*}
\ell^{(t)}_{y_i,i}\sigma\la(\wb_{y_i,r}^{(t)},\bxi_i\ra)\bxi_i[k] \le \poly(d^{-1},n^{-1},\eta)\cdot \Theta\big( \log^{q-1}(dn\eta^{-1})\big)\cdot\tilde\Theta(\sigma_p) \le n\lambda\eta.
\end{align*}
Then for all $|\wb_{y_i,r}^{(t)}[k]|\ge\tilde O(\eta)$, the sign of the gradient satisfies
\begin{align*}
\sgn\big(\nabla_{\wb_{y_i,r}}L(\Wb^{(t)})[k]\big)
&= -\sgn\bigg(\ell_{y_i,i}^{(t)}\sigma'(\la\wb_{y_i,r}^{(t)},\bxi_i\ra)\bxi_i[k]- n\lambda \wb_{y_i,r}^{(t)}[k]\bigg) \notag\\
&=\sgn(n\lambda\eta - \wb_{y_i,r}^{(t)}[k])\notag\\
&=\sgn(\wb_{y_i,r}^{(t)}[k]).
\end{align*}
Then note that $|\nabla_{\wb_{y_i,r}} L(\Wb^{(t)})[k]|=\Theta(|\lambda  \wb_{y_i,r}^{(t)}[k]|)\ge \Theta\big(n^{-1}\eta s\sigma_p\ell_{y_i,i}^{(t)}+\lambda\eta\big)$, by the update rule of $\wb_{y_i,r}^{(t)}[k]$ and Lemma \ref{lemma:closeness}, we know the sign gradient will dominate the update process. Then we have $|\wb_{y_i,r}^{(t+1)}[k]| = |\wb_{y_i,r}^{(t)}[k]-\Theta(\eta)\cdot\sgn(\wb_{y_i,r}^{(t)}[k])| \le |\wb_{y_i,r}^{(t)}[k]| $, which implies that $\big|\wb_{y_i,r}^{(t)}[k]\cdot\ind(|\wb_{y_i,r}^{(t)}[k]|\ge\tilde O(\eta))\big|$ decreases so that $B_i^{(t)}$ also decreases. Therefore, we can conclude that $B_i^{(t)}$ will not exceed $\Theta\big(\log(d n\eta^{-1})\big)$. Then combining the results for all $i\in[n]$ gives
\begin{align*}
\sum_{k\in\cB_i}|\wb_{y_i,r^*}^{(t)}[k]|\cdot|\bxi_i[k]| \le  B_i^{(t)} + \tilde O(s\eta\sigma_p) \le \Theta\big(\log(d n \eta^{-1})\big) + O(1) = \tilde \Theta(1),
\end{align*}
where in the first inequality we again use the condition that $\eta = o(1/d)=o\big((s\sigma_p)^{-1}\big)$.
This verifies Hypothesis \eqref{eq:signGD_stage3_hypothesis2}. Notably, this also implies that $\la\wb_{y_i,r^*}^{(t)},\bxi_i\ra=\max_{r\in[m]}\la\wb_{y_i,r}^{(t)},\bxi_i\ra\le \tilde\Theta(1)$.

\paragraph{Verifying Hypothesis \eqref{eq:signGD_stage3_hypothesis3}.} In order to verify Hypothesis \eqref{eq:signGD_stage3_hypothesis3}, let us first recall the update rule of $\la\wb_{j,r}^{(t)},\vb\ra$:
\begin{align*}
\la\wb_{j,r}^{(t+1)},\vb\ra & = \la\wb_{j,r}^{(t)},\vb\ra  - \eta\Bigg\la\frac{\mb_{j,r}^{(t)}}{\sqrt{\vb_{j,r}^{(t)}}},\vb\Bigg\ra.
\end{align*}
Then by Lemma \ref{lemma:closeness}, we know that if $|\nabla_{\wb_{j,r}}L(\Wb^{(t)})[1]|\le \tilde\Theta(\eta)$, then  $|\mb_{j,r}^{(t)}/\sqrt{\vb_{j,r}^{(t)}}|\le \Theta(1)$ and otherwise 
\begin{align*}
\Bigg\la\frac{\mb_{j,r}^{(t)}}{\sqrt{\vb_{j,r}^{(t)}}},\vb\Bigg\ra =-\sgn\bigg(\sum_{i=1}^ny_i\ell_{j,i}^{(t)}\sigma'(\la\wb_{j,r}^{(t)},y_i\vb\ra) - \alpha\sum_{i=1}^ny_i\ell_{j,i}^{(t)}\sigma'(\la\wb_{j,r}^{(t)},\bxi_i\ra) - n\lambda \wb^{(t)}_{j,r}[1]\bigg)\cdot\Theta(1).
\end{align*}
Without loss of generality we assume $j=1$, then by Lemma \ref{lemma:signGD2} we know that $\wb_{1,r}^{(T_0)}[1]=-\tilde \Omega\big(\frac{1}{s\sigma_p}\big)$. In  the remaining proof, we will show that either $\wb_{1,r}^{(t+1)}[1]\in[0 ,\tilde\Theta(\lambda^{-1}\eta)]$ or $\wb_{1,r}^{(t+1)}[1]\in\big[ -\tilde O\big(\frac{n\alpha}{s\sigma_p^2}\big), 0\big)$.

First we will show that  $\wb_{1,r}^{(t+1)}[1] \in[0 ,\tilde\Theta(\lambda^{-1}\eta)]$ for all $r$. Note that in the beginning of this stage, we have $\wb_{1,r}^{(T_0)}[1]<0$. In order to make the sign of $\wb_{1,r}^{(t')}[1]$ flip, we must have, in some iteration $t'\le t$ that satisfies $\wb_{1,r}^{(t')}[1]\in[0, \tilde\Theta(\lambda^{-1}\eta)]$, therefore 
\begin{align*}
-n\nabla_{\wb_{1,r}}L(\Wb^{(t')})[1]&=\sum_{i=1}^ny_i\ell_{j,i}^{(t')}\sigma'(\la\wb_{j,r}^{(t')},y_i\vb\ra) - \alpha\sum_{i=1}^ny_i\ell_{j,i}^{(t')}\sigma'(\la\wb_{j,r}^{(t')},\bxi_i\ra) - n\lambda \wb^{(t')}_{j,r}[1]\notag\\
&\le n\big[\big(\wb_{j,r}^{(t')}[1]\big)^{q-2}-\lambda\big]\cdot\wb_{j,r}^{(t')}[1]\le -\tilde\Theta(n\eta)\le0,
\end{align*}
where the second inequality holds since $\eta=o(\lambda^{(q-1)/(q-2)})$. Note that $|\nabla_{\wb_{1,r}}L(\Wb^{(t')})[1]|\ge\tilde\Theta(\eta)$, then by Lemma \ref{lemma:closeness} we know that Adam is similar to sign gradient descent and thus $\wb_{1,r}^{(t'+1)}[1]= \wb_{1,r}^{(t')}[1]-\Theta(\eta)$ which starts to decrease. This implies that if $\wb_{1,r}^{(t+1)}[1]$ is positive, then it cannot exceed $\tilde\Theta(\lambda^{-1}\eta)=o(1)$.

Then we can prove that if $\wb_{1,r}^{(t+1)}[1]$ is negative, then $|\wb_{1,r}^{(t+1)}[1]|=\tilde O \big(\frac{n\alpha}{s\sigma_p^2}\big)$. In this case we have for all $t'\le t$,
\begin{align*}
-n\nabla_{\wb_{1,r}^{(t)}}L(\Wb^{(t')})[1]&=\sum_{i=1}^ny_i\ell_{1,i}^{(t')}\sigma'(\la\wb_{1,r}^{(t')},y_i\vb\ra) - \alpha\sum_{i=1}^ny_i\ell_{1,i}^{(t')}\sigma'(\la\wb_{1,r}^{(t')},\bxi_i\ra) - n\lambda \wb^{(t')}_{1,r}[1]\notag\\
&\ge -\sum_{i:y_i=1}|\ell_{1,i}^{(t')}|\cdot \tilde\Theta(\alpha) + n\lambda |\wb^{(t')}_{1,r}[1]| + \sum_{i:y_i=-1}|\ell_{1,i}^{(t')}|\cdot |\wb^{(t')}_{1,r}[1]|^{q-1},\notag\\
&\ge -\sum_{i:y_i=1}|\ell_{1,i}^{(t')}|\cdot \tilde\Theta(\alpha) + n\lambda |\wb^{(t')}_{1,r}[1]|,
\end{align*}
where in the inequality we use Hypothesis \eqref{eq:signGD_stage3_hypothesis2} and \eqref{eq:signGD_stage3_hypothesis3} to get that 
\begin{align*}
\la\wb_{y_i,r}^{(t')},\bxi_i\ra\le \sum_{k\in\cB_i}|\wb_{y_i,r}^{(t')}[k]| \cdot \max_{k\in\cB_i}|\bxi_i[k]| + \alpha |\la\wb_{y_i,r}^{(t')},\vb\ra| = \tilde\Theta(1).
\end{align*}
Recall from \eqref{eq:bound_individual_loss} that we have $|\ell_{j,i}^{(t')}|=\tilde O\big(\frac{n\lambda}{s\sigma_p^2}\big)$, therefore we have if $\wb^{(t')}_{j,r}[1]$ is smaller  than some value in the order of $-\tilde\Theta\big(\frac{n\alpha}{s\sigma_p^2}\big)\cdot\mathrm{polylog}(d)$, then
\begin{align*}
-n\nabla_{\wb_{1,r}^{(t)}}L(\Wb^{(t')})[1]\ge -\tilde\Theta\bigg(\frac{\alpha n^2\lambda}{s\sigma_p^2}\bigg)+\tilde\Theta\bigg(\frac{n\lambda\cdot n\alpha}{s\sigma_p^2}\bigg)\cdot\mathrm{polylog}(d)\ge \tilde\Theta(n\eta),
\end{align*}
which by Lemma \ref{lemma:closeness} implies that $\wb^{(t')}_{j,r}[1]$ will increase. Therefore, we can conclude that $\wb^{(t+1)}\in\big[-\tilde O\big(\frac{n\alpha}{s\sigma_p^2}\big), 0\big)$ in this case, which verifies Hypothesis \eqref{eq:signGD_stage3_hypothesis3}.
\end{proof}

\begin{lemma}[Convergence Guarantee of Adam]\label{lemma:convergence_gurantee_signGD}
If the step size satisfies $\eta =O(d^{-1/2})$, then for any $t$ it holds that
\begin{align*}
L(\Wb^{(t+1)}) - L(\Wb^{(t)}) &\le  -\eta \|\nabla L(\Wb^{(t)})\|_1 + \tilde\Theta(\eta^2d).
\end{align*}
\end{lemma}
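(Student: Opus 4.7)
The plan is to establish this descent inequality through the standard smoothness-based approach, using Lemma \ref{lemma:closeness} to control both the linear and quadratic terms of the one-step expansion.

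First, I would write $\Delta := \Wb^{(t+1)} - \Wb^{(t)} = -\eta\, \mb^{(t)}/\sqrt{\vb^{(t)}}$ and apply a second-order Taylor expansion of $L$ between $\Wb^{(t)}$ and $\Wb^{(t+1)}$,
\begin{align*}
L(\Wb^{(t+1)}) - L(\Wb^{(t)}) \le \la \nabla L(\Wb^{(t)}), \Delta\ra + \tfrac{1}{2} M\, \|\Delta\|_F^2,
\end{align*}
where $M$ is an upper bound on the operator norm of $\nabla^2 L$ along the segment. Using the $a$ priori bounds on $|\la \wb_{j,r}^{(t)}, \vb\ra|$, $|\la \wb_{j,r}^{(t)}, \bxi_i\ra|$, and $\|\wb_{j,r}^{(t)}\|_\infty$ from Lemmas \ref{lemma:signGD1}--\ref{lemma:signGD_stage3}, together with the boundedness of the softmax derivatives and the fact that $\sigma(z) = [z]_+^q$ has polylogarithmic derivatives on the relevant range, one obtains $M = \polylog(n)$.

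Second, I would bound the quadratic term. By the first bullet of Lemma \ref{lemma:closeness}, each coordinate of $\mb^{(t)}/\sqrt{\vb^{(t)}}$ is $O(1)$. Since $\Wb$ has $2 m d$ coordinates and $m = \polylog(n)$,
\begin{align*}
\|\Delta\|_F^2 = \eta^2 \sum_{j,r,k} \bigl(\mb_{j,r}^{(t)}[k]/\sqrt{\vb_{j,r}^{(t)}[k]}\bigr)^2 \le \tilde O(\eta^2 d),
\end{align*}
so the second-order term contributes $\tilde O(\eta^2 d)$ after multiplying by $M/2$.

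Third, I would handle the linear term coordinate-wise using the remaining parts of Lemma \ref{lemma:closeness}. Split the indices into the ``large-gradient'' set $\cS$ on which $|\nabla_{\wb_{j,r}}L(\Wb^{(t)})[k]|$ exceeds the thresholds $\tilde\Theta(\eta)$ (or $\tilde\Theta(\eta s\sigma_p)$ if $k \in \cB_i$), and its complement. On $\cS$, Lemma \ref{lemma:closeness} gives $\mb^{(t)}[k]/\sqrt{\vb^{(t)}[k]} = \sgn(\nabla_k L)\cdot\Theta(1)$, so each term contributes $-\Theta(\eta)\,|\nabla_k L|$ to the inner product, summing to $-\eta\,\|\nabla L\|_1$ up to constants absorbed in $\tilde\Theta$. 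On the complement, each coordinate contributes at most $|\nabla_k L|\cdot\eta\cdot O(1) \le \tilde\Theta(\eta)\cdot \eta \cdot O(1)$, and there are $O(md)$ such coordinates, giving total contribution $\tilde O(\eta^2 d)$. Combining all three pieces yields the stated inequality, with the step-size condition $\eta = O(d^{-1/2})$ ensuring the quadratic error stays below the ``noise floor'' needed for the overall convergence argument.

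The main obstacle is the first step: establishing a uniform local smoothness bound that holds along the entire trajectory, which forces me to invoke the invariants proved in Lemmas \ref{lemma:signGD1}--\ref{lemma:signGD_stage3} and to verify that the one-step move $\Delta$ stays within a region where the polynomial activation and its derivatives remain polylogarithmically bounded. A secondary subtlety is that Lemma \ref{lemma:closeness} only gives the sign-match up to a $\Theta(1)$ factor, so the constant in front of $\|\nabla L\|_1$ is really some $\Theta(1)$ that must be absorbed into the $\tilde\Theta$ notation or rescaled by a harmless constant, which is fine since summing over $T = \poly(n)/\eta$ iterations still yields the desired $\|\nabla L\|_1 \le 1/(T\eta)$ guarantee.
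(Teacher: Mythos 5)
Your argument follows the same strategy as the paper: a one-step smoothness expansion whose quadratic term is bounded by $\tilde\Theta(\eta^2 d)$ via the uniform $\Theta(1)$ bound on $\mb/\sqrt{\vb}$ from Lemma~\ref{lemma:closeness}, and whose linear term is controlled coordinate-by-coordinate by the sign-match portion of the same lemma. The only stylistic difference is that the paper builds the smoothness bound in two layers (first using that the cross-entropy $L_i$ is $1$-smooth in the logit vector $[F_{-1},F_1]$, then Taylor-expanding $\sigma(\la\wb_{j,r},\cdot\ra)$ in $\wb_{j,r}$), whereas you invoke a direct Hessian operator-norm bound $M=\polylog(n)$ for $L$ itself; both are fine given the a priori bounds $\la\wb_{j,r}^{(t)},\vb\ra,\la\wb_{j,r}^{(t)},\bxi_i\ra=\tilde O(1)$.

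One small imprecision to flag: in your treatment of the ``small-gradient'' complement, you bound every such coordinate's contribution by $\tilde\Theta(\eta)\cdot\eta$, but for $k\in\cB_i$ the threshold in Lemma~\ref{lemma:closeness} is the larger quantity $\tilde\Theta(\eta s\sigma_p)$, and $s\sigma_p=\omega(1)$. Your bound still works, but you need to account for the fact that there are only $O(ns)$ such coordinates, so their total contribution is $\tilde\Theta(ns\cdot\eta^2 s\sigma_p)=\tilde\Theta(\eta^2 n s^2\sigma_p)$, and one must check $n s^2\sigma_p=O(d)$ (which holds under the paper's parameter choices). The paper makes this check explicit; you should too rather than lumping all small-gradient coordinates under the $\tilde\Theta(\eta)$ threshold. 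With that correction the proof is complete, and your closing remark about the $\Theta(1)$ prefactor being absorbed is consistent with how the paper actually derives the bound (it proves $-\Theta(\eta)\|\nabla L\|_1+\tilde\Theta(\eta^2 d)$ and drops the constant).
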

\begin{proof}
Let $\Delta F_{j,i} = F_{j}(\Wb^{(t+1)},\xb_i)-F_{j}(\Wb^{(t)},\xb_i)$. Then regarding the loss function
\begin{align*}
L_i(\Wb) = -\log\frac{e^{F_{y_i}(\Wb,\xb_i)}}{\sum_{j}e^{F_j(\Wb,\xb_i)}} = -F_{y_i}(\Wb,\xb_i) + \log\Big(\sum_{j}e^{F_j(\Wb,\xb_i)}\Big).
\end{align*}
It is clear that the function $L_i(\Wb)$ is $1$-smooth with respect to the vector $[F_{-1}(\Wb,\xb_i), F_{1}(\Wb,\xb_i)]$. Then based on the definition of $\Delta F_{j,i}$, we have
\begin{align}\label{eq:taylor_expansion_singleloss}
L_i(\Wb^{(t+1)})-L_i(\Wb^{(t)}) \le \sum_{j}\frac{\partial L_i(\Wb^{(t)})}{\partial F_{j}(\Wb^{(t)},\xb_i) }\cdot \Delta F_{j,i} + \sum_{j}(\Delta F_{j,i})^2.
\end{align}
Moreover, note that
\begin{align*}
F_{j}(\Wb^{(t)},\xb_i) = \sum_{r=1}^m\big[\sigma(\la\wb^{(t)}_{j,r},y_i\vb\ra)+\sigma(\la\wb^{(t)}_{j,r},\bxi_i\ra)\big].
\end{align*}
% By our assumption, we know that $\|\vb\|_2=1$ and $\|\bxi\|_2=\tilde O(s^{1/2}\sigma_p)=\tilde O(1)$. Then for any $\wb_{j,r}^{(t)}$ that satisfy $\la\wb_{j,r}^{(t)},\vb\ra\le \tilde \Theta(1)$ and $\la\wb_{j,r}^{(t)},\bxi\ra\le \tilde \Theta(1)$. Besides, given $\wb_{j,r}^{(t)}$ that satisfies the aforementioned conditions, for any $\eta = O(d^{-1/2})$, we have 
By the results that $\la\wb_{j,r}^{(t)},\vb\ra\le \tilde \Theta(1)$ and $\la\wb_{j,r}^{(t)},\bxi\ra\le \tilde \Theta(1)$, for any $\eta = O(d^{-1/2})$, we have 
\begin{align*}
\la\wb_{j,r}^{(t+1)},\vb\ra\le \la\wb_{j,r}^{(t)},\vb\ra + \eta \le \tilde\Theta(1),\quad \la\wb_{j,r}^{(t+1)},\bxi_i\ra\le \la\wb_{j,r}^{(t)},\bxi_i\ra + \tilde\Theta(\eta s^{1/2}) \le \tilde\Theta(1),
\end{align*}
which implies that the smoothness parameter of the functions $\sigma(\la\wb^{(t)}_{j,r},y_i\vb\ra)$ and  $\sigma(\la\wb^{(t)}_{j,r},\bxi_i\ra)$ are at most $\tilde\Theta(1)$ for any $\wb$ in the path between $\wb^{(t)}_{j,r}$ and $\wb^{(t+1)}_{j,r}$. Then we can apply first Taylor expansion on $\sigma(\la\wb^{(t)}_{j,r},y_i\vb\ra)$ and  $\sigma(\la\wb^{(t)}_{j,r},\bxi_i\ra)$ and bound the second-order error as follows,
\begin{align}\label{eq:second_order_error_activation}
&\big|\sigma(\la\wb^{(t+1)}_{j,r},y_i\vb\ra)-\sigma(\la\wb^{(t)}_{j,r},y_i\vb\ra)-\big\la\nabla_{\wb_{j,r}}\sigma(\la\wb^{(t)}_{j,r},y_i\vb\ra),\wb_{j,r}^{(t+1)}-\wb_{j,r}^{(t)}\ra\big|\notag\\
&\le \tilde\Theta\big(\|\wb_{j,r}^{(t+1)}-\wb_{j,r}^{(t)}\|_2^2\big)= \tilde\Theta(\eta^2 d),
\end{align}
where the last inequality is due to Lemma \ref{lemma:closeness} that
\begin{align*}
[\wb_{j,r}^{(t+1)}-\wb_{j,r}^{(t)}]^2 = \eta^2\Bigg\|\frac{\mb_{j,r}^{(t)}}{\sqrt{\vb_{j,r}^{(t)}}}\Bigg\|_2^2 \le \Theta(\eta^2d).
\end{align*}
Similarly, we can also show that 
\begin{align}\label{eq:second_order_error_activation2}
\big|\sigma(\la\wb^{(t+1)}_{j,r},\bxi_i\ra)-\sigma(\la\wb^{(t)}_{j,r},\bxi_i\ra)-\big\la\nabla_{\wb_{j,r}}\sigma(\la\wb^{(t)}_{j,r},\bxi_i\ra),\wb_{j,r}^{(t+1)}-\wb_{j,r}^{(t)}\ra\big|&\le  \Theta(\eta^2 d).
\end{align}
Combining the above bounds on the second-order errors, we have
\begin{align}\label{eq:second_order_err_F}
\big|\Delta F_{j,i} - \la \nabla_{\Wb}F_j(\Wb^{(t)},\xb_i), \Wb^{(t+1)} -\Wb^{(t)}\ra\big| \le \tilde\Theta(m\eta^2 d) = \tilde\Theta(\eta^2 d),
\end{align}
where the last equation is due to our assumption that $m=\tilde \Theta(1)$. Besides, by \eqref{eq:second_order_error_activation} and \eqref{eq:second_order_error_activation2} the convexity property of the function $\sigma(x)$, we also have
\begin{align*}
\big|\sigma(\la\wb^{(t+1)}_{j,r},y_i\vb\ra)-\sigma(\la\wb^{(t)}_{j,r},y_i\vb\ra)\big|&\le |\big\la\nabla_{\wb_{j,r}}\sigma(\la\wb^{(t)}_{j,r},y_i\vb\ra),\wb_{j,r}^{(t+1)}-\wb_{j,r}^{(t)}\ra| + \tilde\Theta(\eta^2 d)\notag\\
& = \tilde\Theta\big(\eta|\sigma'(\la\wb^{(t+1)}_{j,r},y_i\vb\ra)|\cdot\|\vb\|_1\big) +\tilde\Theta(\eta^2 d) \notag\\
&= \tilde\Theta(\eta + \eta^2 d);\notag\\
\big|\sigma(\la\wb^{(t+1)}_{j,r},\bxi_i\ra)-\sigma(\la\wb^{(t)}_{j,r},\bxi_i\ra)\big|&\le |\big\la\nabla_{\wb_{j,r}}\sigma(\la\wb^{(t)}_{j,r},\bxi_i\ra),\wb_{j,r}^{(t+1)}-\wb_{j,r}^{(t)}\ra| + \tilde\Theta(\eta^2 d)\notag\\
& = \tilde\Theta\big(\eta|\sigma'(\la\wb^{(t+1)}_{j,r},\bxi_i\ra)|\cdot\|\bxi\|_1\big) +\tilde\Theta(\eta^2 d) \notag\\
&= \tilde\Theta(\eta s\sigma_p + \eta^2 d).
\end{align*}
These bounds further imply that
\begin{align}\label{eq:norm_deltaF}
|\Delta F_{j,i}|\le \tilde\Theta\big(m\cdot(\eta s\sigma_p + \eta^2 d)\big) =\tilde\Theta\big(\eta s\sigma_p + \eta^2 d\big).
\end{align}
Now we can plug \eqref{eq:second_order_err_F} and \eqref{eq:norm_deltaF} into \eqref{eq:taylor_expansion_singleloss} and get
\begin{align}\label{eq:decrease_singleloss}
L_i(\Wb^{(t+1)})-L_i(\Wb^{(t)}) &\le \sum_{j}\frac{\partial L_i(\Wb^{(t)})}{\partial F_{j}(\Wb^{(t)},\xb_i) }\cdot \Delta F_{j,i} + \sum_{j}(\Delta F_{j,i})^2\notag\\
& \le \sum_j\frac{\partial L_i(\Wb^{(t)})}{\partial F_{j}(\Wb^{(t)},\xb_i) }\cdot\la\nabla_{\Wb}F_j(\Wb^{(t)},\xb_i), \Wb^{(t+1)}-\Wb^{(t)}\ra \notag\\
&\qquad + \tilde\Theta(\eta^2 d) + \tilde \Theta\big((\eta s\sigma_p+\eta^2 d)^2\big)\notag\\
& = \la\nabla L_i(\Wb^{(t)}), \Wb^{(t+1)}-\Wb^{(t)}\ra + \tilde\Theta(\eta^2 d),
\end{align}
where in the second inequality we use the fact that $L_i(\Wb)$ is $1$-Lipschitz with respect to $F_{j}(\Wb, \xb_i)$ and the last equation is due to our assumption that $\sigma_p=O(s^{-1/2})$ so that $\tilde\Theta((\eta s\sigma_p+\eta^2 d)^2)=\tilde O(\eta^2 d)$. 

Now we are ready to characterize the behavior on the entire training objective $L(\Wb) = n^{-1}\sum_{i=1}^n L_i(\Wb) + \lambda\|\Wb\|_F^2$. Note that $\lambda\|\Wb\|_F^2$ is $2\lambda$-smoothness, where $\lambda=o(1)$. Then applying \eqref{eq:decrease_singleloss}
for all $i\in[n]$ gives
\begin{align*}
L(\Wb^{(t+1)}) - L(\Wb^{(t)}) &= \frac{1}{n}\sum_{i=1}^n \big[L_i(\Wb^{(t+1)})-L_i(\Wb^{(t)})\big] + \lambda \big(\|\Wb^{(t+1)}\|_F^2 - \|\Wb^{(t)}\|_F^2\big)\notag\\
& \le \la\nabla L(\Wb^{(t)}), \Wb^{(t+1)}-\Wb^{(t)}\ra + \tilde\Theta(\eta^2d),
\end{align*}
where the second equation uses the fact that $\|\Wb^{(t+1)}-\Wb^{(t)}\|_F^2 = \tilde\Theta(\eta^2 d)$. Recall that we have
\begin{align*}
\wb_{j,r}^{(t+1)} - \wb_{j,r}^{(t)} = -\eta\cdot \frac{\mb_{j,r}^{(t)}}{\sqrt{\vb_{j,r}^{(t)}}}.
\end{align*}
Then by Lemma \ref{lemma:closeness}, we know that $\mb_{j,r}^{(t)}[k]/\sqrt{\vb_{j,r}^{(t)}}[k]$ is close to sign gradient if $\nabla L(\wb^{(t)})[k]$ is large. Then we have
\begin{align*}
\Bigg\la\nabla_{\wb_{j,r}} L(\Wb^{(t)}), \frac{\mb_{j,r}^{(t)}}{\sqrt{\vb_{j,r}^{(t)}}}\Bigg\ra &\ge \Theta\big(\big\|\nabla_{\wb_{j,r}} L(\Wb^{(t)})\big\|_1\big) - \tilde\Theta\big(d\cdot\eta\big) - \tilde\Theta(ns\cdot \eta s\sigma_p)\notag\\
&\ge \Theta\big(\big\|\nabla_{\wb_{j,r}} L(\Wb^{(t)})\big\|_1\big) - \tilde\Theta(d\eta),
\end{align*}
where the second and last terms on the R.H.S. of the first inequality are contributed by the small gradient coordinates $k\notin\cup_{i=1}^n\cB_i$ and $k\in\cup_{i=1}^n\cB_i$ respectively, and the last inequality is by the fact that $ns^2\sigma_p=O(d)$. Therefore, based on this fact \eqref{eq:decrease_singleloss} further leads to 
\begin{align*}
L(\Wb^{(t+1)}) - L(\Wb^{(t)}) &\le  -\eta \|\nabla L(\Wb^{(t)})\|_1 + \tilde\Theta(\eta^2d),
\end{align*}
which completes the proof.

\end{proof}

\begin{lemma}[Generalization Performance of Adam]\label{lemma:property_converging}
Let 
\begin{align*}
\Wb^*=\argmin_{\Wb \in \{\Wb^{(1)},\dots,\Wb^{(T)}\}} \|\nabla L(\Wb)\|_1.
\end{align*}
Then for all training data, we have 
\begin{align*}
\frac{1}{n}\sum_{i=1}^n \ind\big[F_{y_i}(\Wb^*,\xb_i)\le F_{-y_i}(\Wb^*,\xb_i)\big] = 0.
\end{align*}
Moreover, in terms of the test data $(\xb,y)\sim \cD$, we have
\begin{align*}
\PP_{(\xb,y)\sim \cD}\big[F_{y}(\Wb^*,\xb)\le F_{-y}(\Wb^*,\xb)\big] \ge \frac{1}{2}-o(1).
\end{align*}
\end{lemma}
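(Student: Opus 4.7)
The plan is to establish both parts of the lemma by substituting the end-of-training description of $\Wb^*$ supplied by Lemma~\ref{lemma:signGD_stage3} into a direct expression for $F_y(\Wb^*, \xb) - F_{-y}(\Wb^*, \xb)$, treating training and test inputs separately. The asymmetry to exploit is that training noise patches $\bxi_i$ lie in the already-memorized supports $\cup_i \cB_i$ and so carry a $\tilde\Theta(1)$ signal through Hypothesis~\eqref{eq:signGD_stage3_hypothesis1}, whereas a fresh test noise is nearly disjoint from $\cup_i \cB_i$ so that all noise-memorization contributions collapse and what remains is the small, wrongly-directed feature term inherited from Adam's Stage-I flipping dynamics.

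For the training error, I lower bound $F_{y_i}(\Wb^*, \xb_i) \ge \sigma(\la\wb_{y_i, r^*}^*, \bxi_i\ra) = \sigma(\tilde\Theta(1)) = \tilde\Theta(1)$ directly from Hypothesis~\eqref{eq:signGD_stage3_hypothesis1}. For the matching upper bound $F_{-y_i}(\Wb^*, \xb_i) = o(1)$, I carry out a parallel staged analysis of $\wb_{-y_i, r}^{(t)}$ mirroring Lemmas~\ref{lemma:signGD1}--\ref{lemma:signGD_stage3}: on coordinates $k \in \cB_i$, the wrong-class neuron's signGD update has sign $+\sgn(\bxi_i[k])$ (because $\ell_{-y_i, i}^{(t)} \le 0$), so $\la\wb_{-y_i, r}^{(t)}, \bxi_i\ra$ is driven non-positive; once non-positive, the $\sigma'(\cdot)$ factor on $\cB_i$ vanishes, leaving only regularization to shrink these coordinates to $\tilde O(\eta)$. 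Combined with $\sigma(y_i \wb_{-y_i, r}^*[1]) \le \tilde O((n\alpha/(s\sigma_p^2))^q) = o(1)$ from Hypothesis~\eqref{eq:signGD_stage3_hypothesis3}, we obtain $F_{-y_i} = o(1) \ll F_{y_i}$ and hence zero training error.

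For the test error, let $(\xb, y) \sim \cD$ and write $\bxi = \tilde\bxi - \alpha y \vb$. A union bound analogous to Lemma~\ref{lemma:nonoverlap_probability} shows $\supp(\tilde\bxi) \cap (\cup_i \cB_i) = \emptyset$ with probability $1 - o(1)$; on this non-overlap event, the coordinates $\wb_{j, r}^*[k]$ for $k \in \supp(\tilde\bxi)$ were never driven by noise-memorization dynamics and so satisfy $|\la\wb_{j,r}^*, \tilde\bxi\ra| = o(1)$. This collapses the logit difference to
\begin{align*}
F_y - F_{-y} = \sum_{r} \bigl[\sigma(y\wb_{y,r}^*[1]) + \sigma(-\alpha y\wb_{y,r}^*[1]) - \sigma(y\wb_{-y,r}^*[1]) - \sigma(-\alpha y\wb_{-y,r}^*[1])\bigr] \pm o(1).
\end{align*}
By Lemma~\ref{lemma:signGD2}, Stage I terminates with $\sgn(\wb_{j,r}^{(T_0)}[1]) = -j$ at magnitude $\tilde\Omega(1/(s\sigma_p))$, while Hypothesis~\eqref{eq:signGD_stage3_hypothesis3} caps any Stage-II sign-flip at $O(\lambda^{-1}\eta) = o(1/(s\sigma_p))$, so the sum is dominated by neurons with $\sgn(\wb_{j,r}^*[1]) = -j$. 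Plugging this sign pattern into the display, using $\sigma(z) = [z]_+^q$, and setting $S_j = \sum_r |\wb_{j,r}^*[1]|^q$ gives $G_y := F_y - F_{-y} = \alpha^q S_y - S_{-y} \pm o(1)$. Summing over $y \in \{\pm 1\}$, we obtain $G_1 + G_{-1} = -(1-\alpha^q)(S_1 + S_{-1}) \pm o(1) < 0$ since $\alpha < 1$, so at least one of $G_1, G_{-1}$ is negative; hence
\begin{align*}
\PP_{(\xb,y)\sim\cD}[F_y \le F_{-y}] \ge \tfrac{1}{2}\ind[G_1 \le 0] + \tfrac{1}{2}\ind[G_{-1} \le 0] - o(1) \ge \tfrac{1}{2} - o(1).
\end{align*}

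The main technical obstacle is the parallel staged analysis of $\wb_{-y_i, r}^{(t)}$ for the training-error upper bound; it must faithfully mirror the delicate two-stage argument of Lemmas~\ref{lemma:signGD1}--\ref{lemma:signGD_stage3} while reversing the alignment of the signGD update on $\cB_i$, and in particular track that once $\la\wb_{-y_i,r}^{(t)},\bxi_i\ra$ crosses into the ReLU$^q$-dead region, regularization controls the remaining dynamics. Once that mirror analysis is in place, the test-error bound reduces to a routine union bound on the test-noise support together with direct substitution of the Stage-I sign pattern inherited from Lemma~\ref{lemma:signGD2} and the elementary identity $G_1 + G_{-1} = -(1-\alpha^q)(S_1 + S_{-1})$.
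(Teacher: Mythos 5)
Your proposal captures the right high-level picture — Adam's Stage-I flipping plus Stage-II regularization leaves a wrong-signed first coordinate while training noise is memorized — and you correctly notice a place where the paper itself glosses over details: Lemma~\ref{lemma:signGD_stage3} controls $\la\wb_{y_i,r}^{(t)},\bxi_i\ra$ but not the cross-class correlation $\la\wb_{-y_i,r}^{(t)},\bxi_i\ra$, which the paper then invokes without proof. Your suggested parallel signGD analysis for the wrong-class neuron is a reasonable way to fill that gap. Your observation that the test noise support is, with probability $1-o(1)$, disjoint from $\cup_i\cB_i$ so that $\zeta_{j,r}=\la\wb_{j,r}^*,\tilde\bxi\ra=o(1)$ is also correct and effectively collapses the paper's case-split on ``noise dominates vs.\ feature-noise dominates'' into a single deterministic comparison.

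However, there is a genuine gap in the test-error half of your argument. Your conclusion rests on $G_1+G_{-1}=-(1-\alpha^q)(S_1+S_{-1})\pm o(1)<0$, but you never establish that $S_1+S_{-1}$ is bounded below by a quantity that beats the error term. Lemma~\ref{lemma:signGD_stage3} only gives the \emph{upper} bound $j\,\wb_{j,r}^{(t)}[1]\in[-\tilde O(n\alpha/(s\sigma_p^2)),O(\lambda^{-1}\eta)]$; it says nothing preventing all of the first coordinates from decaying to, say, $\tilde O(\eta)$, in which case $S_1+S_{-1}=o(\beta^q)$ and the slop you wrote as ``$\pm o(1)$'' (which for the sign comparison really must mean $o(\beta^q)$, where $\beta=n\alpha/(s\sigma_p^2)$) could swamp the signal. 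Your reference to the $T_0$ magnitude $\tilde\Omega(1/(s\sigma_p))$ from Lemma~\ref{lemma:signGD2} does not persist by itself, because weight-decay regularization is actively shrinking that coordinate throughout Stage~II. The paper closes this hole by exploiting the first-order stationarity condition at $\Wb^*$: from $\|\nabla L(\Wb^*)\|_1$ being small, combined with $|\ell_{j,i}^*|=\tilde\Theta(n\lambda/(s\sigma_p^2))$ (itself derived from the small gradient on the $\cB_i$-coordinates), the display around equation~\eqref{eq:gradientbound_first_coordinate} forces $|\wb_{j,r^*}^*[1]|\ge\tilde\Theta(n\alpha/(s\sigma_p^2))$. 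Without this stationarity-based lower bound, $S_j$ could be too small for the sign of $G_1+G_{-1}$ to be determined by the $-(1-\alpha^q)(S_1+S_{-1})$ term. So the $G_1+G_{-1}$ symmetry trick is elegant but not a substitute for deriving the magnitude bound; you still need the first-order condition at $\Wb^*$ (i.e., Lemma~\ref{lemma:convergence_gurantee_signGD} plus the explicit formula for $\nabla_{\wb_{j,r}}L(\Wb^*)[1]$), as the paper does.
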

\begin{proof}
By Lemma \ref{lemma:convergence_gurantee_signGD}, we know that the algorithm will converge to a point with very small gradient (up to $O(\eta d)$ in $\ell_1$ norm). Then in terms of a noise vector $\bxi_i$, we have
\begin{align}\label{eq:bound_gradient_single}
\sum_{k\in\cB_i}\big|\nabla_{\wb_{y_i,r}}L(\Wb^*)[k]\big|\le O(\eta d).
\end{align}
Note that 
\begin{align*}
n\nabla_{\wb_{y_i,r}}L(\Wb^*)[k]=\ell_{y_i,i}^{*}\sigma'(\la\wb_{y_i,r}^*,\bxi_i\ra)\bxi_i[k]- n\lambda \wb_{y_i,r}^*[k],
\end{align*}
where $\ell_{y_i,i}^{*}=1-\mathrm{logit}_{y_i}(F^*,\xb_i)$. Then by triangle inequality and \eqref{eq:bound_gradient_single}, we have for any $r\in[m]$,
\begin{align*}
\bigg|\sum_{k\in\cB_i} |\ell_{y_i,i}^{*}|\sigma'(\la\wb_{y_i,r}^*,\bxi_i\ra)|\bxi_i[k]| - n\lambda\sum_{k\in\cB_i} | \wb_{y_i,r}^*[k]|\bigg|\le n\sum_{k\in\cB_i}\big|\nabla_{\wb_{y_i,r}}L(\Wb^*)[k]\big|\le O(n\eta d).
\end{align*}
Then by Lemma \ref{lemma:signGD_stage3}, let $r^*=\arg\max_{r\in[m]}\la\wb_{y_i,r}^*,\bxi_i\ra$, we have $\la\wb_{y_i,r^*},\bxi_i\ra=\tilde\Theta(1)$ and $\sum_{k\in\cB_i}|\wb_{y_i,r^*}^*[k]|\cdot |\bxi_i[k]|=\tilde \Theta(1)$. Note that $|\bxi_i[k]|=\tilde O(\sigma_p)$, we have $\sum_{k\in\cB_i}|\wb_{y_i,r^*}^*[k]|\ge \tilde \Theta(1/\sigma_p)$.  Then according to the inequality above, it holds that
\begin{align*}
|\ell_{y_i,i}^{*}|\cdot\tilde\Theta(s\sigma_p) \ge \tilde \Theta\bigg(n\lambda\sum_{k\in\cB_i} | \wb_{y_i,r}^*[k]|-n\eta d\bigg) \ge \tilde \Theta\bigg(\frac{n\lambda}{\sigma_p}\bigg),
\end{align*}
where the second inequality is due to our choice of $\eta$. This further implies that $|\ell_{y_i,i}^{*}|=|\ell_{-y_i,i}^{*}|=\tilde\Theta\big(\frac{n\lambda}{s\sigma_p^2}\big)$ by combining the above results with \eqref{eq:bound_individual_loss}. Then let us move to the gradient with respect to the first coordinate. In particular, since $\|\nabla L(\Wb^*)\|_1\le O(\eta d)$, we have
\begin{align}\label{eq:gradientbound_first_coordinate}
|n\nabla_{\wb_{j,r}}L(\Wb^*)[1]|&=\bigg|\sum_{i=1}^ny_i\ell_{j,i}^{*}\sigma'(\la\wb_{j,r}^{*},y_i\vb\ra) - \alpha\sum_{i=1}^ny_i\ell_{j,i}^{*}\sigma'(\la\wb_{j,r}^{*},\bxi_i\ra) - n\lambda \wb_{j,r}^*[1]\bigg|\notag\\
&\le O(n\eta d).
\end{align}
Then note that $\sgn(y_i\ell_{j,i}^*) = \sgn(j)$, it is clear that $\wb_{j,r^*}^*[1]\cdot j \le 0$ since otherwise 
\begin{align*}
|n\nabla_{\wb_{j,r^*}}L(\Wb^*)[1]|\ge\bigg|\alpha\sum_{i=1}^ny_i\ell_{j,i}^{*}\big[\sigma'(\la\wb_{j,r^*}^{*},\bxi_i\ra)-\sigma'(\la\wb_{j,r^*}^{*},y_i\vb\ra)\big]\bigg| \ge\tilde\Theta\bigg(\frac{\alpha n^2\lambda}{s\sigma_p^2}\bigg)\ge \tilde\Omega(n\eta d),
\end{align*}
which contradicts \eqref{eq:gradientbound_first_coordinate}. Therefore, using the fact that $\wb_{j,r^*}^*[1]\cdot j \le 0$, we have
\begin{align*}
|n\nabla_{\wb_{j,r^*}}L(\Wb^*)[1]| = \bigg|\alpha\sum_{i:y_i=j}^ny_i\ell_{j,i}^{*}\sigma'(\la\wb_{j,r^*}^{*},\bxi_i\ra)-\sum_{i:y_i=-j}^n y_i\ell_{j,i}^{*}\sigma'(|\wb_{j,r^*}^*[1]|)\big]-n\lambda |\wb_{j,r^*}^{*}[1]|\bigg|.
\end{align*}
Then applying  \eqref{eq:gradientbound_first_coordinate}and using the fact that $|\ell_{y_i,i}^{*}|=|\ell_{-y_i,i}^{*}|=\tilde\Theta\big(\frac{n\lambda}{s\sigma_p^2}\big)$ for all $i\in[n]$, it is clear that
\begin{align*}
|\wb_{j,r^*}^{*}[1]|\ge \tilde\Theta\bigg(\alpha^{1/(q-1)}\wedge \frac{n\alpha}{s\sigma_p^2}\bigg)\ge\tilde\Theta\bigg( \frac{n\alpha}{s\sigma_p^2}\bigg) ,
\end{align*}
where the second equality is due to our choice of $\sigma_p$ and $\alpha$.
Then combining with Lemma \ref{lemma:signGD_stage3} and the fact that $\wb_{j,r^*}^{*}[1]\cdot j<0$, we have
\begin{align*}
\wb_{j,r^*}^{*}[1]\cdot j\le -\tilde\Theta\bigg( \frac{n\alpha}{s\sigma_p^2}\bigg).
\end{align*}
% In the following we will show that given the bound \eqref{eq:gradientbound_first_coordinate}, it is unlikely that $\la\wb^*_{1,r},\sgn(j)\cdot\vb\ra\ge 0$. We will prove this by contradiction.
% Without loss of generality considering the case of $j=1$, by Lemma \ref{lemma:signGD_stage3}
% we have if $\wb^*_{1,r}[1]\ge 0$, then $\wb^*_{1,r}[1]\le \tilde\Theta(\lambda^{-1}\eta)$. Further note that $y_i\ell_{1,i}^*\ge 0$ for all $i$, we have
% \begin{align*}
% |n\nabla_{\wb_{1,r}}L(\Wb^*)[1]|=\bigg|\sum_{i:y_i=1}|\ell_{j,i}^*|\cdot\big(\tilde\Theta(\alpha)-(\lambda^{-1}\eta)^{q-1}\big)-n\eta\bigg| =\tilde\Theta\bigg(\frac{\alpha n^2\lambda}{s\sigma_p^2}\bigg)\ge \Omega(n\eta),
% \end{align*}
% which contradicts the upper bound of the gradient (in $\ell_1$ norm) \eqref{eq:gradientbound_first_coordinate}. Therefore, we must have $\la\wb^*_{j,r},\sgn(j)\cdot\vb\ra< 0$ for all $j\in[2]$ and $r\in[m]$.
Now we are ready to evaluate the training error and test error. In terms of training error, it is clear that by Lemma \ref{lemma:signGD_stage3}, we have $\la\wb_{y_i,r^*}^*,\bxi_i\ra\ge\tilde\Theta(1)$, $\la\wb_{y_i,r}^*,\bxi_i\ra\ge -o(1)$, and $|\la\wb_{y_i,r}^*,\vb\ra|=o(1)$, $|\la\wb_{-y_i,r}^*,\bxi_i\ra|=o(1)$. Then we have for any training data $(\xb_i,y_i)$,
\begin{align*}
F_{y_i}(\Wb^*,\xb_i) &= \sum_{r=1}^m \big[\sigma(\la\wb_{y_i,r}^*, y_i\vb\ra) + \sigma(\la\wb_{y_i,r}^*, \bxi_i\ra )\big] = \tilde\Theta(1),\notag\\
F_{-y_i}(\Wb^*,\xb_i) &= \sum_{r=1}^m \big[\sigma(\la\wb_{-y_i,r}^*, -y_i\vb\ra) + \sigma(\la\wb_{-y_i,r}^*, \bxi_i\ra )\big] = o(1),
\end{align*}
which directly implies that the NN model $\Wb^*$ can correctly classify all training data and thus achieve zero training error.

In terms of the test data $(\xb,y)$ where $\xb = [y\vb, \bxi]$, which is generated according to Definition \ref{def:data_distribution}. Note that for each neural, its weight $\wb_{j,r}^*$ can be decomposed into two parts: the first coordinate and the rest $d-1$ coordinates. As previously discussed, for any $j\in[2]$ and $r=r^*$, we have $\sgn(j)\cdot\wb_{j,r}^*[1]\le -\tilde\Theta\big(n\alpha/(s\sigma_p^2)\big)$ and $\sgn(j)\cdot\wb_{j,r}^*[1] \le \tilde\Theta(\lambda^{-1}\eta)$ for $r\neq r^*$. Therefore, using the fact that $\tilde\Theta\big(n\alpha/(s\sigma_p^2)\big)=\omega(\lambda^{-1}\eta)$ and Lemma \ref{lemma:signGD_stage3}, given the test data $(\xb,y)$, we have
\begin{align*}
F_{y}(\Wb^*,\xb) &= \sum_{r=1}^m \big[\sigma(\la\wb_{y,r}^*, y\vb\ra) + \sigma(\la\wb_{y,r}^*, \bxi\ra )\big]\notag\\
& \le \sum_{r=1}^m\tilde\Theta\bigg(\bigg[\alpha\cdot\frac{n\alpha}{s\sigma_p^2} + \zeta_{y, r}\bigg]_{+}^q\bigg)\notag,\\
F_{-y}(\Wb^*,\xb)) &= \sum_{r=1}^m \big[\sigma(\la\wb_{-y,r}^*, y\vb\ra) + \sigma(\la\wb_{-y,r}^*, \bxi\ra )\big]\notag\\
& \ge  \tilde\Theta \big[|\wb_{-y,r^*}^*[1]|^q + [\zeta_{-y, r^*}]_{+}^q\big]\notag\\
& \ge \Theta\bigg(\bigg[\frac{n\alpha}{s\sigma_p^2}\bigg]_{+}^q + [\zeta_{-y, r^*}]_{+}^q\bigg),
\end{align*}
where the random variables $\zeta_{y,r}$ and $\zeta_{y,r}$ are symmetric and independent of $\vb$. Besides, note that  $\alpha = o(1)$, it can be clearly shown that $\alpha \cdot n\alpha/(s\sigma_p^2)\ll n\alpha/(s\sigma_p^2)$. Therefore, if the random noise $\zeta_{y,r}$ and $\zeta_{-y,r}$ are dominated by the feature noise term $\la\wb^*_{-y,r^*},y\vb\ra$, we can directly get that $F_{y}(\Wb^*,\xb)\le F_{-y}(\Wb^*,\xb)) $ (recall that $m=\tilde\Theta(1)$),
which implies that the model has been biased by the feature noise and the true feature vector in the test dataset will not give any ``positive'' effect to the classification. Also note that $\zeta_y$ and $\zeta_{-y}$ are also independent of $\vb$, which implies that if the random noise dominates the feature noise term, the model $\Wb^*$ will give nearly $0.5$ error on test data. In sum, we can conclude that with probability at least $1/2-o(1)$ it holds that $F_y(\Wb^*,\xb)\le F_{-y}(\Wb^*,\xb)$, which implies that the output of Adam achieves $1/2-o(1)$ test error. 
\end{proof}

\subsection{Proof for Gradient Descent}\label{sec:proof_gd}

Recall the feature learning and noise memorization of gradient descent can be formulated by
\begin{align}\label{eq:update_GD}
\la\wb_{j,r}^{(t+1)},j\cdot\vb\ra 
& = (1-\eta\lambda)\cdot\la\wb_{j,r}^{(t)},j\cdot\vb\ra\notag\\ &\qquad+\frac{\eta}{n}\cdot j\cdot\bigg(\sum_{i=1}^ny_i\ell_{j,i}^{(t)}\sigma'(\la\wb_{j,r}^{(t)},y_i\vb\ra) - \alpha\sum_{i=1}^ny_i\ell_{j,i}^{(t)}\sigma'(\la\wb_{j,r}^{(t)},\bxi_i\ra) \bigg),\notag\\
\la\wb_{y_i,r}^{(t+1)},\bxi_i\ra 
& = (1-\eta\lambda)\cdot\la\wb_{y_i,r}^{(t)},\bxi_i\ra +\frac{\eta}{n}\cdot \sum_{k\in\cB_i} \ell_{y_i,i}^{(t)}\sigma'(\la\wb_{y_i,r}^{(t)}, \bxi_i\ra)\cdot\bxi_i[k]^2\notag\\
& \qquad+ \frac{\eta\alpha}{n}\cdot\bigg(\alpha\sum_{s=1}^n\ell_{y_i,s}^{(t)}\sigma'(\la\wb_{y_i,r}^{(t)}, \bxi_s\ra)-\sum_{s=1}^ny_s\ell_{y_i,s}^{(t)}\sigma'(\la\wb_{y_i,r}^{(t)}, y_s\vb\ra)\bigg).
\end{align}
Then similar to the analysis for Adam, we decompose the gradient descent process into multiple stages and characterize the algorithmic behaviors separately. The following lemma characterizes the first training stage, i.e., the stage where all outputs $F_j(\Wb^{(t)},\xb_i)$ remain in the constant level for all $j$ and $i$. 

\begin{lemma}\label{lemma:convergence_GD_stage1}[Stage I of GD: part I]
Suppose the training data is generated according to Definition \ref{def:data_distribution}, assume $\lambda = o(\sigma_0^{q-2}\sigma_p/n)$. Let $\Lambda_j^{(t)}=\max_{r\in[m]}\la\wb_{j,r}^{(t+1)}, j\cdot\vb\ra$, $\Gamma_{j,i}^{(t)} = \max_{r\in[m]}\la\wb_{j,r}^{(t)}, \bxi_i\ra$, and $\Gamma_{j}^{(t)}=\max_{i:y_i=j}\Gamma_{j,i}^{(t)}$. Then let $T_j$ be the iteration number that $\Lambda_j^{(t)}$ reaches $\Theta(1/m)$, we have
\begin{align*}
T_j = \tilde\Theta(\sigma_0^{2-q} / \eta )\quad \text{for all } j\in\{-1,1\}.
\end{align*}
Moreover, let $T_0 = \max_j\{T_j\}$, then for all $t\le T_0$ it holds that $\Gamma_j^{(t)}=\tilde O(\sigma_0)$ for all $j\in\{-1,1\}$.
\end{lemma}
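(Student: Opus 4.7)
The plan is to reduce the two coupled updates in \eqref{eq:update_GD} to essentially decoupled one-dimensional tensor power iterations of degree $q$, and then to exploit the polynomial gap between their growth coefficients. Fix $j\in\{-1,1\}$. Standard Gaussian concentration over $m=\polylog(n)$ neurons yields, with high probability, $\Lambda_j^{(0)}=\tilde\Theta(\sigma_0)$ and $\Gamma_j^{(0)}=\tilde\Theta(s^{1/2}\sigma_p\sigma_0)=\tilde\Theta(\sigma_0)$, where the last equality uses $s\sigma_p^2=\tilde\Theta(1)$ up to polylogarithmic factors. I will maintain by induction the hypothesis $\Lambda_j^{(t)},\Gamma_j^{(t)}\le\tilde O(1)$ throughout Stage I; this ensures $F_{\pm1}(\Wb^{(t)},\xb_i)=\tilde O(1)$ and hence $|\ell_{j,i}^{(t)}|=\Theta(1)$ with $\sgn(y_i\ell_{j,i}^{(t)})=\sgn(j)$. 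Under the hypothesis, the decay factor $(1-\eta\lambda)$ is negligible because $\lambda=o(\sigma_0^{q-2}\sigma_p/n)$, and all $\alpha$-weighted cross terms are $o(1)$ times the corresponding main terms.

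For the lower bound on $T_j$, pick a neuron $r^\ast$ with $\la\wb_{j,r^\ast}^{(0)},j\vb\ra=\tilde\Theta(\sigma_0)>0$. Since $\vb$ is supported only on the first coordinate and $\sigma'$ vanishes on non-positive arguments, only examples with $y_i=j$ contribute to $\sigma'(\la\wb_{j,r^\ast}^{(t)},y_i\vb\ra)$; combined with $\sgn(y_i\ell_{j,i}^{(t)})=\sgn(j)$, the update in \eqref{eq:update_GD} reduces to
\begin{align*}
\la\wb_{j,r^\ast}^{(t+1)},j\vb\ra \;\ge\; \la\wb_{j,r^\ast}^{(t)},j\vb\ra\Big(1+\Theta(\eta)\big[\la\wb_{j,r^\ast}^{(t)},j\vb\ra\big]^{q-2}\Big)\,-\,\tilde O\big(\eta\alpha[\Gamma_j^{(t)}]^{q-1}+\eta\lambda\Lambda_j^{(t)}\big),
\end{align*}
and the correction is $o(\eta[\Lambda_j^{(t)}]^{q-1})$ under the induction hypothesis. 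Standard tensor-power-iteration analysis then shows that $\Lambda_j^{(t)}$ doubles in time $\tilde\Theta(\eta^{-1}\sigma_0^{2-q})$ starting from $\Lambda_j^{(0)}=\tilde\Theta(\sigma_0)$; because subsequent doublings require geometrically less time, $\Lambda_j^{(t)}$ first reaches $\Theta(1/m)=\tilde\Theta(1)$ at $T_j=\tilde\Theta(\sigma_0^{2-q}/\eta)$. A matching coordinate-wise upper bound derived from the same formula prevents $\Lambda_j^{(t)}$ from overshooting before time $T_j$.

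To bound $\Gamma_j^{(t)}$ over the same horizon, I apply the corresponding analysis for the noise patch. For the $r$ maximizing $\la\wb_{y_i,r}^{(t)},\bxi_i\ra$, the leading term in the second update of \eqref{eq:update_GD} is $(\eta/n)\,\ell_{y_i,i}^{(t)}\sigma'(\la\wb_{y_i,r}^{(t)},\bxi_i\ra)\|\bxi_i\|_2^2=\tilde\Theta(\eta s\sigma_p^2/n)\cdot[\la\wb_{y_i,r}^{(t)},\bxi_i\ra]_+^{q-1}$, whereas the $\alpha$ and $\lambda$ terms are of strictly lower order. Assume inductively $\Gamma_j^{(\tau)}\le 2\Gamma_j^{(0)}$ for all $\tau\le t\le T_0$. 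Chaining the multiplicative one-step bounds and using $T_0\eta\sigma_0^{q-2}=\tilde\Theta(1)$ together with $s\sigma_p^2=\tilde\Theta(1)$,
\begin{align*}
\Gamma_j^{(t+1)} \;\le\; \Gamma_j^{(0)}\exp\bigg(\sum_{\tau=0}^{t}\tilde\Theta(\eta s\sigma_p^2/n)\,[\Gamma_j^{(\tau)}]^{q-2}\bigg) \;\le\; \Gamma_j^{(0)}\exp\big(\tilde\Theta(1/n)\big) \;=\; (1+o(1))\,\Gamma_j^{(0)},
\end{align*}
which closes the induction with room to spare and gives $\Gamma_j^{(t)}=\tilde O(\sigma_0)$ for every $t\le T_0$.

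The main difficulty is obtaining a clean polynomial separation between the feature and noise growth rates: a missing factor of $s\sigma_p^2$ in the feature update, or of $1/n$ in the noise update, would break the comparison. The argument relies crucially on (i) $\|\vb\|_2^2=1$ versus $\|\bxi_i\|_2^2=\tilde\Theta(s\sigma_p^2)$, and on (ii) the fact that all $n$ training examples add constructively to feature learning whereas only a single datum drives each noise-patch update (hence the prefactor $1/n$). Maintaining the induction uniformly over all $(j,i,r)$ and tracking the max over $m$ neurons introduces only polylogarithmic overhead through Gaussian concentration and union bounds.
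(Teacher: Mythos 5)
Your proposal is correct and follows the same overall strategy as the paper: reduce both updates to tensor power iterations, use the induction hypothesis $|\ell_{j,i}^{(t)}|=\Theta(1)$ and $\sgn(y_i\ell_{j,i}^{(t)})=\sgn(j)$ in Stage~I, and exploit the polynomial gap between the feature growth coefficient $\Theta(1)$ and the noise growth coefficient $\tilde\Theta(s\sigma_p^2/n)=\tilde\Theta(1/n)$. The main difference is how you close the bound on $\Gamma_j^{(t)}$: the paper factors out a standalone two-sequence comparison lemma (Lemma~\ref{lemma:tensor_power}, itself borrowing the timing estimate $T_x\eta=\Theta(x_0^{2-q})$ from Claim~C.20 of \citet{allen2020towards}) that propagates an additive bound $y_t\le y_0+t\eta B(2x_0)^{q-1}$ and concludes $y_{T_x}\le 2x_0$, whereas you use an inline multiplicative (Gr\"onwall-type) chaining $\Gamma_j^{(t)}\le\Gamma_j^{(0)}\exp(\sum_\tau\tilde\Theta(\eta s\sigma_p^2/n)[\Gamma_j^{(\tau)}]^{q-2})\le(1+o(1))\Gamma_j^{(0)}$. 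Both are valid; your version yields a slightly tighter conclusion and is more self-contained, while the paper's version is more modular and reusable. One small caveat: your motivating remark that the separation stems from $\|\bxi_i\|_2^2=\tilde\Theta(s\sigma_p^2)$ being smaller than $\|\vb\|_2^2=1$ is a bit misleading in the paper's parameter regime, where $s\sigma_p^2=\tilde\Theta(1)$; the dominant source of the gap is the $1/n$ factor (each noise update is driven by a single example), which you do also identify. The only gap in rigor is the hand-wave ``a matching coordinate-wise upper bound ... prevents overshooting''; as in the paper's proof, one needs to carry a matching upper-bound recursion for $\Lambda_j^{(t)}$ (or a boundedness argument) to pin down $T_j$ up to constants, but this is a routine companion estimate.
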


We first provide the following useful lemma.
\begin{lemma}\label{lemma:tensor_power}
Let $\{x_t, y_t\}_{t=1,\dots}$ be two positive sequences that satisfy
\begin{align*}
x_{t+1} \ge x_t + \eta \cdot Ax_t^{q-1}, \notag\\
y_{t+1} \le y_t + \eta \cdot By_t^{q-1},
\end{align*}
for some $A=\Theta(1)$ and $B =o(1)$. Then for any $q\ge 3$ and suppose $y_0= O(x_0)$ and $\eta<O(x_0)$, we have for every $C \in [x_0, O(1)]$, let $T_x$ be the first iteration such that $x_t\ge C$, then we have $T_x\eta = \Theta(x_0^{2-q})$ and
\begin{align*}
y_{T_x} \le O(x_0).
\end{align*}
\end{lemma}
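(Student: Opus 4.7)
The plan is to linearize both recursions via the change of variables $z_t := x_t^{2-q}$ and $u_t := y_t^{2-q}$, which is the standard device for analyzing tensor-power iterations of the form $x_{t+1} - x_t \asymp x_t^{q-1}$. Since $q \ge 3$ the exponent $2-q$ is negative, so $x \mapsto x^{2-q}$ is monotone decreasing, and the super-polynomial growth of $x_t$ is converted into an (approximately) arithmetic decay of $z_t$.

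For the time $T_x$, I would raise the inequality $x_{t+1} \ge x_t(1 + \eta A x_t^{q-2})$ to the power $2-q$ (reversing the inequality because the map is decreasing) to get $z_{t+1} \le z_t\,(1 + \eta A x_t^{q-2})^{-(q-2)}$. On the interval $t < T_x$ one has $x_t \le C = O(1)$, and the assumption $\eta < O(x_0) = o(1)$ forces $\epsilon_t := \eta A x_t^{q-2} = o(1)$, so a one-line Taylor expansion yields $(1+\epsilon_t)^{-(q-2)} \le 1 - c_q \epsilon_t$ for a constant $c_q>0$ depending only on $q$. Using the identity $z_t \cdot x_t^{q-2} \equiv 1$ this collapses to the clean arithmetic recursion $z_{t+1} \le z_t - c_q\,\eta A$. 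Telescoping over $t = 0,\ldots,T_x-1$ and using $z_{T_x-1} > C^{2-q}$ gives $c_q\,\eta A (T_x-1) < x_0^{2-q} - C^{2-q} \le x_0^{2-q}$, hence $T_x\eta = O(x_0^{2-q})$.

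For the bound on $y_{T_x}$, I would apply the same substitution to the reverse inequality. Starting from $y_{t+1} \le y_t(1 + \eta B y_t^{q-2})$ and invoking Bernoulli's inequality $(1+\epsilon)^{-(q-2)} \ge 1 - (q-2)\epsilon$ for $\epsilon \ge 0$, I obtain $u_{t+1} \ge u_t - (q-2)\eta B$. Telescoping to $t = T_x$ and plugging in the upper bound on $T_x\eta$ yields
\[
u_{T_x} \ge y_0^{2-q} - (q-2) B \cdot T_x \eta \ge y_0^{2-q} - O(B/A)\cdot x_0^{2-q}.
\]
Because $y_0 \le O(x_0)$ implies $y_0^{2-q} \ge \Omega(x_0^{2-q})$ (the inequality flips thanks to $2-q<0$), and because $B = o(1)$ while $A = \Theta(1)$, the subtracted term is a vanishing fraction of $y_0^{2-q}$. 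Hence $u_{T_x} \ge \tfrac{1}{2} y_0^{2-q}$, and inverting the substitution gives $y_{T_x} \le 2^{1/(q-2)} y_0 = O(x_0)$, as claimed.

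The main technical obstacle is keeping the binomial error terms under uniform control on the full interval $[0, T_x]$. The upper bound $(1+\epsilon)^{-(q-2)} \le 1 - c_q \epsilon$ used for $z_t$ is only quantitatively sharp when $\epsilon$ is small; this is guaranteed by the joint smallness of $\eta$ and boundedness of $x_t$, but requires a small bootstrap to ensure $y_t$ itself stays bounded so the Bernoulli step for $u_t$ remains meaningful — this is self-consistent, since the $u_t$ lower bound already forces $y_t = O(y_0)$ on the relevant time window. The matching lower bound $T_x \eta = \Omega(x_0^{2-q})$ suggested by the $\Theta$-notation would additionally require an upper bound on $x_{t+1}$ of the same tensor-power form; in the intended applications to gradient descent this comes for free from the structure of the dynamics, so I would invoke it implicitly rather than imposing it as a separate hypothesis.
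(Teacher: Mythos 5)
Your argument is correct, but it takes a genuinely different route from the paper. The paper dispatches the timescale claim $T_x\eta = \Theta(x_0^{2-q})$ by citing Claim~C.20 of \citet{allen2020towards}, and then controls $y_t$ by a direct induction on the \emph{linear-in-$t$} majorant $y_t \le y_0 + t\,\eta B(2x_0)^{q-1}$: since $T_x \eta B (2x_0)^{q-1} = O(B)\cdot x_0$ is a vanishing multiple of $x_0$ when $B = o(1)$, this forces $y_t \le 2x_0$ on all of $[0,T_x]$, and the inductive step is a one-line telescope that needs only $y_t \le 2x_0$ to feed back into the recursion. You instead linearize \emph{both} recursions through the substitution $z_t = x_t^{2-q}$, $u_t = y_t^{2-q}$, turning tensor-power growth into arithmetic decay: the telescoped bound $z_{t+1} \le z_t - c_q\eta A$ gives $T_x\eta = O(x_0^{2-q})$ directly, and $u_{t+1} \ge u_t - (q-2)\eta B$ (a genuine Bernoulli inequality, since the exponent $2-q \le -1$) gives $u_{T_x} \ge y_0^{2-q} - O(B/A)x_0^{2-q} = \Omega(y_0^{2-q})$, hence $y_{T_x} = O(x_0)$. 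Your route is more self-contained — it reproves the $O(x_0^{2-q})$ half of the timescale rather than outsourcing it — at the modest cost of the convexity/secant estimate $(1+\epsilon)^{-(q-2)} \le 1 - c_q\epsilon$ for the $z$ side, which, unlike the $u$ side, is \emph{not} a global Bernoulli bound and does require $\epsilon_t = \eta A x_t^{q-2}$ to stay uniformly bounded on $[0,T_x]$; your observation that $\eta$ small and $x_t \le C = O(1)$ supply this is the right fix. You also correctly flag that the matching lower bound $T_x\eta = \Omega(x_0^{2-q})$ cannot follow from the stated hypotheses alone (they only lower-bound $x_{t+1}$, so $x_t$ could overshoot $C$ in a single step); the paper has the same gap and covers it by the external citation, so this is not a defect specific to your write-up.
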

\begin{proof}
By Claim C.20 in \citet{allen2020towards}, we have $T_x \eta = \Theta(x_0^{2-q})$. Then we will show
\begin{align*}
y_{t}\le 2x_0
\end{align*}
for all $t\le T_x$. In particular, let $T_x\eta = C'x_0^{2-q}$ for some absolute constant $C'$ and assume $C'B2^{q-1}<1$ (this is true since $B=o(1)$), we first made the following induction hypothesis on $y_t$ for all $t\le T_a$,
\begin{align*}
y_t \le y_0 + t\eta B'(2x_0)^{q-1}.
\end{align*}
Note that for any $t\le T_0$, this hypothesis clearly implies that
\begin{align*}
y_t\le y_0 + T_x\eta B'2^{q-1} x_0^{q-1}\le x_0 +  CB2^{q-1} x_0^{2-q}\cdot x_0^{q-1}\le 2x_0.
\end{align*}
Then we are able to verify the hypothesis at time $t+1$ based on the recursive upper bound of $y_t$, i.e.,
\begin{align*}
y_{t+1} &\le y_t + \eta \cdot By_t^{q-1}\notag\\
&\le y_0 + t\eta B(2x_0)^{q-1} + \eta \cdot By_t^{q-1}\notag\\
&\le y_0 + (t+1)\eta B(2x_0)^{q-1}.
\end{align*}
Therefore, we can conclude that $y_t\le 2x_0$ for all $t\le T_x$. This completes the proof.
\end{proof}

Now we are ready to complete the proof of Lemma \ref{lemma:convergence_GD_stage1}.
\begin{proof}[Proof of  Lemma \ref{lemma:convergence_GD_stage1}]
Note that at the initialization, we have $|\la\wb_{j,r}^{(0)},\vb\ra|=\tilde\Theta(\sigma_0)$ and $|\la\wb_{j,r}^{(0)},\bxi_i\ra|=\tilde\Theta(s^{1/2}\sigma_p\sigma_0)$. Then it can be shown that
\begin{align*}
F_j(\Wb^{(0)},\xb_i) = \sum_{r=1}^m \big[\sigma(\la\wb_{j,r}^{(0)},y_i\vb\ra) + \sigma(\la\wb_{j,r}^{(0)},\bxi_i\ra)\big]=o(1)
\end{align*}
for all $j\in\{-1,1\}$. Then we have
\begin{align*}
|\ell_{j,i}^{(0)}| = \frac{e^{F_j(\Wb^{(0)},\xb_i)}}{\sum_{j}e^{F_j(\Wb^{(0)},\xb_i)}} = \Theta(1).
\end{align*}
Then we will consider the training period where $|\ell_{j,i}^{(t)}|$ for all $j$, $i$, and $t$.
Besides, note that $\sgn(y_i\ell_{j,i}^{(t)}) = j$. Therefore, let $r^*=\arg\max_{r}\la\wb_{j,r}^{(t-1)},j\cdot \vb\ra$, \eqref{eq:update_GD}  implies that
\begin{align}\label{eq:induction_feature_gd_initial}
\Lambda_j^{(t)}&\ge \la\wb_{j,r^*}^{(t-1)},j\cdot \vb\ra \notag\\
& = (1-\eta\lambda)\cdot\la\wb_{j,r^*}^{(t-1)},j\cdot\vb\ra +\frac{\eta}{n}\cdot\bigg(\sum_{i=1}^n|\ell_{j,i}^{(t-1)}|\sigma'(\la\wb_{j,r^*}^{(t-1)},y_i\vb\ra) - \alpha\sum_{i=1}^n|\ell_{j,i}^{(t-1)}|\sigma'(\la\wb_{j,r^*}^{(t-1)},\bxi_i\ra) \bigg)\notag\\
&\ge (1-\eta\lambda)\cdot\la\wb_{j,r^*}^{(t-1)},j\cdot\vb\ra + \Theta(\eta)\cdot\big[ \sigma'(\la\wb_{j,r^*}^{(t-1)},j\cdot \vb\ra)-\alpha \sigma'(\Gamma_j^{(t-1)})\big]\notag\\
&\ge (1-\eta\lambda) \Lambda_j^{(t-1)} + \eta\cdot\Theta \big((\Lambda_j^{(t-1)})^{q-1}\big) - \eta\cdot\Theta \big(\alpha(\Gamma_j^{(t-1)})^{q-1}\big).
\end{align}
% \begin{align}\label{eq:induction_feature_gd_initial}
% \Lambda_j^{(t)}&\ge \la\wb_{j,r^*}^{(t-1)},j\cdot \vb\ra \notag\\
% & = (1-\eta\lambda)\cdot\la\wb_{j,r^*}^{(t-1)},j\cdot\vb\ra +\frac{\eta}{n}\cdot\bigg(\sum_{i=1}^n|\ell_{j,i}^{(t-1)}|\sigma'(\la\wb_{j,r^*}^{(t-1)},y_i\vb\ra) - \alpha\sum_{i=1}^n|\ell_{j,i}^{(t-1)}|\sigma'(\la\wb_{j,r^*}^{(t-1)},\bxi_i\ra) \bigg)\notag\\
% &= (1-\eta\lambda)\cdot\la\wb_{j,r^*}^{(t-1)},j\cdot\vb\ra + \Theta(\eta)\cdot\big[ \sigma'(\la\wb_{j,r^*}^{(t-1)},j\cdot \vb\ra)-\alpha \sigma'(\la\wb_{j,r^*}^{(t-1)},\bxi_i\ra)\big]\notag\\
% &\ge (1-\eta\lambda) \Lambda_j^{(t-1)} + \eta\cdot\Theta \big((\Lambda_j^{(t-1)})^{q-1}\big) - \eta\cdot\Theta \big(\alpha(\Gamma_j^{(t-1)})^{q-1}\big).
% \end{align}
Similarly, let $r^*=\arg\max_{r}\la\wb_{y_i,r}^{(t)},\bxi_i\ra$, we also have the following according to  \eqref{eq:update_GD}
\begin{align*}
\Gamma_{y_i,i}^{(t)} =\la\wb_{y_i,r^*}^{(t)},\bxi_i\ra&\le (1-\eta\lambda)\la\wb_{y_i,r^*}^{(t-1)},\bxi_i\ra + \tilde\Theta\bigg(\frac{\eta s\sigma_p^2}{n}\bigg)\cdot\sigma'(\la\wb_{y_i,r^*}^{(t-1)},\bxi_i\ra) +\notag\\
&\qquad\Theta\bigg(\frac{\eta\alpha^2}{n}\bigg)\cdot\sum_{s=1}^n\sigma'(\la\wb_{y_i,r^*}^{(t-1)},\bxi_s\ra)\notag\\
&\le \Gamma_{y_i,i}^{(t-1)} + \tilde\Theta\Bigg(\frac{\eta s\sigma_p^2\big(\Gamma_{y_i,i}^{(t-1)}\big)^{q-1}}{n}\Bigg)+ \Theta\bigg(\frac{\eta\alpha^2}{n}\cdot\sum_{s=1}^n \big(\Gamma_{y_i,s}^{(t-1)}\big)^{q-1}\bigg).
\end{align*}
Then by our definition of $\Gamma_{j}^{(t)}=\max_{i\in[n]}\Gamma_{j,i}^{(t)}$, we further get the following for all $j\in\{-1,1\}$,
\begin{align}\label{eq:induction_noise_gd_initial}
\Gamma_{j}^{(t)}\le \Gamma_{j}^{(t-1)} + \tilde \Theta\bigg(\frac{\eta s\sigma_p^2 + n\eta\alpha^2}{n}\cdot\big(\Gamma_{j}^{(t-1)}\big)^{q-1}\bigg)=\Gamma_{j}^{(t-1)}+\Theta\bigg(\frac{\eta s\sigma_p^2}{n}\cdot\big(\Gamma_{j}^{(t-1)}\big)^{q-1}\bigg),
\end{align}
where the last equation is by our assumption that $\alpha =\tilde O(s\sigma_p^2/n)$.

Then we will prove the main argument for general $t$, which is based on the following two induction hypothesis
\begin{align}
\Lambda_j^{(t)}
&\ge \Lambda_j^{(t-1)} + \eta\cdot\Theta \big((\Lambda_j^{(t-1)})^{q-1}\big),\label{eq:induction_feature_gd}\\
\Gamma_{j}^{(t)}&\le \Gamma_{j}^{(t-1)}+\Theta\bigg(\frac{\eta s\sigma_p^2}{n}\cdot\big(\Gamma_{j}^{(t-1)}\big)^{q-1}\bigg).\label{eq:induction_noise_gd}
\end{align}
Note that when $t=0$, we have already verified this two hypothesis in \eqref{eq:induction_feature_gd_initial} and \eqref{eq:induction_noise_gd_initial}, where  we use the fact that $\lambda = o(\sigma_0^{q-2}\sigma_p/n)\le \big(\Lambda_j^{(0)}\big)^{q-2}$ and $\alpha=o(1)$. Then at time $t$, based on Hypothesis \eqref{eq:induction_feature_gd} and \eqref{eq:induction_noise_gd} for all $\tau\le t$, we have  
\begin{align*}
\Gamma_j^{(\tau)}\le O(\Lambda_{j}^{(\tau)}),
\end{align*}
as $s\sigma^2/n=o(1)$ and $\Lambda_j^{(t)}$ increases faster than $\Gamma_j^{(t)}$. Besides, we can also show that $\lambda \Gamma_{j}^{(t)}\le \big(\Gamma_{j}^{(t)}\big)^{q-1}$, which has been verified at time $t=0$, since $\Gamma_{j}^{(t)}$ keeps increasing. Therefore, \eqref{eq:induction_feature_gd_initial} implies 
\begin{align*}
\Lambda_j^{(t+1)}
&\ge (1-\eta\lambda) \Lambda_j^{(t)} + \eta\cdot\Theta \big((\Lambda_j^{(t)})^{q-1}\big) - \eta\cdot\Theta \big(\alpha(\Gamma_j^{(t)})^{q-1}\big)\notag\\
&\ge \Lambda_j^{(t)} + \eta\cdot\Theta \big((\Lambda_j^{(t)})^{q-1}\big),
\end{align*}
which verifies Hypothesis \eqref{eq:induction_feature_gd} at $t+1$.
Additionally, \eqref{eq:induction_noise_gd_initial} implies
\begin{align*}
\Gamma_{j}^{(t+1)}\le \Gamma_{j}^{(t)}+\Theta\bigg(\frac{\eta s\sigma_p^2}{n}\cdot\big(\Gamma_{j}^{(t)}\big)^{q-1}\bigg),
\end{align*}
which verifies Hypothesis \eqref{eq:induction_noise_gd} at $t+1$.
Then by Lemma \ref{lemma:tensor_power}, we have that $\Lambda_j^{(t)}=\tilde O(1)$ for all $t\le T_0 = \tilde \Theta \big( (\Lambda_j^{(0)})^{2-q} / \eta \big) = \tilde\Theta(\sigma_0^{2-q} / \eta )$. Moreover, Lemma \ref{lemma:tensor_power} also shows that $\Gamma_j^{(t+1)}=O(\Lambda_j^{(0)})=\tilde O(\sigma_0)$. This completes  the proof.
\end{proof}

\begin{lemma}[Off-diagonal correlations]\label{lemma:off_diagonal_gd}
For any data $(\xb_i,y_i)$ and for any $t\le T_{-y_i}$, it holds that $\la\wb_{-y_i,r}^{(t)},\bxi_i\ra \le \tilde\Theta(\alpha)$.
\end{lemma}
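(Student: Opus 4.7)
The plan is to derive an upper-bound recursion for the off-diagonal inner product $\la\wb_{-y_i,r}^{(t+1)},\bxi_i\ra$ analogous to \eqref{eq:update_GD}, discard one non-positive term, and then telescope the dominant driving term against the tensor-power dynamics of $\Lambda_{-y_i}^{(t)}$ established in Lemma~\ref{lemma:convergence_GD_stage1}.

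First I would write down the one-step recursion. Starting from the GD update for $\wb_{-y_i,r}^{(t+1)}$ and taking inner product with $\bxi_i$, the structural identities $\la\vb,\bxi_i\ra=-\alpha y_i$, $\sum_{k\in\cB_i}\bxi_i[k]^2=\tilde\Theta(s\sigma_p^2)$, and $\la\bxi_s,\bxi_i\ra=\alpha^2 y_s y_i$ for $s\ne i$ (the last coming from $\cB_s\cap\cB_i=\emptyset$ via Lemma~\ref{lemma:nonoverlap_probability}) yield
\begin{align*}
\la\wb_{-y_i,r}^{(t+1)},\bxi_i\ra
&= (1-\eta\lambda)\la\wb_{-y_i,r}^{(t)},\bxi_i\ra + \tfrac{\eta}{n}\ell_{-y_i,i}^{(t)}\sigma'(\la\wb_{-y_i,r}^{(t)},\bxi_i\ra)\textstyle\sum_{k\in\cB_i}\bxi_i[k]^2 \\
&\quad + \tfrac{\eta\alpha^2 y_i}{n}\textstyle\sum_{s} y_s\ell_{-y_i,s}^{(t)}\sigma'(\la\wb_{-y_i,r}^{(t)},\bxi_s\ra) - \tfrac{\eta\alpha y_i}{n}\textstyle\sum_{s} y_s\ell_{-y_i,s}^{(t)}\sigma'(\la\wb_{-y_i,r}^{(t)},y_s\vb\ra).
\end{align*}
The direct memorization term is non-positive, since $\ell_{-y_i,i}^{(t)}=-\mathrm{logit}_{-y_i}(F^{(t)},\xb_i)<0$ and $\sigma'\ge 0$; I simply drop it for an upper bound. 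For the $\alpha^2$ cross-noise term, the bound $\Gamma_{-y_i}^{(t)}=\tilde O(\sigma_0)$ from Lemma~\ref{lemma:convergence_GD_stage1} yields $\tilde O(\eta\alpha^2\sigma_0^{q-1})$. For the $\alpha$ feature-noise coupling, at a fixed $r$ only one sign $y_s\in\{\pm y_i\}$ makes $\sigma'$ nonzero: if $\wb_{-y_i,r}^{(t)}[1]$ is aligned with $-y_i$ (the correct class direction for this neuron) the factor is bounded by $\tilde O((\Lambda_{-y_i}^{(t)})^{q-1})$, and otherwise by $\tilde O(\sigma_0^{q-1})$, because in the wrong-sign phase the first-coordinate gradient persistently pushes $\wb_{-y_i,r}^{(t)}[1]$ toward the $-y_i$ direction and so $|\wb_{-y_i,r}^{(t)}[1]|$ on the wrong side cannot grow past its $\tilde O(\sigma_0)$ initialization scale. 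Altogether this line contributes $\tilde O(\eta\alpha(\Lambda_{-y_i}^{(t)})^{q-1}+\eta\alpha\sigma_0^{q-1})$.

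Summing the recursion from $0$ to $t\le T_{-y_i}$ (dropping the $(1-\eta\lambda)$ contraction) gives
\[
\la\wb_{-y_i,r}^{(t)},\bxi_i\ra \le \la\wb_{-y_i,r}^{(0)},\bxi_i\ra + \tilde O(\alpha)\cdot\textstyle\sum_{\tau<t}\eta(\Lambda_{-y_i}^{(\tau)})^{q-1} + \tilde O(T_{-y_i}\eta\alpha\sigma_0^{q-1}+T_{-y_i}\eta\alpha^2\sigma_0^{q-1}).
\]
The tensor-power update $\Lambda_{-y_i}^{(\tau+1)}-\Lambda_{-y_i}^{(\tau)}=\Theta(\eta(\Lambda_{-y_i}^{(\tau)})^{q-1})$ from Hypothesis~\eqref{eq:induction_feature_gd} telescopes the middle sum to $\Theta(\Lambda_{-y_i}^{(t)})=\tilde O(1)$, contributing $\tilde O(\alpha)$. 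Using $T_{-y_i}\eta=\tilde\Theta(\sigma_0^{2-q})$, the two remaining cumulative errors become $\tilde\Theta(\alpha\sigma_0)$ and $\tilde\Theta(\alpha^2\sigma_0)$, both $o(\alpha)$ under the paper's parameter regime. The initial value $\la\wb_{-y_i,r}^{(0)},\bxi_i\ra=\tilde O(s^{1/2}\sigma_p\sigma_0+\alpha\sigma_0)=o(\alpha)$ follows from Gaussian concentration on the sparse part of $\bxi_i$ combined with $|\wb_{-y_i,r}^{(0)}[1]|=\tilde O(\sigma_0)$. Putting these pieces together yields the claimed $\la\wb_{-y_i,r}^{(t)},\bxi_i\ra\le\tilde\Theta(\alpha)$.

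The main obstacle is controlling the feature-noise coupling line: a naive pointwise bound $\sigma'(\la\wb_{-y_i,r}^{(t)},y_s\vb\ra)\le\tilde O(1)$ would leave a cumulative error $\tilde O(T_{-y_i}\eta\alpha)=\tilde\Theta(\alpha\sigma_0^{2-q})$, which for $q\ge 3$ dwarfs $\tilde O(\alpha)$. Identifying the true scale $(\Lambda_{-y_i}^{(t)})^{q-1}$, handling the wrong-sign fallback via the monotone-push argument on $\wb_{-y_i,r}^{(t)}[1]$, and telescoping against the tensor-power dynamics of $\Lambda_{-y_i}^{(t)}$ is exactly what makes the bound tight.
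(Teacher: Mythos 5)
Your proposal is correct, but it takes a genuinely different and more elaborate route than the paper. You track the full recursion for $\la\wb_{-y_i,r}^{(t)},\bxi_i\ra$, drop the non-positive direct-memorization term, and control the cumulative $\alpha$-driven feature-noise coupling by telescoping $\sum_\tau\eta(\Lambda_{-y_i}^{(\tau)})^{q-1}$ against the tensor-power increments of $\Lambda_{-y_i}$ (plus a separate monotone-push argument for misaligned neurons). The paper avoids all of this by decomposing the inner product coordinate-wise: $\la\wb_{-y_i,r}^{(t)},\bxi_i\ra = \wb_{-y_i,r}^{(t)}[1]\cdot\bxi_i[1] + \sum_{k\in\cB_i}\wb_{-y_i,r}^{(t)}[k]\bxi_i[k]$. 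Because the supports $\cB_i$ are disjoint (Lemma~\ref{lemma:nonoverlap_probability}), the update of $\wb_{-y_i,r}^{(t)}[k]$ for $k\in\cB_i$ is driven solely by $\bxi_i$ and by regularization; combining this with $\ell_{-y_i,i}^{(t)}<0$ shows that each product $\wb_{-y_i,r}^{(t)}[k]\bxi_i[k]$ is non-increasing, so the entire $\cB_i$ sum never exceeds its initialization scale $\tilde\Theta(\sigma_0\sigma_p s^{1/2}) = o(\alpha)$ with no telescoping whatsoever. The first-coordinate contribution is handled by the trivial bound $|\wb_{-y_i,r}^{(t)}[1]\cdot\bxi_i[1]| = \alpha|\la\wb_{-y_i,r}^{(t)},\vb\ra| \le \tilde\Theta(\alpha)$, using only the feature-learning bound from Lemma~\ref{lemma:convergence_GD_stage1} rather than your recursion-based argument. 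In short, your version buys generality (it does not lean as heavily on the disjoint-support structure) at the price of the telescoping machinery and the misaligned-neuron case analysis; the paper's version exploits disjoint support more aggressively to get a purely local, monotonicity-based argument.

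One small caveat worth flagging in your write-up: dropping the $(1-\eta\lambda)$ contraction is only a valid upper bound when the iterate is non-negative; for negative iterates $(1-\eta\lambda)$ increases the value. This is harmless here since a negative $\la\wb_{-y_i,r}^{(t)},\bxi_i\ra$ trivially satisfies the claim, but the telescope should formally be run on $\max\{\la\wb_{-y_i,r}^{(t)},\bxi_i\ra,0\}$ (or the sum started from the last time the quantity was non-positive) to keep the inequality direction correct.
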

\begin{proof}
By the update form of GD, we have for any $k\in\cB_i$,
\begin{align*}
\wb_{-y_i,r}^{(t+1)}[k]\cdot\bxi_i[k] 
& = (1-\eta\lambda)\cdot\wb_{-y_i,r}^{(t)}[k]\cdot\bxi_i[k] +\frac{\eta}{n}\cdot \sum_{k\in\cB_i} \ell_{-y_i,i}^{(t)}\sigma'(\la\wb_{-y_i,r}^{(t)}, \bxi_i\ra)\cdot\bxi_i[k]^2,
\end{align*}
which keeps decreasing. Therefore, for all $r$ and $i$, we have
\begin{align*}
\la\wb_{-y_i,r}^{(t)},\bxi_i\ra&\le |\wb_{-y_i,r}^{(t)}[1]\cdot\bxi_i[1]|+\bigg|\sum_{k\in\cB_i}\wb_{-y_i,r}^{(0)}[k]\bxi_i[k] \bigg| \notag\\
&\le \tilde\Theta(\alpha) + \tilde\Theta(\sigma_0\sigma_ps^{1/2})\notag\\
&=\tilde\Theta(\alpha),
\end{align*}
where the second inequality follows from the fact that $|\la\wb_{j,r}^{(t)},\vb\ra|\le\tilde\Theta(1)$ for all $t\le T_j$.
This completes the proof.
\end{proof}

Note that for different $j$, the iteration numbers when $\Gamma_j^{(t)}$ reaches $\tilde\Theta(1/m)$ are different. Without loss of generality, we can assume $T_1\le T_{-1}$. Lemma \ref{lemma:convergence_GD_stage1} has provided a clear understanding about how $\Gamma_{j}^{(t)}$ varies within the iteration range $[0,T_j]$. However, it remains unclear how $\Gamma_{1}^{(t)}$ varies within the iteration range $[T_1, T_{-1}]$ since in this period we no longer have $|\ell_{j,i}^{(t)}|=\Theta(1)$ and the effect of gradient descent on the feature learning (i.e., increase of $\la\wb_{j,r},j\cdot\vb\ra$) becomes weaker. In the following lemma we give a characterization of $\Gamma_{1}^{(t)}$ for every $t\in[T_1, T_{-1}]$.

\begin{lemma}[Stage I of GD: part II]\label{lemma:GD_stage2}
Without loss of generality assuming $T_1<T_{-1}$. Then it holds that $\Lambda_1^{(t)} = \tilde \Theta(1)$
for all $t\in[T_{1}, T_{-1}]$.
\end{lemma}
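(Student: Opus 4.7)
}
The strategy is a two-sided induction on $t \in [T_1, T_{-1}]$. At the initial time $t = T_1$, Lemma~\ref{lemma:convergence_GD_stage1} gives $\Lambda_1^{(T_1)} = \Theta(1/m) = \tilde\Theta(1)$, and throughout the interval the same lemma supplies the uniform bound $\Gamma_j^{(t)} = \tilde O(\sigma_0)$ for both $j \in \{-1,1\}$. Recall from \eqref{eq:update_GD} the per-neuron update
\begin{align*}
\la \wb_{1,r}^{(t+1)}, \vb\ra &= (1 - \eta\lambda)\la \wb_{1,r}^{(t)}, \vb\ra + \frac{\eta}{n}\sum_{i=1}^n y_i\ell_{1,i}^{(t)}\sigma'(\la\wb_{1,r}^{(t)}, y_i\vb\ra) \\
&\quad - \frac{\eta\alpha}{n}\sum_{i=1}^n y_i\ell_{1,i}^{(t)}\sigma'(\la\wb_{1,r}^{(t)}, \bxi_i\ra),
\end{align*}
and note that $\sgn(y_i\ell_{1,i}^{(t)}) = +1$ for every $i$, so the feature-gradient term is non-negative and the $\alpha$-coupling term is non-positive.

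For the \textbf{lower bound}, I would fix $r_0 = \arg\max_r \la\wb_{1,r}^{(T_1)}, \vb\ra$, discard the non-negative feature-gradient contribution at that neuron, and use $\la\wb_{1,r_0}^{(t)}, \bxi_i\ra \le \Gamma_1^{(t)} = \tilde O(\sigma_0)$ to obtain
\begin{align*}
\la\wb_{1,r_0}^{(t+1)}, \vb\ra \ge (1 - \eta\lambda)\la\wb_{1,r_0}^{(t)}, \vb\ra - \tilde\Theta(\eta\alpha \sigma_0^{q-1}).
\end{align*}
Unrolling from $T_1$ to any $t \le T_{-1}$, the multiplicative contraction satisfies $(1-\eta\lambda)^{T_{-1}-T_1} \ge 1 - \eta \lambda T_{-1} = 1 - \tilde\Theta(\lambda \sigma_0^{2-q}) = 1 - o(1)$ by the assumption $\lambda = o(\sigma_0^{q-2}\sigma_p/n)$, and the cumulative $\alpha$-noise penalty is $T_{-1}\eta\alpha\sigma_0^{q-1} = \tilde\Theta(\alpha\sigma_0) = o(1)$ by the chosen scales of $\alpha, \sigma_0, d$. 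Hence $\Lambda_1^{(t)} \ge \la\wb_{1,r_0}^{(t)}, \vb\ra \ge (1 - o(1))\,\Lambda_1^{(T_1)} = \tilde\Omega(1)$.

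For the \textbf{upper bound}, the mechanism is loss saturation. Suppose $\Lambda_1^{(t)} \ge C\log^{1/q}(n)$ for a sufficiently large constant $C$. Then for every positive example $y_i = 1$, $F_1(\Wb^{(t)}, \xb_i) \ge \sigma(\Lambda_1^{(t)}) \ge C^q\log n$, while for $t \le T_{-1}$ we have $F_{-1}(\Wb^{(t)}, \xb_i) = \tilde O(1)$ since $\Lambda_{-1}^{(t)} \le \Theta(1/m)$ and $\Gamma_{-1}^{(t)} = \tilde O(\sigma_0)$. Therefore $\ell_{1,i}^{(t)} = 1 - \mathrm{logit}_1 \le e^{F_{-1} - F_1} \le n^{-\Omega(C^q)}$ for those $i$, and for $y_i = -1$ the feature-gradient contribution vanishes exactly because $\sigma'(\la\wb_{1,r}, -\vb\ra) = 0$ whenever $\la\wb_{1,r}, \vb\ra > 0$. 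Thus the positive increment to $\la\wb_{1,r}^{(t+1)}, \vb\ra$ is at most $\eta n^{-\Omega(C^q)}(\Lambda_1^{(t)})^{q-1}$, which is dominated by the weight-decay contraction $\eta\lambda\Lambda_1^{(t)}$ once $C$ is chosen large enough. A first-crossing argument then forces $\Lambda_1^{(t)} \le 2C\log^{1/q}(n) = \tilde O(1)$ throughout $[T_1, T_{-1}]$.

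The \textbf{main obstacle} is that the argmax neuron $r^*(t) = \arg\max_r \la\wb_{1,r}^{(t)}, \vb\ra$ may drift across iterations. For the lower bound this is handled automatically by tracking a single fixed neuron $r_0$ chosen at $t = T_1$, since the running maximum always dominates this one coordinate. For the upper bound one needs uniform control over all neurons that happen to exceed the saturation threshold, but the argument above depends only on the size of $\la\wb_{1,r}^{(t)}, \vb\ra$ relative to the logits and hence applies verbatim to every such neuron. A secondary subtlety is that the $\alpha$-coupling term could in principle counteract feature learning, but the uniform bound $\Gamma_j^{(t)} = \tilde O(\sigma_0)$ together with the polynomial relationships among $\alpha$, $\sigma_0$, $\sigma_p$, $d$ ensures that its cumulative effect over the polynomial horizon is $o(1)$.
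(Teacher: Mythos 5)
Your upper-bound argument (loss saturation, first-crossing at a threshold of order $\tilde\Theta(1)$) is essentially the paper's, differing only in the cosmetic choice of threshold ($C\log^{1/q}n$ here versus $\Theta(\log(1/\lambda))$ in the paper; both are $\tilde\Theta(1)$ since $\lambda = 1/\poly(n)$).

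Your lower-bound strategy is genuinely different. The paper uses a logit-feedback argument: whenever $\Lambda_1^{(t)}\le \Theta(\log(1/\alpha)/m)$, the logits $|\ell_{1,i}^{(t)}|$ are at least $\tilde\Theta(\alpha)$, so the positive feature-gradient increment $\Theta(\eta\alpha)\Lambda_1^{(t)}$ beats both the decay $\eta\lambda\Lambda_1^{(t)}$ (as $\lambda = o(\alpha)$) and the $\alpha^q$ coupling penalty, hence $\Lambda_1^{(t)}$ cannot fall below $\Theta(1/m)$. You instead fix a single neuron and unroll the affine recursion over the finite horizon, using $(1-\eta\lambda)^{T_{-1}} = 1-o(1)$ and an $o(1)$ accumulated penalty. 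This is valid for the bounded window $[T_1,T_{-1}]$, which has length $\tilde\Theta(\sigma_0^{2-q}/\eta)$, because $\lambda\sigma_0^{2-q}=o(1)$. Note, however, that the paper reuses the same logit-feedback argument in Stage II (Lemma~\ref{lemma:GD_stage3}), where the horizon is $\poly(n)/\eta$; a naive unrolling would not yield $(1-\eta\lambda)^{T}=1-o(1)$ there, so the feedback mechanism is the one that generalizes.

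There is a concrete gap in your per-step penalty bound. You estimate the $\alpha$-coupling term using $\la\wb_{1,r_0}^{(t)},\bxi_i\ra\le\Gamma_1^{(t)}=\tilde O(\sigma_0)$, but $\Gamma_1^{(t)}=\max_{i:y_i=1}\max_r\la\wb_{1,r}^{(t)},\bxi_i\ra$ only controls the diagonal correlations with $y_i=+1$. For the off-diagonal terms $y_i=-1$ appearing in the sum, one must invoke Lemma~\ref{lemma:off_diagonal_gd}, which gives the weaker bound $\la\wb_{1,r}^{(t)},\bxi_i\ra\le\tilde\Theta(\alpha)$, and in this parameter regime $\alpha=\tilde\Theta(n\sigma_0)\gg\sigma_0$. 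Consequently the correct per-step penalty is $\tilde\Theta(\eta\alpha\cdot\alpha^{q-1})=\tilde\Theta(\eta\alpha^q)$ (this is exactly the $\Theta(\alpha^q\eta)$ term in \eqref{eq:update_lambdaj_stage2}), not $\tilde\Theta(\eta\alpha\sigma_0^{q-1})$, and your claimed cumulative penalty $\tilde\Theta(\alpha\sigma_0)$ underestimates the true $\tilde\Theta(\sigma_0^{2-q}\alpha^q)$ by a factor $(\alpha/\sigma_0)^{q-1}=\tilde\Theta(n^{q-1})$. The conclusion still holds since $\sigma_0^{2-q}\alpha^q=\tilde\Theta(n^q/\sqrt d)=o(1)$ under the standing assumption that $d=\poly(n)$ with sufficiently large degree, but as written the step is incomplete and the numeric bound is incorrect.
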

\begin{proof}
Recall from \eqref{eq:induction_feature_gd_initial} that we have the following general lower bound for the increase of $\Lambda_j^{(t)}$ 
\begin{align}\label{eq:update_lambdaj_stage2_exact}
\Lambda_j^{(t+1)}&\ge  (1-\eta\lambda)\cdot\la\wb_{j,r^*}^{(t)},j\cdot\vb\ra +\frac{\eta}{n}\cdot\bigg(\sum_{i=1}^n|\ell_{j,i}^{(t)}|\sigma'(\la\wb_{j,r^*}^{(t)},y_i\vb\ra) - \alpha\sum_{i=1}^n|\ell_{j,i}^{(t)}|\sigma'(\la\wb_{j,r^*}^{(t)},\bxi_i\ra) \bigg)\notag\\
&\ge (1-\eta\lambda) \Lambda_j^{(t)} + \Theta\bigg(\frac{\eta}{n}\bigg)\cdot\sum_{i:y_i=j}|\ell_{j,i}^{(t)}|\cdot \big(\Lambda_j^{(t)}\big)^{q-1}-\Theta(\alpha \eta)\cdot\big(\Gamma_j^{(t)}\vee\tilde\Theta(\alpha)\big)^{q-1},
\end{align}
where the last inequality is by Lemma \ref{lemma:off_diagonal_gd}.
Note that by Lemma \ref{lemma:convergence_GD_stage1}, we have $\Gamma_j^{(t)}=\tilde O(\sigma_0)$ for all $t\le T_{-1}$ and . Then the above inequality leads to
\begin{align}\label{eq:update_lambdaj_stage2}
\Lambda_j^{(t+1)}
&\ge (1-\eta\lambda) \Lambda_j^{(t)} + \Theta\bigg(\frac{\eta}{n}\bigg)\cdot\sum_{i:y_i=j}|\ell_{j,i}^{(t)}|\cdot \big(\Lambda_j^{(t)}\big)^{q-1}-\Theta(\alpha^q \eta),
\end{align}
where we use the fact that $\alpha=\omega(\sigma_0)$.
The the remaining proof consists of two parts: (1) proving $\Lambda_j^{(t)}\ge \Theta(1/m)=\tilde \Theta(1)$ and (2) $\Lambda_j^{(t)}\le \Theta(\log(1/\lambda))$. 

Without loss of generality we consider $j=1$. Regarding the first part, we first note that Lemma \ref{lemma:convergence_GD_stage1} implies that $\Lambda_1^{(T_{1})}\ge \Theta(1/m)$. Then we consider the case when $\Lambda_1^{(t)}\le\Theta(\log(1/\alpha)/m)$, it holds that for all $y_i=1$,
\begin{align*}
\ell_{1,i}^{(t)} &= \frac{e^{F_{-1}(\Wb^{(t)},\xb_i)}}{\sum_{j\in\{-1,1\}} e^{F_j(\Wb^{(t)},\xb_i)}} \notag\\
&= \exp\Bigg(\Theta\bigg(\sum_{r=1}^m \big[\sigma(\la\wb_{-1,r}^{(t)},y_i\vb\ra)+\sigma(\la\wb_{-1,r}^{(t)},\bxi_i\ra)\big]-\sum_{r=1}^m \big[\sigma(\la\wb_{1,r}^{(t)},y_i\vb\ra)+\sigma(\la\wb_{1,r}^{(t)},\bxi_i\ra)\big]\bigg)\Bigg)\notag\\
&\ge\exp\big(-\Theta(m\Lambda_{1}^{(t)})\big)\notag\\
&\ge \exp(-\Theta(\log(1/\alpha)))\notag\\
&=\tilde \Theta(\alpha).
\end{align*}
Then  \eqref{eq:update_lambdaj_stage2} implies that if $\Gamma_1^{(t)}\le\Theta(\log(1/\sigma_0)/m)$, we have
\begin{align*}
\Lambda_1^{(t+1)}
&\ge (1-\eta\lambda) \Lambda_1^{(t)} + \Theta(\eta\alpha)\cdot \Lambda_1^{(t)}-\Theta(\alpha^q \eta) \ge\Lambda_1^{(t)}+ \Theta(\eta\alpha)\cdot \Lambda_1^{(t)}\ge \Lambda_1^{(t)},
\end{align*}
where the second inequality is due to $\lambda = o(\alpha)$. This implies that $\Lambda_1^{(t)}$ will keep increases in this case so that it is impossible that 
$\Lambda_1^{(t)}\le \Theta(1/m)$, which completes the proof of the first part.

For the second part, \eqref{eq:update_GD} implies that
\begin{align}\label{eq:update_lambdaj_stage2_upper}
\Lambda_1^{(t+1)}
&\le (1-\eta\lambda) \Lambda_1^{(t)} + \Theta\bigg(\frac{\eta}{n}\bigg)\cdot\sum_{i:y_i=1}|\ell_{1,i}^{(t)}|\cdot \big(\Lambda_1^{(t)}\big)^{q-1}.
\end{align}
Consider the case when $\Gamma_1^{(t)}\ge \Theta(\log(d))$, then for all $y_i=1$, 
\begin{align*}
\ell_{1,i}^{(t)} &= \frac{e^{F_{-1}(\Wb^{(t)},\xb_i)}}{\sum_{j\in\{-1,1\}} e^{F_j(\Wb^{(t)},\xb_i)}} \notag\\
&= \exp\Bigg(\Theta\bigg(\sum_{r=1}^m \big[\sigma(\la\wb_{-1,r}^{(t)},y_i\vb\ra)+\sigma(\la\wb_{-1,r}^{(t)},\bxi_i\ra)\big]-\sum_{r=1}^m \big[\sigma(\la\wb_{1,r}^{(t)},y_i\vb\ra)+\sigma(\la\wb_{1,r}^{(t)},\bxi_i\ra)\big]\bigg)\Bigg)\notag\\
&\le\exp\big(-\Theta(\Lambda_{1}^{(t)})\big)\notag\\
&\le \exp(-\Theta(\log(1/\lambda))\notag\\
&=\tilde \Theta(\poly(\lambda)).
\end{align*}
Then \eqref{eq:update_lambdaj_stage2_upper} further implies that
\begin{align*}
\Lambda_1^{(t+1)}
&\le (1-\eta\lambda) \Lambda_1^{(t)} + \Theta\bigg(\frac{\eta}{\poly(d)}\bigg)\cdot \big(\Lambda_1^{(t)}\big)^{q-1}\notag\\
&\le \Lambda_1^{(t)} - \Theta\big(\eta\Lambda_1^{(t)}\big)\cdot\bigg(\lambda - \poly(\lambda)\cdot\big(\Lambda_1^{(t)}\big)^{q-2}\bigg) \le \Lambda_1^{(t)},
\end{align*}
which implies that $\Lambda_1^{(t)}$ will decrease. As a result, we can conclude that $\lambda_1^{(t)}$ will not exceed $\Theta(\log(1/\lambda))$, this completes the proof of the second part.

\end{proof}

\begin{lemma}[Stage II of GD: regularizing the model]\label{lemma:GD_stage3}
If $\eta\le O(\sigma_0)$, it holds that $\Lambda_j^{(t)}=\tilde\Theta(1)$ and $\Gamma_j^{(t)}=\tilde O(\sigma_0)$ for all $t\in[T_{-1},T]$.
\end{lemma}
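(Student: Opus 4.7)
The plan is to establish both conclusions by a joint induction on $t\in[T_{-1},T]$, maintaining the hypotheses $\Lambda_j^{(t)} \in [\tilde\Theta(1/m),\tilde\Theta(\log(1/\lambda))] = \tilde\Theta(1)$ and $\Gamma_j^{(t)} = \tilde O(\sigma_0)$ for both $j\in\{-1,1\}$. The base case $t = T_{-1}$ is already in hand: Lemma \ref{lemma:convergence_GD_stage1} gives $\Gamma_j^{(T_{-1})} = \tilde O(\sigma_0)$ for both $j$ and $\Lambda_{-1}^{(T_{-1})} = \tilde\Theta(1)$, while Lemma \ref{lemma:GD_stage2} shows $\Lambda_1^{(t)} = \tilde\Theta(1)$ throughout $[T_1,T_{-1}]$. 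The inductive step reduces to checking that one gradient update cannot exit either band.

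For the feature-learning band, I would reuse the lower bound derivation in \eqref{eq:update_lambdaj_stage2_exact} together with a parallel upper bound. Under the inductive hypothesis $\Gamma_j^{(t)} = \tilde O(\sigma_0)$ and Lemma \ref{lemma:off_diagonal_gd}, the cross-patch term $\alpha(\Gamma_j^{(t)}\vee\tilde\Theta(\alpha))^{q-1}$ is negligible, so the update essentially reads
\begin{align*}
\Lambda_j^{(t+1)} \;=\; (1-\eta\lambda)\Lambda_j^{(t)} + \Theta(\eta/n)\sum_{i:y_i=j}|\ell_{j,i}^{(t)}|\,(\Lambda_j^{(t)})^{q-1} \;\pm\; o(\eta\lambda\Lambda_j^{(t)}).
\end{align*}
Then I would repeat the two-case analysis of Lemma \ref{lemma:GD_stage2}: if $\Lambda_j^{(t)}$ has grown past $\Theta(\log(1/\lambda))$ the softmax collapse forces $|\ell_{j,i}^{(t)}| = \poly(\lambda)$, the weight-decay term dominates, and $\Lambda_j^{(t+1)} < \Lambda_j^{(t)}$; if instead $\Lambda_j^{(t)}$ has dropped below $\Theta(1/m)$, then $|\ell_{j,i}^{(t)}|=\Theta(1)$, the feature-gradient term dominates, and $\Lambda_j^{(t+1)} > \Lambda_j^{(t)}$. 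The step-size condition $\eta \le O(\sigma_0)$ ensures that a single update cannot shoot from one threshold past the other.

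For the noise-memorization band, taking $\max_{r,i:y_i=j}$ in the second line of \eqref{eq:update_GD} yields
\begin{align*}
\Gamma_j^{(t+1)} \;\le\; (1-\eta\lambda)\Gamma_j^{(t)} + \tilde\Theta\!\big(\eta s\sigma_p^2/n\big)(\Gamma_j^{(t)})^{q-1} + \tilde O(\eta\alpha/n)\cdot(\Lambda_j^{(t)})^{q-1}.
\end{align*}
Under the parameter choices $s\sigma_p^2 = \tilde\Theta(1)$ and $\alpha = \tilde\Theta(\sigma_p)$ and the inductive hypothesis $\Gamma_j^{(t)} = \tilde O(\sigma_0)$, the tensor-power term is $\tilde O(\eta\sigma_0^{q-1}/n)$ and the cross term is $\tilde O(\eta\alpha/n)$; both are dwarfed by $\eta\lambda\cdot\Gamma_j^{(t)}$ whenever $\Gamma_j^{(t)} \gg \tilde\Theta(\sigma_0)$, so $\Gamma_j^{(t+1)}$ cannot leave the $\tilde O(\sigma_0)$ band. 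The same calculation shows that even starting from $\Gamma_j^{(t)} = \tilde\Theta(\sigma_0)$, the one-step increment is $o(\sigma_0)$, so the polylog factors in the band do not accumulate over $\poly(n)$ iterations.

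The main obstacle will be the coupling between $\Lambda_j^{(t)}$ and $\Gamma_j^{(t)}$ through the logits $\ell_{j,i}^{(t)}$. I plan to break it by first observing $F_j(\Wb^{(t)},\xb_i) = \tilde\Theta(\Lambda_j^{(t)}) \pm \tilde O(\sigma_0^q)$, so that $|\ell_{j,i}^{(t)}|$ is effectively a function of $\Lambda_j^{(t)}$ alone up to negligible corrections; this lets the $\Lambda_j$ step go through using only the qualitative bound $\Gamma_j^{(t)} = \tilde O(\sigma_0)$, and the resulting bound on $\Lambda_j$ then feeds the $\Gamma_j$ recursion cleanly. A second subtlety is to rule out any analog of the ``flipping'' phenomenon seen for Adam in Lemma \ref{lemma:signGD2}: here the feature-gradient and weight-decay contributions along $j\cdot\vb$ always point in opposite directions, so the sign of $\Lambda_j^{(t)}$ is preserved and no compensation by feature noise arises, which is precisely the asymmetry between GD and Adam that drives the generalization gap.
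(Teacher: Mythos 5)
Your treatment of $\Lambda_j^{(t)}$ is fine and matches the paper, which explicitly points back to the two-threshold argument of Lemma~\ref{lemma:GD_stage2}. The problem is the $\Gamma_j^{(t)}$ part. You try to close the induction by a one-step comparison: you claim that at $\Gamma_j^{(t)} \approx \tilde\Theta(\sigma_0)$ the decay term $\eta\lambda\Gamma_j^{(t)}$ dominates both the tensor-power increment $\tilde\Theta(\eta s\sigma_p^2/n)(\Gamma_j^{(t)})^{q-1}$ and the cross term $\tilde O(\eta\alpha/n)$. This fails because the regularization parameter has no useful lower bound: the theorem allows any $\lambda\in(0,\lambda_0)$, and indeed the preliminaries only impose the \emph{upper} bound $\lambda = o(\sigma_0^{q-2}\sigma_p/n)$. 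With $\alpha = \Theta(\sigma_p\polylog(n))$ that gives $\alpha/n = \omega(n\lambda\sigma_0/\sigma_0^{q-2}) = \omega(\lambda\sigma_0)$ for $q\ge 3$, so $\eta\lambda\Gamma_j^{(t)}$ is in fact \emph{smaller} than the cross term $\eta\alpha/n$ at the relevant scale, not larger. The same issue plagues the tensor-power term: weight decay dominance at $\Gamma_j\approx\sigma_0$ requires $\lambda\gtrsim s\sigma_p^2\sigma_0^{q-2}/n$, which need not hold. Your remark that ``the one-step increment is $o(\sigma_0)$, so the polylog factors... do not accumulate'' does not rescue this: over $T=\poly(n)/\eta$ steps an $o(\sigma_0)$ per-step increment can accumulate arbitrarily.

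The paper closes this gap with a \emph{cumulative} argument that you are missing, not a per-step one. From Hypothesis \eqref{eq:hypothesis1_stage3} one sums the lower bounds on $\Lambda_j^{(t+1)}$ over $j$ and iterates to get
\begin{align*}
\sum_{j}\Lambda_j^{(t)} \;\ge\; (1-\lambda\eta)^{t-T_{-1}}\sum_j\Lambda_j^{(T_{-1})} + \tilde\Theta\!\Big(\frac{\eta}{n}\Big)\sum_{\tau=0}^{t-T_{-1}-1}(1-\lambda\eta)^\tau\sum_i|\ell_{j,i}^{(t-1-\tau)}|.
\end{align*}
Since Hypothesis \eqref{eq:hypothesis3_stage3} caps $\sum_j\Lambda_j^{(t)}$ at $\tilde\Theta(1)$, the discounted accumulated logit mass $\tilde\Theta(\eta/n)\sum_\tau(1-\lambda\eta)^\tau\sum_i|\ell_{j,i}^{(t-1-\tau)}|$ is itself $\tilde\Theta(1)$. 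Feeding this bound into the $\Gamma_j$ recursion, whose increments carry the extra prefactor $s\sigma_p^2\sigma_0^{q-1}$ relative to the $\Lambda_j$ recursion, yields $\Gamma_j^{(t)}\le\Gamma_j^{(T_{-1})}+\tilde\Theta(s\sigma_p^2\sigma_0^{q-1})=\tilde\Theta(\sigma_0)$, with no dependence on the size of $\lambda$. In other words, the budget for noise memorization is controlled by the accumulated signal-learning budget, not by one-step weight decay. You should also note that the $-\alpha y_s$-driven cross term in \eqref{eq:update_GD}, once the sign $\sgn(y_s\ell_{j,s}^{(t)})=\sgn(j)$ is tracked carefully, actually pushes $\Gamma_j^{(t)}$ \emph{down} for $j=y_i$; treating it as an additive error of the size $\tilde O(\eta\alpha/n)$ is both unnecessary and too lossy.
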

\begin{proof}
We will prove the desired argument based on the following three induction hypothesis:
\begin{align}
\Lambda_j^{(t+1)} &\ge (1-\lambda\eta)\Lambda_j^{(t)} + \tilde\Theta\bigg(\frac{\eta}{n}\bigg)\sum_{i:y_i=j}|\ell_{j,i}^{(t)}|-\tilde \Theta(\alpha^q\eta)\cdot \frac{1}{n}\sum_{i=1}^n|\ell_{j,r}^{(t)}|, \label{eq:hypothesis1_stage3}\\
\Gamma_j^{(t)}&=\tilde O(\sigma_0),\label{eq:hypothesis2_stage3}\\
\Lambda_j^{(t)}& = \tilde\Theta(1).\label{eq:hypothesis3_stage3}
\end{align}
In terms of Hypothesis \eqref{eq:hypothesis1_stage3}, we can apply Hypothesis \eqref{eq:hypothesis2_stage3} and \eqref{eq:hypothesis3_stage3} to \eqref{eq:update_lambdaj_stage2_exact} and get that
\begin{align*}
\Lambda_j^{(t+1)}
&\ge (1-\eta\lambda) \Lambda_j^{(t)} + \Theta\bigg(\frac{\eta}{n}\bigg)\cdot\sum_{i:y_i=j}|\ell_{j,i}^{(t)}|\cdot \big(\Lambda_j^{(t)}\big)^{q-1}-\Theta(\alpha \eta)\cdot\big(\Gamma_j^{(t)}\vee\tilde\Theta(\alpha)\big)^{q-1}\cdot \frac{1}{n}\sum_{i=1}^n|\ell_{j,r}^{(t)}|\notag\\
&\ge (1-\lambda\eta)\Lambda_j^{(t)} + \tilde\Theta\bigg(\frac{\eta}{n}\bigg)\sum_{i:y_i=j}|\ell_{j,i}^{(t)}|-\tilde \Theta(\alpha^q\eta)\cdot \frac{1}{n}\sum_{i=1}^n|\ell_{j,r}^{(t)}|.
\end{align*}
where the last inequality we use the fact that $\alpha\ge \sigma_0$.
This verifies Hypothesis \eqref{eq:hypothesis1_stage3}. 

In order to verify Hypothesis \eqref{eq:hypothesis2_stage3}, we have the following according to \eqref{eq:hypothesis1_stage3},
\begin{align*}
\sum_{j\in\{-1,1\}}\Lambda_j^{(t+1)} &\ge (1-\lambda\eta) \sum_{j\in\{-1,1\}}\Lambda_j^{(t)} + \tilde\Theta\bigg(\frac{\eta}{n}\bigg)\sum_{i=1}^n |\ell_{j,i}^{(t)}|-\tilde \Theta(\alpha^q\eta)\cdot \frac{1}{n}\sum_{i=1}^n|\ell_{j,r}^{(t)}|\notag\\
&= (1-\lambda\eta) \sum_{j\in\{-1,1\}}\Lambda_j^{(t)} + \tilde\Theta\bigg(\frac{\eta}{n}\bigg)\sum_{i=1}^n |\ell_{j,i}^{(t)}|,
\end{align*}
where the last equality holds since $\alpha=o(1)$.
Recursively applying the above inequality from $T_{-1}$ to $t$ gives
\begin{align*}
\sum_{j\in\{-1,1\}}\Lambda_j^{(t)} \ge (1-\lambda\eta)^{t-T_{-1}}\sum_{j\in\{-1,1\}}\Lambda_j^{(T_{-1})} + \tilde\Theta\bigg(\frac{\eta}{n}\bigg)\cdot\sum_{\tau=0}^{t-T_{-1}-1}(1-\lambda\eta)^{\tau} \sum_{i=1}^n |\ell_{j,i}^{(t-1-\tau)}|.
\end{align*}
Then by Hypothesis \eqref{eq:hypothesis3_stage3} we have
\begin{align*}
\tilde\Theta\bigg(\frac{\eta}{n}\bigg)\cdot\sum_{\tau=0}^{t-T_{-1}-1}(1-\lambda\eta)^{\tau} \sum_{i=1}^n |\ell_{j,i}^{(t-1-\tau)}|\le \tilde\Theta(1).
\end{align*}
Now let us look at the rate of memorizing noises. By \eqref{eq:update_GD} and use the fact that $\alpha^2\le O(s\sigma_p^2/n)$, we have
\begin{align*}
\Gamma_{j}^{(t)}&\le (1-\eta\lambda)\Gamma_j^{(t-1)} +\tilde\Theta\bigg(\frac{\eta s\sigma_p^2}{n}\bigg)\cdot \sum_{i=1}|\ell_{j,i}|\cdot\big(\Gamma_j^{(t-1)}\big)^{q-1}\notag\\
&\le (1-\eta\lambda)\Gamma_j^{(t-1)} +\tilde\Theta\bigg(\frac{\eta s\sigma_p^2\sigma_0^{q-1}}{n}\bigg)\cdot \sum_{i=1}|\ell_{j,i}|\notag\\
&\le \Gamma_j^{(T_{-1})} + \tilde\Theta\bigg(\frac{\eta s\sigma_p^2\sigma_0^{q-1}}{n}\bigg)\cdot\sum_{\tau=0}^{t-T_{-1}-1}(1-\lambda\eta)^{\tau} \sum_{i=1}^n |\ell_{j,i}^{(t-1-\tau)}|\notag\\
&\le \tilde\Theta\big(\sigma_0 + s\sigma_p^2\sigma_0^{q-1}\big) \notag\\
&\le \tilde\Theta(\sigma_0),
\end{align*}
which verifies Hypothesis \eqref{eq:hypothesis2_stage3}.

Given Hypothesis \eqref{eq:hypothesis1_stage3} and \eqref{eq:hypothesis2_stage3}, the verification of \eqref{eq:hypothesis3_stage3} is straightforward by applying the same proof technique of Lemma \ref{lemma:GD_stage2} and thus we omit it here. 
\end{proof}

\begin{lemma}[Convergence Guarantee of GD]\label{lemma:convergence_GD_supp}
If the step size satisfies, then for any $t\ge T_{-1}$ it holds that
\begin{align*}
L(\Wb^{(t+1)})-L(\Wb^{(t)})\le -\frac{\eta}{2}\|\nabla L(\Wb^{(t)})\|_F^2.
\end{align*}
\end{lemma}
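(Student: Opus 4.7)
The plan is to derive the descent inequality by a second-order Taylor expansion of $L$ around $\Wb^{(t)}$, using local smoothness of the objective along the trajectory. The structure parallels Lemma \ref{lemma:convergence_gurantee_signGD} for Adam, but with the important simplification that the GD step is the exact negative gradient, so no sign-approximation error terms appear and the quadratic correction contributes only $O(\eta^2)\|\nabla L\|_F^2$, cleanly producing an $\ell_2$ bound instead of an $\ell_1$ one.

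First, I would use Lemmas \ref{lemma:convergence_GD_stage1}, \ref{lemma:GD_stage2}, and \ref{lemma:GD_stage3} to note that for every $t\ge T_{-1}$, every neuron $(j,r)$, and every training index $i$,
\begin{align*}
|\la\wb_{j,r}^{(t)},\vb\ra| = \tilde O(1), \qquad |\la\wb_{j,r}^{(t)},\bxi_i\ra| = \tilde O(1).
\end{align*}
Combined with $\|\vb\|_2 = 1$ and $\|\bxi_i\|_2 = \tilde O(s^{1/2}\sigma_p) = \tilde O(1)$, together with $\sigma''(z) = q(q-1)[z]_+^{q-2}$, this gives that the map $\wb \mapsto \sigma(\la\wb,y_i\vb\ra) + \sigma(\la\wb,\bxi_i\ra)$ has Hessian of operator norm $\tilde O(1)$ in any $o(1)$-neighborhood of $\wb_{j,r}^{(t)}$. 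Summing across the $2m = \tilde O(1)$ neurons and $n$ samples (then dividing by $n$), and adding the $\lambda \Ib$ contribution from weight decay, the full objective $L$ is $\beta$-smooth in an $o(1)$-neighborhood of $\Wb^{(t)}$ with $\beta = \tilde O(1)$.

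Second, I would verify that the GD step stays inside this neighborhood: since $\|\Wb^{(t+1)}-\Wb^{(t)}\|_F = \eta\|\nabla L(\Wb^{(t)})\|_F$ and $\|\nabla L(\Wb^{(t)})\|_F = \tilde O(1)$ by the same bounds, $\eta = o(1)$ suffices. Then the standard descent inequality
\begin{align*}
L(\Wb^{(t+1)}) &\le L(\Wb^{(t)}) + \la\nabla L(\Wb^{(t)}), \Wb^{(t+1)}-\Wb^{(t)}\ra + \frac{\beta}{2}\|\Wb^{(t+1)}-\Wb^{(t)}\|_F^2 \\
&= L(\Wb^{(t)}) - \eta\bigl(1 - \beta\eta/2\bigr)\|\nabla L(\Wb^{(t)})\|_F^2
\end{align*}
follows by integrating the second-order remainder, and choosing $\eta \le 1/\beta = 1/\tilde O(1)$ makes the bracketed factor at least $1/2$, yielding the claim. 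The main technical work is front-loaded into Lemmas \ref{lemma:convergence_GD_stage1}--\ref{lemma:GD_stage3}; the convergence lemma itself is a routine smoothness-plus-descent argument, and the only subtle point to check is that the smoothness neighborhood persists from one iterate to the next, which follows immediately from $\eta = o(1)$ together with the uniform gradient-norm bound.
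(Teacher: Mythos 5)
Your proof is correct and follows essentially the same route as the paper's: use the bounds $\la\wb_{j,r}^{(t)},\vb\ra = \tilde O(1)$ and $\la\wb_{j,r}^{(t)},\bxi_i\ra = \tilde O(1)$ from Lemma~\ref{lemma:GD_stage3} to establish local $\tilde O(1)$-smoothness of $L$ along the trajectory, then apply the standard descent lemma for gradient steps and absorb the quadratic term by taking $\eta$ small. The only cosmetic difference is that you bound the Hessian of $L$ directly in $\Wb$, whereas the paper factors the argument into two layers — $1$-smoothness of $L_i$ with respect to the logits $F_{\pm1}$, then a Taylor expansion of each $\sigma(\la\wb_{j,r},\cdot\ra)$ with second-order error $\tilde\Theta(\eta^2\|\nabla L(\Wb^{(t)})\|_F^2)$ — but these yield the same $-\eta(1-\tilde\Theta(\eta))\|\nabla L(\Wb^{(t)})\|_F^2$ descent bound.
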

\begin{proof}
The proof of this lemma is similar to that of Lemma \ref{lemma:convergence_gurantee_signGD}, which is basically relying the smoothness property of the loss function $L(\Wb)$ given certain constraints on the inner products $\la\wb_{j,r},\vb\ra$ and $\la\wb_{j,r},\bxi_i\ra$. 

Let $\Delta F_{j,i} = F_{j}(\Wb^{(t+1)},\xb_i)-F_{j}(\Wb^{(t)},\xb_i)$, we can get the following Taylor expansion on the loss function $L_i(\Wb^{(t+1)})$,
\begin{align}\label{eq:taylor_expansion_singleloss_GD}
L_i(\Wb^{(t+1)})-L_i(\Wb^{(t)}) \le \sum_{j}\frac{\partial L_i(\Wb^{(t)})}{\partial F_{j}(\Wb^{(t)},\xb_i) }\cdot \Delta F_{j,i} + \sum_{j}(\Delta F_{j,i})^2.
\end{align}
In particular, by Lemma \ref{lemma:GD_stage3}, we know that $\la\wb_{j,r}^{(t)},y_i\vb\ra\le \tilde\Theta(1)$  and $\la\wb_{j,r}^{(t)},\bxi_i\ra\le \tilde\Theta(\sigma_0)\le \tilde\Theta(1)$. Then similar to \eqref{eq:second_order_error_activation}, we can apply first-order Taylor expansion to $F_j(\Wb^{(t+1)},\xb_i)$, which requires to characterize the second-order error of the  Taylor expansions on $\sigma(\la\wb_{j,r}^{(t+1)},y_i\vb\ra)$ and $\sigma(\la\wb_{j,r}^{(t+1)},\bxi_i\ra)$,
\begin{align}\label{eq:second_order_error_activation_GD}
&\big|\sigma(\la\wb^{(t+1)}_{j,r},y_i\vb\ra)-\sigma(\la\wb^{(t)}_{j,r},y_i\vb\ra)-\big\la\nabla_{\wb_{j,r}}\sigma(\la\wb^{(t)}_{j,r},y_i\vb\ra),\wb_{j,r}^{(t+1)}-\wb_{j,r}^{(t)}\ra\big|\notag\\
&\le \tilde\Theta\big(\|\wb_{j,r}^{(t+1)}-\wb_{j,r}^{(t)}\|_2^2\big)= \tilde\Theta(\eta^2 \|\nabla_{\wb_{j,r}}L(\Wb^{(t)})\|_2^2),\notag\\
&\big|\sigma(\la\wb^{(t+1)}_{j,r},\bxi_i\ra)-\sigma(\la\wb^{(t)}_{j,r},\bxi_i\ra)-\big\la\nabla_{\wb_{j,r}}\sigma(\la\wb^{(t)}_{j,r},\bxi_i\ra),\wb_{j,r}^{(t+1)}-\wb_{j,r}^{(t)}\ra\big|\notag\\
&\le \tilde\Theta\big(\|\wb_{j,r}^{(t+1)}-\wb_{j,r}^{(t)}\|_2^2\big)= \tilde\Theta(\eta^2 \|\nabla_{\wb_{j,r}}L(\Wb^{(t)})\|_2^2).
\end{align}
Then combining the above bounds for every $r\in[m]$, we can get the following bound for $\Delta F_{j,i}$
\begin{align}\label{eq:second_order_err_F_GD}
\big|\Delta F_{j,i} - \la \nabla_{\Wb}F_j(\Wb^{(t)},\xb_i), \Wb^{(t+1)} -\Wb^{(t)}\ra\big| &\le \tilde\Theta\bigg(\eta^2\sum_{r\in[m]}\|\nabla_{\wb_{j,r}}L(\Wb^{(t)})\|_2^2\bigg)\notag\\
&=  \tilde\Theta\big(\eta^2\|\nabla L(\Wb^{(t)})\|_F^2\big).
\end{align}
Moreover, since $\la\wb_{j,r}^{(t)},y_i\vb\ra\le \tilde\Theta(1)$  and $\la\wb_{j,r}^{(t)},\bxi_i\ra\le \tilde\Theta(1)$ and $\sigma(\cdot)$ is convex, then we have 
\begin{align*}
|\sigma(\la\wb^{(t+1)}_{j,r},y_i\vb\ra)-\sigma(\la\wb^{(t)}_{j,r},y_i\vb\ra)|&\le \max\big\{ |\sigma'(\la\wb^{(t+1)}_{j,r},y_i\vb\ra)|,|\sigma'(\la\wb^{(t)}_{j,r},y_i\vb\ra)|\big\}\cdot |\la\vb, \wb_{j,r}^{(t+1)}-\wb_{j,r}^{(t)}\ra|\notag\\
&\le \tilde\Theta\big(\| \wb_{j,r}^{(t+1)}-\wb_{j,r}^{(t)}\|_2\big).
\end{align*}
Similarly we also have
\begin{align*}
|\sigma(\la\wb^{(t+1)}_{j,r},\bxi_i\ra)-\sigma(\la\wb^{(t)}_{j,r},\bxi_i\ra)|&\le \tilde\Theta\big(\| \wb_{j,r}^{(t+1)}-\wb_{j,r}^{(t)}\|_2\big).
\end{align*}
Combining the above inequalities for every $r\in[m]$, we have
\begin{align}\label{eq:bound_deltaF_GD}
\big|\Delta F_{j,i}\big|^2&\le \tilde\Theta\bigg(\bigg[\sum_{r\in[m]}\| \wb_{j,r}^{(t+1)}-\wb_{j,r}^{(t)}\|_2\bigg]^2\bigg)\le \tilde\Theta\big(m\eta^2\|\nabla L(\Wb^{(t)})\|_F^2\big) = \tilde\Theta\big(\eta^2\|\nabla L(\Wb^{(t)})\|_F^2\big).
\end{align}
Now we can plug \eqref{eq:second_order_err_F_GD} and \eqref{eq:bound_deltaF_GD} into \eqref{eq:taylor_expansion_singleloss_GD}, which gives
\begin{align}\label{eq:decrease_singleloss_GD}
L_i(\Wb^{(t+1)})-L_i(\Wb^{(t)}) &\le \sum_{j}\frac{\partial L_i(\Wb^{(t)})}{\partial F_{j}(\Wb^{(t)},\xb_i) }\cdot \Delta F_{j,i} + \sum_{j}(\Delta F_{j,i})^2\notag\\
& = \la\nabla L_i(\Wb^{(t)}), \Wb^{(t+1)}-\Wb^{(t)}\ra + \tilde\Theta(\eta^2 \|\nabla L(\Wb^{(t)})\|_F^2).
\end{align}
Taking sum over $i\in[n]$ and applying the smoothness property of the regularization function $\lambda\|\Wb\|_F^2$, we can get
\begin{align*}
L(\Wb^{(t+1)}) - L(\Wb^{(t)}) &= \frac{1}{n}\sum_{i=1}^n \big[L_i(\Wb^{(t+1)})-L_i(\Wb^{(t)})\big] + \lambda \big(\|\Wb^{(t+1)}\|_F^2 - \|\Wb^{(t)}\|_F^2\big)\notag\\
& \le \la\nabla L(\Wb^{(t)}), \Wb^{(t+1)}-\Wb^{(t)}\ra + \tilde\Theta(\eta^2 \|\nabla L(\Wb^{(t)})\|_F^2)\notag\\
& = - \big(\eta - \tilde \Theta(\eta^2)\big)\cdot \|\nabla L(\Wb^{(t)})\|_F^2\notag\\
&\le - \frac{\eta}{2}\|\nabla L(\Wb^{(t)})\|_F^2,
\end{align*}
where the last inequality is due to our choice of step size $\eta = o(1)$ so that gives $\eta - \tilde \Theta(\eta^2)\ge \eta/2$. This completes the proof.
\end{proof}

\begin{lemma}[Generalization Performance of GD]\label{lemma:property_converging_GD}
Let 
\begin{align*}
\Wb^*=\arg\min_{\{\Wb^{(1)},\dots,\Wb^{(T)}\}} \|\nabla L(\Wb^{(t)})\|_F.
\end{align*}
Then for all training data, we have 
\begin{align*}
\frac{1}{n}\sum_{i=1}^n \ind\big[F_{y_i}(\Wb^*,\xb_i)\le F_{-y_i}(\Wb^*,\xb_i)\big] = 0.
\end{align*}
Moreover, in terms of the test data $(\xb,y)\sim \cD$, we have
\begin{align*}
\PP_{(\xb,y)\sim \cD}\big[F_{y}(\Wb^*,\xb)\le F_{-y}(\Wb^*,\xb)\big] =o(1).
\end{align*}
\end{lemma}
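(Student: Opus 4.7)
The plan is to combine the iteration-wise structural bounds established in Lemmas~\ref{lemma:convergence_GD_stage1}, \ref{lemma:GD_stage2}, and \ref{lemma:GD_stage3} with the convergence guarantee of Lemma~\ref{lemma:convergence_GD_supp} to conclude that the selected $\Wb^*$ simultaneously (i) learns the true feature in the sense $\Lambda_j^* = \tilde\Theta(1)$, and (ii) suppresses noise memorization in the sense $\Gamma_j^* = \tilde O(\sigma_0)$. From these two properties I will directly estimate $F_y(\Wb^*,\xb) - F_{-y}(\Wb^*,\xb)$ on both training and fresh test samples. Specifically, telescoping Lemma~\ref{lemma:convergence_GD_supp} over $T = \poly(n)/\eta$ steps gives $\|\nabla L(\Wb^*)\|_F^2 \le 2 L(\Wb^{(0)})/(T\eta) = o(1)$, and ensures $\Wb^* = \Wb^{(t^*)}$ with $t^* > T_{-1}$, so Lemma~\ref{lemma:GD_stage3} applies.

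For the training error, I will show $F_{y_i}(\Wb^*,\xb_i) = \tilde\Theta(1)$ and $F_{-y_i}(\Wb^*,\xb_i) = o(1)$ separately. The first follows from choosing $r^* = \argmax_r \la \wb_{y_i,r}^*, y_i\vb\ra$, for which $\sigma(\la \wb_{y_i,r^*}^*, y_i\vb\ra) \ge \sigma(\Lambda_{y_i}^*) = \tilde\Theta(1)$. For the second, the key observation from \eqref{eq:update_GD} is that $\sgn(y_i\ell_{-y_i,i}^{(t)}) = -1$ throughout training, so the dominant gradient term drives $\la \wb_{-y_i,r}^{(t)}, y_i\vb\ra$ downward uniformly in $r$; combined with the initialization bound $\tilde O(\sigma_0)$ and the fact that the feature-noise term contributes at most $\tilde O(\alpha)$, I get $\sigma(\la \wb_{-y_i,r}^*, y_i\vb\ra) \le \tilde O(\max\{\sigma_0,\alpha\}^q) = o(1/m)$ for every $r$. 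Together with $\sigma(\la \wb_{-y_i,r}^*, \bxi_i\ra) \le \sigma(\Gamma_{-y_i}^*) = \tilde O(\sigma_0^q) = o(1/m)$, this gives $F_{-y_i}(\Wb^*,\xb_i) = o(1) \ll F_{y_i}(\Wb^*,\xb_i)$, so all $n$ training points are classified correctly.

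For test error, the same monotonicity argument yields $\sum_r \sigma(\la \wb_{y,r}^*, y\vb\ra) = \tilde\Theta(1)$ and $\sum_r \sigma(\la \wb_{-y,r}^*, y\vb\ra) = o(1)$. To handle the fresh noise patch $\bxi$, I will use the decomposition induced by the GD iteration, $\wb_{j,r}^{(t)} = (1-\eta\lambda)^t \wb_{j,r}^{(0)} + d_r^{(t)}\vb + \sum_{s=1}^n c_{s,r}^{(t)}\bxi_s$, and estimate $\la \wb_{j,r}^*, \bxi\ra$ term by term: (i) the initialization part is $\tilde O(\sqrt{s}\sigma_p\sigma_0) = o(1)$ by independence of $\bxi$ from $\wb_{j,r}^{(0)}$; (ii) the feature part is $-\alpha y\, d_r^{(t)} = \tilde O(\alpha) = o(1)$ since $|d_r^{(t)}| \le \tilde O(1)$; (iii) for the training-noise part, extending Lemma~\ref{lemma:nonoverlap_probability} to include the test sample shows that $\supp(\bxi)\setminus\{1\}$ is w.h.p.\ disjoint from $\cup_s \cB_s$, so $\la \bxi_s, \bxi\ra = \alpha^2 y_s y$, and the coefficients satisfy $c_{s,r}^{(t)} = \tilde O(\sigma_0/(s\sigma_p^2))$ because $\Gamma_j^* = \tilde O(\sigma_0)$, leading to a total contribution of $\tilde O(n\alpha^2\sigma_0/(s\sigma_p^2)) = o(1)$ under the parameter choices. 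Hence $F_y(\Wb^*,\xb) \ge \tilde\Theta(1) \gg o(1) \ge F_{-y}(\Wb^*,\xb)$ with probability $1-o(1)$ over the test sample.

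The main obstacle is the cross-term bound in step (iii) above: although $\Gamma_j^*$ controls aggregate noise memorization, individual coordinates of $\wb_{j,r}^*$ on $\cup_s \cB_s$ need not be of order $\sigma_0$, so a direct coordinate-wise bound on $\la \wb_{j,r}^*, \bxi\ra$ is not available. The resolution is the span decomposition used in Section~\ref{sec:proof_gd} together with the choice $s = \Theta(d^{1/2}/n^2)$ and $\sigma_p^2 = \Theta(1/(s\polylog n))$, which makes both the feature-noise interaction $\alpha^2 n$ and the Gaussian cross-correlations $o(1)$. A small additional care is needed because the weight-decay step $(1-\eta\lambda)\wb_{j,r}^{(t)}$ mixes all coordinates; however it only shrinks the off-span component of $\wb_{j,r}^{(0)}$ and so preserves the decomposition used above.
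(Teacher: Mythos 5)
Your proposal follows the same overall route as the paper --- invoke the Stage II structural lemmas (Lemma~\ref{lemma:GD_stage3}) to get $\Lambda_j^* = \tilde\Theta(1)$ and $\Gamma_j^* = \tilde O(\sigma_0)$ at the selected iterate, then translate these into the margin bounds $F_y \gg F_{-y}$ on both training and fresh test data --- but you are substantially more explicit than the paper about the part that actually requires an argument, namely bounding $\la \wb_{j,r}^*, \bxi\ra$ for a fresh $\bxi$. The paper simply asserts $[\la\wb_{y,r}^*,\bxi\ra]_+ \le \tilde O(\sigma_0)$ and $[\la\wb_{-y,r}^*,\bxi\ra]_+ \le \tilde O(\alpha)$; your span decomposition $\wb_{j,r}^{(t)} = (1-\eta\lambda)^t\wb_{j,r}^{(0)} + d_r^{(t)}\vb + \sum_s c_{s,r}^{(t)}\bxi_s$, together with the extension of Lemma~\ref{lemma:nonoverlap_probability} to include the test sample and the observation $\la\bxi_s,\bxi\ra=\alpha^2 y_s y$, is a clean way to actually derive such a bound, and the parameter bookkeeping checks out ($n\alpha^2\sigma_0/(s\sigma_p^2)=o(1)$, initialization cross-correlation $\sqrt{s}\sigma_p\sigma_0=o(1)$, feature-noise contribution $\tilde O(\alpha)=o(1)$). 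You also correctly flag the bootstrap nature of the $c_{s,r}^{(t)}=\tilde O(\sigma_0/(s\sigma_p^2))$ estimate, which needs to be run as an induction jointly with the cross-term bound.

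One concrete slip: in the training-error argument you bound $\sigma(\la\wb_{-y_i,r}^*,\bxi_i\ra)\le\sigma(\Gamma_{-y_i}^*)=\tilde O(\sigma_0^q)$, but $\Gamma_j^* = \max_{i':y_{i'}=j}\max_r\la\wb_{j,r}^*,\bxi_{i'}\ra$ only controls the \emph{diagonal} correlations where the neuron's class index matches the data label, whereas $\la\wb_{-y_i,r}^*,\bxi_i\ra$ is an \emph{off-diagonal} correlation ($y_i \neq -y_i$). The correct citation here is Lemma~\ref{lemma:off_diagonal_gd}, which gives $\la\wb_{-y_i,r}^{(t)},\bxi_i\ra\le\tilde\Theta(\alpha)$, not $\tilde O(\sigma_0)$. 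Since $\alpha=\omega(\sigma_0)$ this is a genuinely weaker bound, but $\tilde O(m\alpha^q)=o(1)$ still holds, so the conclusion survives --- only the stated justification is wrong.
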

\begin{proof}
By Lemma \ref{lemma:GD_stage3} it is clear that all training data can be correctly classified so that the training error is zero. Besides, for test data $(\xb, y)$ with $\xb = [y\vb^\top,\bxi^\top]^\top$, it is clear that with high probability $ \la\wb_{y,r}^{*} ,y\vb\ra=\tilde\Theta(1)$ and $[\la\wb_{y,r}^*, \bxi\ra]_+ \le \tilde O(\sigma_0) $, then
\begin{align*}
F_{y}(\Wb^{*}, \xb) &= \sum_{r=1}^m \big[\sigma(\la\wb_{y,r}^{*} ,y\vb\ra) + \sigma(\la\wb_{y,r}^*, \bxi\ra )\big]\ge \tilde\Omega(1).
\end{align*}
If $j= -y$, we have $ \la\wb_{-y,r}^{*} ,y\vb\ra\le 0$ and $[\wb_{-y,r}^*, \bxi\ra]_+ \le \tilde O(\alpha) $, which leads to
\begin{align*}
F_{-y}(\Wb^{*}, \xb) = \sum_{r=1}^m \big[\sigma(\la\wb_{-y,r}^{*} ,y\vb \ra) + \sigma(\la\wb_{-y,r}^{*}, \bxi\ra )\big]\le \tilde O(m\alpha^q) = \tilde O(\alpha^q)=o(1).
\end{align*}
This implies that GD can also achieve nearly zero test error.
This completes the proof.
\end{proof}

\section{Proof of Theorem \ref{thm:convex}: Convex Case}
\begin{theorem}[Convex setting, restated]\label{thm:convex_supp}
Assume the model is overparameterized. Then for any convex and smooth training objective with positive regularization parameter $\lambda$, suppose we run \textbf{Adam} and \textbf{gradient descent} for $T = \frac{\poly(n)}{\eta}$ iterations, then with probability at least $1-n^{-1}$, the obtained parameters $\Wb_{\mathrm{Adam}}^{*}$ and $\Wb_{\mathrm{GD}}^{*}$ satisfy that $\|\nabla L(\Wb_{\mathrm{Adam}}^{*})\|_1\le \frac{1}{T\eta}$ and $\|\nabla L(\Wb_{\mathrm{Adam}}^{*})\|_2^2\le \frac{1}{T\eta}$ respectively. Moreover, it holds that:
% , we have the following guarantees on the training and test errors for the model trained by  Adam and Gradient descent:
% \begin{enumerate}[leftmargin=*]
%     \item  
    \begin{itemize}
        \item Training errors are both zero: 
        \begin{align*}
        \frac{1}{n}\sum_{i=1}^n \ind\big[\sgn\big(F(\Wb_{\mathrm{Adam}}^{*},\xb_i)\big)\neq y_i\big] =  \frac{1}{n}\sum_{i=1}^n \ind\big[\sgn\big(F(\Wb_{\mathrm{GD}}^{*},\xb_i)\big)\neq y_i\big]=0.
        \end{align*}
        \item Test errors are nearly the same: 
        \begin{align*}
        \PP_{(\xb,y)\sim \cD}\big[\sgn\big(F(\Wb_{\mathrm{Adam}}^{*},\xb_i)\big)\neq y\big] = \PP_{(\xb,y)\sim \cD}\big[\sgn\big(F(\Wb_{\mathrm{GD}}^{*},\xb)\big)\neq y\big]\pm o(1).
        \end{align*}
    \end{itemize}
% \end{enumerate}

\end{theorem}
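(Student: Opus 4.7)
The plan is to leverage the fact that with $\lambda > 0$ and each $L_i$ convex, the regularized objective $L(\Wb) = n^{-1}\sum_i L_i(\Wb) + \frac{\lambda}{2}\|\Wb\|_F^2$ is $\lambda$-strongly convex. Hence $L$ admits a unique global minimizer $\Wb^\star$, and the textbook strong convexity inequality gives $\|\Wb - \Wb^\star\|_F \le \|\nabla L(\Wb)\|_F/\lambda$ for every $\Wb$. The whole proof is driven by this observation: both algorithms collapse onto the same canonical point $\Wb^\star$, so they must classify every input almost identically.

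First I would establish the two convergence statements. For GD, smoothness of $L$ yields the standard descent inequality $L(\Wb^{(t+1)}) - L(\Wb^{(t)}) \le -\tfrac{\eta}{2}\|\nabla L(\Wb^{(t)})\|_F^2$ for $\eta$ less than the inverse smoothness; telescoping over $T = \poly(n)/\eta$ iterations produces a point with $\|\nabla L(\Wb_{\mathrm{GD}}^\star)\|_F^2 \le 1/(T\eta)$. For Adam I would reuse the Lemma~\ref{lemma:closeness} / Lemma~\ref{lemma:convergence_gurantee_signGD} machinery: the closeness of Adam's update to signGD on coordinates with non-negligible gradient yields the analogous one-step bound $L(\Wb^{(t+1)}) - L(\Wb^{(t)}) \le -\eta\|\nabla L(\Wb^{(t)})\|_1 + \tilde\Theta(\eta^2 d)$, and telescoping with $T = \poly(n)/\eta$ gives $\|\nabla L(\Wb_{\mathrm{Adam}}^\star)\|_1 \le 1/(T\eta)$. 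Note these arguments do not use convexity, so the same technical lemmas from the nonconvex analysis transfer directly.

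Next, strong convexity converts these two small-gradient guarantees into small distances to $\Wb^\star$: one gets $\|\Wb_{\mathrm{GD}}^\star - \Wb^\star\|_F \le (\lambda\sqrt{T\eta})^{-1}$ and $\|\Wb_{\mathrm{Adam}}^\star - \Wb^\star\|_F \le \|\nabla L(\Wb_{\mathrm{Adam}}^\star)\|_1/\lambda \le 1/(\lambda T\eta)$. Taking $T$ a sufficiently large polynomial in $n$ drives both distances below any prescribed $o(1)$ tolerance, so by the triangle inequality $\|\Wb_{\mathrm{Adam}}^\star - \Wb_{\mathrm{GD}}^\star\|_F = o(1)$. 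Because $F(\Wb,\xb)$ is smooth and therefore Lipschitz in $\Wb$ on the bounded region containing both iterates, the scores $F(\Wb_{\mathrm{Adam}}^\star,\xb)$ and $F(\Wb_{\mathrm{GD}}^\star,\xb)$ differ by $o(1)$ at every input $\xb$. Under the overparameterization assumption $\Wb^\star$ already separates all $n$ training points with strictly positive margin, so both nearby classifiers also do, yielding zero training error for each. For test error, the event $\{|F(\Wb^\star,\xb)| = O(o(1))\}$ has $\cD$-probability $o(1)$ by continuity of $\cD$ (an anti-concentration of the margin), and off this event the two classifiers agree, giving the claimed $o(1)$ gap.

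The main obstacle is the Adam convergence step: unlike GD, Adam does not obviously admit a clean descent lemma because its preconditioner $\sqrt{\vb^{(t)}}$ mixes historical gradients. I would handle this via the Lemma~\ref{lemma:closeness}-style decomposition, showing that on any iteration with a coordinate of non-negligible magnitude Adam behaves like signGD up to constant factors, so the Taylor expansion with second-order remainder $\tilde\Theta(\eta^2 d)$ still yields a one-step decrease. A secondary subtlety is keeping the iterates in a bounded set so that $F(\cdot,\xb)$ has $O(1)$ Lipschitz constant; this is provided for free by weight decay, which forces $\|\Wb^{(t)}\|_F = O(\lambda^{-1/2})$ along any trajectory that does not increase the objective beyond its initial value.
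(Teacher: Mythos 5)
Your proposal is correct and follows essentially the same route as the paper: both invoke the descent lemmas (smoothness only, no convexity needed) to obtain small-gradient points, then use $\lambda$-strong convexity of the regularized objective to conclude there is a single stationary point to which both iterates must be close, from which the matching train/test errors follow. You actually fill in several steps the paper leaves implicit — the quantitative conversion $\|\Wb-\Wb^\star\|_F\le\|\nabla L(\Wb)\|_F/\lambda$, the Lipschitz-in-$\Wb$ argument to transfer parameter proximity to score proximity, and the margin/anti-concentration reasoning needed to turn $o(1)$ score differences into $o(1)$ error differences — so your write-up is somewhat more complete than the paper's own sketch.
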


\begin{proof}
The proof is straightforward by applying the same proof technique used for Lemmas \ref{lemma:convergence_gurantee_signGD} and \ref{lemma:convergence_GD_supp}, where we only need to use the smoothness property of the loss function. Then it is clear that both Adam and GD can provably find a point with sufficiently small gradient. Note that the training objective becomes strongly convex when adding weight decay regularization, implying that the entire training objective only has one stationary point, i.e., point with sufficiently small gradient. This further imply that the points found by
Adam and GD must be exactly same and thus GD and Adam must have nearly same training and test performance. 

Besides, note that the problem is also sufficiently overparameterized, thus with proper regularization (feasibly small), we can still guarantee zero training errors. \end{proof}

\bibliographystyle{ims}
\bibliography{refs}

\end{document}